\newtheorem{theorem}{Theorem}
\newtheorem{lemma}[theorem]{Lemma}
\newtheorem{corollary}[theorem]{Corollary}
\newtheorem{proposition}[theorem]{Proposition}
\newtheorem{example}{Example}
\newtheorem{remark}[theorem]{Remark}
\theoremstyle{definition}
\newtheorem{definition}{Definition}
\begin{document}

\setlength{\parskip}{2mm}
\setlength{\parindent}{0pt}


\newcommand{\deq}{\stackrel{\scriptscriptstyle\triangle}{=}}

\newcommand{\mbb}[1]{\mathbb{#1}}
\newcommand{\mbf}[1]{\mathbf{#1}}
\newcommand{\mc}[1]{\mathcal{#1}}
\newcommand{\mrm}[1]{\mathrm{#1}}
\newcommand{\trm}[1]{\textrm{#1}}

\newcommand{\sign}{\mrm{sign}}
\newcommand{\argmin}[1]{\underset{#1}{\mrm{argmin}} \ }
\newcommand{\argmax}[1]{\underset{#1}{\mrm{argmax}} \ }
\newcommand{\reals}{\mathbb{R}}
\newcommand{\E}[1]{\mathbb{E}\left[ #1 \right]} 
\newcommand{\Ebr}[1]{\mathbb{E}\left\{ #1 \right\}} 
\newcommand{\En}{\mathbb{E}}  
\newcommand{\Ebar}{\Hat{\Hat{\mathbb{E}}}}  
\newcommand{\Esbar}[2]{\Hat{\Hat{\mathbb{E}}}_{#1}\left[ #2 \right]} 
\newcommand{\Es}[2]{\mathbb{E}_{#1}\left[ #2 \right]} 
\newcommand{\Ps}[2]{\mathbb{P}_{#1}\left[ #2 \right]}
\newcommand{\conv}{\operatorname{conv}}
\newcommand{\inner}[1]{\left\langle #1 \right\rangle}
\newcommand{\lv}{\left\|}
\newcommand{\rv}{\right\|}
\newcommand{\Phifunc}[1]{\Phi\left(#1\right)}
\newcommand{\ind}[1]{{\bf 1}\left\{#1\right\}}
\newcommand{\tr}{\ensuremath{{\scriptscriptstyle\mathsf{T}}}}
\newcommand{\eqdist}{\stackrel{\text{d}}{=}}
\newcommand{\alphT}{\widehat{\alpha}(T)}
\newcommand{\PD}{\mathcal P}
\newcommand{\QD}{\mathcal Q}
\newcommand{\PDA}{{\mathfrak{P}}} 
\newcommand{\jp}{\ensuremath{\mathbf{p}}}
\newcommand{\jq}{\ensuremath{\boldsymbol \pi}}
\newcommand{\jtau}{\ensuremath{\boldsymbol \tau}}
\newcommand{\rh}{\boldsymbol{\rho}}

\newcommand{\Eunderone}[1]{\underset{#1}{\En}}
\newcommand{\Eunder}[2]{\underset{\underset{#1}{#2}}{\En}}

\newcommand\s{\mathbf{s}}
\newcommand\w{\mathbf{w}}
\newcommand\x{\mathbf{x}}
\newcommand\y{\mathbf{y}}
\newcommand\z{\mathbf{z}}

\renewcommand\v{\mathbf{v}}

\newcommand\cD{\mathcal{D}}
\newcommand\X{\mathcal{X}}
\newcommand\Y{\mathcal{Y}}
\newcommand\Z{\mathcal{Z}}
\newcommand\F{\mathcal{F}}
\newcommand\G{\mathcal{G}}
\newcommand\cH{\mathcal{H}}
\newcommand\N{\mathcal{N}}
\newcommand\M{\mathcal{M}}
\newcommand\W{\mathcal{W}}
\newcommand\Nhat{\mathcal{\widehat{N}}}

\newcommand\ldim{\mathrm{Ldim}}
\newcommand\fat{\mathrm{fat}}
\newcommand\Img{\mbox{Img}}
\newcommand\sparam{\sigma} 
\newcommand\Psimax{\ensuremath{\Psi_{\mathrm{max}}}}

\newcommand\Rad{\mathfrak{R}}
\newcommand\Val{\mathcal{V}}
\newcommand\Valdet{\mathcal{V}^{\mathrm{det}}}
\newcommand\Dudley{\mathfrak{D}}
\newcommand\Reg{\mbf{R}}
\newcommand\D{\mbf{D}}
\renewcommand\P{\mbf{P}}

\newcommand\Xcvx{\X_\mathrm{cvx}}
\newcommand\Xlin{\X_\mathrm{lin}}


\title{Online Learning: Stochastic and Constrained Adversaries}
\author{
Alexander Rakhlin \\
Department of Statistics \\
University of Pennsylvania
\and 
Karthik Sridharan \\
TTIC\\
Chicago, IL
\and 
Ambuj Tewari\\
Computer Science Department\\
University of Texas at Austin
}

\maketitle

\begin{abstract}
	Learning theory has largely focused on two main learning scenarios. The first is the classical statistical setting where instances are drawn i.i.d. from a fixed distribution and the second scenario is the online learning, completely adversarial scenario where adversary at every time step picks the worst instance to provide the learner with. It can be argued that in the real world neither of these assumptions are reasonable. It is therefore important to study problems with a range of assumptions on data. Unfortunately, theoretical results in this area are scarce, possibly due to absence of general tools for analysis. Focusing on the regret formulation, we define the minimax value of a game where the adversary is restricted in his moves. The framework captures stochastic and non-stochastic assumptions on data. Building on the sequential symmetrization approach, we define a notion of distribution-dependent Rademacher complexity for the spectrum of problems ranging from i.i.d. to worst-case. The bounds let us immediately deduce variation-type bounds. We then consider the i.i.d. adversary and show equivalence of online and batch learnability. In the supervised setting, we consider various hybrid assumptions on the way that $x$ and $y$ variables are chosen. Finally, we consider smoothed learning problems and show that half-spaces are online learnable in the smoothed model. In fact, exponentially small noise added to adversary's decisions turns this problem with infinite Littlestone's dimension into a learnable problem.
\end{abstract}

\section{Introduction}
\label{sec:intro}

We continue the line of work on the minimax analysis of online learning, initiated in \cite{AbeAgaBarRak09,RakSriTew10a,RakSriTew10b}. In these papers, an array of tools has been developed to study the minimax value of diverse sequential problems under the \emph{worst-case} assumption on Nature. In \cite{RakSriTew10a}, many analogues of the classical notions from statistical learning theory have been developed, and these have been extended in \cite{RakSriTew10b} for performance measures well beyond the additive regret. The process of \emph{sequential symmetrization} emerged as a key technique for dealing with complicated nested minimax expressions. In the worst-case model, the developed tools appear to give a unified treatment to such sequential problems as regret minimization, calibration of forecasters, Blackwell's approachability, Phi-regret, and more. 

Learning theory has been so far focused predominantly on the i.i.d. and the worst-case learning scenarios. Much less is known about learnability in-between these two extremes. In the present paper, we make progress towards filling this gap. Instead of examining various performance measures, as in \cite{RakSriTew10b}, we focus on external regret and make assumptions on the behavior of Nature. By restricting Nature to play i.i.d. sequences, the results boil down to the classical notions of statistical learning in the supervised learning scenario. By not placing any restrictions on Nature, we recover the worst-case results of \cite{RakSriTew10a}. Between these two endpoints of the spectrum, particular assumptions on the adversary yield interesting bounds on the minimax value of the associated problem. 

By inertia, we continue to use the name ``online learning'' to describe the sequential interaction between the player (learner) and Nature (adversary). We realize that the name can be misleading for a number of reasons. First, the techniques developed in \cite{RakSriTew10a,RakSriTew10b} apply far beyond the problems that would traditionally be called ``learning''. Second, in this paper we deal with non-worst-case adversaries, while the word ``online'' often (though, not always) refers to worst-case. Still, we decided to keep the misnomer ``online learning'' whenever the problem is sequential. 

Adapting the game-theoretic language, we will think of the learner and the adversary as the two players of a zero-sum repeated game. Adversary's moves will be associated with ``data'', while the moves of the learner -- with a function or a parameter. This point of view is not new: game-theoretic minimax analysis has been at the heart of statistical decision theory for more than half a century (see \cite{Berger85}). In fact, there is a well-developed theory of minimax estimation when restrictions are put on either the choice of the adversary or the allowed estimators by the player. We are not aware of a similar theory for  sequential problems with non-i.i.d. data.

In particular, minimax analysis is central to nonparametric estimation, where one aims to prove optimal rates of convergence of the proposed estimator. Lower bounds are proved by exhibiting a ``bad enough'' distribution of the data that can be chosen by the adversary. The form of the minimax value is often
\begin{align}
	\label{eq:nonparametric}
	\inf_{\hat{f}}\sup_{f\in\F} \En\|\hat{f}-f\|^2
\end{align}
where the infimum is over all estimators and the supremum is over all functions $f$ from some class $\F$. It is often assumed that $Y_t=f(X_t)+\epsilon_t$, with $\epsilon_t$ being zero-mean noise. An estimator can be thought of as a strategy, mapping the data $\{(X_t,Y_t)\}_{t=1}^T$ to the space of functions on $\X$. This  description is, of course, only a rough sketch that does not capture the vast array of problems considered in nonparametric estimation.

In statistical learning theory, the data are i.i.d. from an unknown distribution $P_{X\times Y}$ and the associated minimax problem in the supervised setting with square loss is
\begin{align} 
	\label{eq:slt}
	\Val^{\text{batch, sup}}_T = \inf_{\hat{f}}\sup_{P_{X\times Y}} \left\{ \En (Y-\hat{f}(X))^2 - \inf_{f\in\F} \En (Y-f(X))^2 \right\}
\end{align}
where the infimum is over all estimators (or learning algorithms) and the supremum is over all distributions. Unlike nonparametric regression which makes an assumption on the ``regression function'' $f\in\F$, statistical learning theory often aims at distribution-free results. Because of this, the goal is more modest: to predict as well as the best function in $\F$ rather than recover the true model. In particular, \eqref{eq:slt} sidesteps the issue of approximation error (model misspecification).

What is known about the asymptotic behavior of \eqref{eq:slt}? The well-developed statistical learning theory tells us that \eqref{eq:slt} converges to zero if and only if the combinatorial dimensions of $\F$ (that is, the VC dimension for binary-valued, or scale-sensitive for real-valued functions) are finite. The convergence is intimately related to the uniform Glivenko-Cantelli property. If indeed the value in \eqref{eq:slt} converges to zero, an algorithm that achieves this is Empirical Risk Minimization. For unsupervised learning problems, however, ERM does not necessarily drive the quantity $\En \hat{f}(X) -\inf_{f\in\F}\En f(X)$ to zero.

The formulation \eqref{eq:slt} no longer makes sense if the data generating process is non-stationary. Consider the opposite from i.i.d.  end of the spectrum: the data are chosen in a worst-case manner. First, consider an \emph{oblivious} adversary who fixes the individual sequence $x_1,\ldots,x_T$ ahead of the game and reveals it one-by-one. A frequently studied notion of performance is {\em regret}, and the minimax value can be written as

\begin{align} 
	\label{eq:minimax_regret_for_oblivious}
	\Val_T^{\text{oblivious}} = \inf_{\{\hat{f}_t\}_{t=1}^T}\sup_{(x_1,\ldots,x_T)} \En_{f_1,\ldots,f_T}\left[ \frac{1}{T}\sum_{t=1}^T f_t(x_t) - \inf_{f\in \F}\frac{1}{T}\sum_{t=1}^T f(x_t) \right]
\end{align}
where the randomized strategy for round $t$ is $\hat{f}_t:\X^{t-1}\mapsto \QD$, with $\QD$ being the set of all distributions on $\F$. That is, the player furnishes his best randomized strategy for each round, and the adversary picks the worst sequence.

A non-oblivious (\emph{adaptive}) adversary is, of course, more interesting. The protocol for the online interaction is the following: on round $t$ the player chooses a distribution $q_t$ on $\F$, the adversary chooses the next move $x_t\in\X$, the player draws $f_t$ from $q_t$, and the game proceeds to the next round. All the moves are observed by both players. Instead of writing the value in terms of strategies, we can write it in an extended form as

\begin{align}
	\label{eq:minimax_regret_for_nonoblivious}
	\Val_T = \inf_{q_1\in \QD}\sup_{x_1\in\X} \Eunderone{f_1\sim q_1} \cdots \inf_{q_T\in \QD}\sup_{x_T\in\X} \Eunderone{f_T\sim q_T} \left[ \frac{1}{T}\sum_{t=1}^T f_t(x_t) - \inf_{f\in \F}\frac{1}{T}\sum_{t=1}^T f(x_t) \right]
\end{align}
This is precisely the quantity considered in \cite{RakSriTew10a}. The minimax value for notions other than regret has been studied in \cite{RakSriTew10b}. In this paper, we are interested in restricting the ways in which the sequences $(x_1,\ldots,x_T)$ are produced. These restrictions can be imposed through a smaller set of mixed strategies that is available to the adversary at each round, or as a non-stochastic constraint at each round. The formulation we propose captures both types of assumptions. 

The main contribution of this paper is the development of tools for the analysis of online scenarios where the adversary's moves are restricted in various ways. Further, we consider a number of interesting scenarios (such as smoothed learning) which can be captured by our framework. The present paper only scratches the surface of what is possible with sequential minimax analysis. Many questions are to be answered: For instance, one can ask whether a certain adversary is more powerful than another adversary by studying the value of the associated game. 

The paper is organized as follows. In Section~\ref{sec:stochastic} we define the value of the game and appeal to minimax duality. Distribution-dependent sequential Rademacher complexity is defined in Section~\ref{sec:rademacher} and can be seen to generalize the classical notion as well as the worst-case notion from \cite{RakSriTew10a}. This section contains the main symmetrization result which relies on a careful consideration of original and tangent sequences. Section~\ref{sec:structural} is devoted to analysis of the distribution-dependent Rademacher complexity. In Section~\ref{sec:constraints} we consider non-stochastic constraints on the behavior of the adversary. From these results, variation-type  results are seamlessly deduced. Section~\ref{sec:iid} is devoted to the i.i.d. adversary. We show equivalence between batch and online learnability. Hybrid adversarial-stochastic supervised learning is considered in Section~\ref{sec:supervised}. We show that it is the way in which the $x$ variable is chosen that governs the complexity of the problem, irrespective of the way the $y$ variable is picked. In Section~\ref{sec:smoothed} we introduce the notion of \emph{smoothed analysis} in the online learning scenario and show that a simple problem with infinite Littlestone's dimension becomes learnable once a small amount of noise is added to adversary's moves. Throughout the paper, we use the notation introduced in \cite{RakSriTew10a,RakSriTew10b}, and, in particular, we extensively use the ``tree'' notation.

\section{Value of the Game}
\label{sec:stochastic}

Consider sets $\F$ and $\X$, where $\F$ is a closed subset of a complete separable metric space. Let $\QD$ be the set of probability distributions on $\F$ and assume that $\QD$ is weakly compact. We consider randomized learners who predict a distribution $q_t\in\QD$ on every round. 

Let $\PD$ be the set of probability distributions on $\X$. We would like to capture the fact that sequences $(x_1,\ldots,x_T)$ cannot be arbitrary. This is achieved by defining restrictions on the adversary, that is, subsets of ``allowed'' distributions for each round. These restrictions limit the scope of available mixed strategies for the adversary. 

\begin{definition}
	A {\em restriction} $\PD_{1:T}$ on the adversary is a sequence $\PD_1,\ldots,\PD_T$ of mappings $\PD_t: \X^{t-1}\mapsto 2^\PD$ such that $\PD_t(x_{1:t-1})$ is a \emph{convex} subset of $\PD$ for any $x_{1:t-1}\in\X^{t-1}$. 
\end{definition}
Note that the restrictions depend on the past moves of the adversary, but not on those of the player. We will write $\PD_t$ instead of $\PD_t(x_{1:t-1})$ when $x_{1:t-1}$ is clearly defined.

Using the notion of restrictions, we can give names to several types of adversaries that we will study in this paper.
\begin{itemize}
	\item A \emph{worst-case adversary} is defined by vacuous restrictions $\PD_t(x_{1:t-1}) = \PD$. That is, any mixed strategy is available to the adversary, including any deterministic point distributions.
	\item A \emph{constrained adversary} is defined by $\PD_t(x_{1:x_{t-1}})$ being the set of all  distributions supported on the set $\{x \in \X : C_t(x_1,\ldots,x_{t-1},x) = 1 \}$ for some deterministic binary-valued constraint $C_t$. The deterministic constraint can, for instance, ensure that the length of the path determined by the moves $x_1,\ldots,x_t$ stays below the allowed budget.
	\item A \emph{smoothed adversary} picks the worst-case sequence which gets corrupted by an i.i.d. noise. Equivalently, we can view this as restrictions on the adversary who chooses the ``center'' (or a parameter) of the noise distribution. For a given family $\G$ of noise distributions (e.g. zero-mean Gaussian noise), the restrictions are obtained by all possible shifts $\PD_t = \{g(x-c_t): g\in\G, c_t\in\X \}$. 
	\item A \emph{hybrid adversary} in the supervised learning game picks the worst-case label $y_t$, but is forced to draw the $x_t$-variable from a fixed distribution \cite{LazMun09}. 
	\item Finally, an \emph{i.i.d. adversary} is defined by a time-invariant restriction $\PD_t(x_{1:t-1}) = \{p\}$ for every $t$ and some $p\in\PD$. 
\end{itemize}

\noindent For the given restrictions $\PD_{1:T}$, we define the value of the game as
\begin{align}  
	\label{eq:def_val_game}
	\Val_T(\PD_{1:T})  ~\deq~ \inf_{q_1\in \QD}\sup_{p_1\in\PD_1} ~\Eunderone{f_1,x_1} ~~ \inf_{q_2\in \QD}\sup_{p_2\in\PD_2} ~\Eunderone{f_2,x_2}\cdots \inf_{q_T\in \QD}\sup_{p_T\in\PD_T} ~\Eunderone{f_T,x_T} \left[ \sum_{t=1}^T f_t(x_t) - \inf_{f\in \F}\sum_{t=1}^T f(x_t) \right]
\end{align}
where $f_t$ has distribution $q_t$ and $x_t$ has distribution $p_t$. As in \cite{RakSriTew10a}, the adversary is {\em adaptive}, that is, chooses $p_t$ based on the history of moves $f_{1:t-1}$ and $x_{1:t-1}$. 

At this point, the only difference from the setup of \cite{RakSriTew10a} is in the restrictions $\PD_t$ on the adversary. Because these restrictions might not allow point distributions, the suprema over $p_t$'s in \eqref{eq:def_val_game} cannot be equivalently written as the suprema over $x_t$'s. 

The value of the game can also be written in terms of strategies $\jq = \{\pi_t\}_{t=1}^T$ and $\jtau = \{\tau_t\}_{t=1}^T$ for the player and the adversary, respectively, where $\pi_t:(\F\times\X\times\PD)^{t-1}\to \QD$ and $\tau_t:(\F\times\X\times\QD)^{t-1} \to \PD$. Crucially, the strategies also depend on the mappings $\PD_{1:T}$. The value of the game can equivalently be written in the strategic form as
\begin{align}  
	\label{eq:def_val_game_strategic}
	\Val_T(\PD_{1:T})  = \inf_{\jq} \sup_{\jtau} \Eunder{x_1\sim \tau_1}{f_1\sim \pi_1}\ldots \Eunder{x_T\sim \tau_T}{f_T \sim \pi_T}  \left[ \sum_{t=1}^T f_t(x_t) - \inf_{f\in \F}\sum_{t=1}^T f(x_t) \right]
\end{align}

A word about the notation. In \cite{RakSriTew10a}, the value of the game is written as $\Val_T(\F)$, signifying that the main object of study is $\F$. In \cite{RakSriTew10b}, it is written as $\Val_T(\ell,\Phi_T)$ since the focus is on the complexity of the set of transformations $\Phi_T$ and the payoff mapping $\ell$. In the present paper, the main focus is indeed on the restrictions on the adversary, justifying our choice $\Val_T(\PD_{1:T})$ for the notation.

The first step is to apply the minimax theorem. To this end, we verify the necessary conditions. Our assumption that $\F$ is a closed subset of a complete separable metric space implies that $\QD$ is tight and Prokhorov's theorem states that compactness of $\QD$ under weak topology is equivalent to tightness \cite{VanDerVaartWe96}. Compactness under weak topology allows us to proceed as in \cite{RakSriTew10a}. Additionally, we require that the restriction sets are compact and convex.

\begin{theorem}\label{thm:minimax}
	Let $\F$ and $\X$ be the sets of moves for the two players, satisfying the necessary conditions for the minimax theorem to hold. Let $\PD_{1:T}$ be the restrictions, and assume that for any $x_{1:t-1}$, $\PD_t(x_{1:t-1})$ satisfies the necessary conditions for the minimax theorem to hold. Then
\begin{align}
	\Val_T(\PD_{1:T})&=\sup_{p_1\in\PD_1} \En_{x_1\sim p_1}\ldots \sup_{p_T\in\PD_T} \En_{x_T\sim p_T} \left[
	  \sum_{t=1}^T \inf_{f_t \in \F}
	  	\Es{x_t \sim p_t}{f_t(x_t)} - \inf_{f\in\F} \sum_{t=1}^T f(x_t)
	\right]. \label{eq:value_equality}
\end{align}	
\end{theorem}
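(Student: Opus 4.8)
The plan is to peel off the nested $\inf_{q_t}\sup_{p_t}$ operators one round at a time, from the outside in, applying the minimax theorem at each round to swap $\inf_{q_t}$ and $\sup_{p_t}$, and then using linearity to collapse the expectation over $f_t\sim q_t$ into a pointwise infimum over $f_t\in\F$. Concretely, I would proceed by backward induction on the round index, showing that for each $t$ from $T$ down to $1$ the tail of the expression starting at round $t$ equals
\begin{align*}
  \sup_{p_t\in\PD_t}\En_{x_t\sim p_t}\Bigl[\,\inf_{f_t\in\F}\En_{x_t'\sim p_t}f_t(x_t') + (\text{tail from round }t{+}1)\,\Bigr],
\end{align*}
where the tail from round $t+1$ has already been rewritten in the claimed form. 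The base case is the innermost round, and telescoping the per-round identities yields \eqref{eq:value_equality}.

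The key step at a generic round is the following. Fix the history $x_{1:t-1}, f_{1:t-1}$. The quantity $\inf_{q_t\in\QD}\sup_{p_t\in\PD_t}\En_{f_t\sim q_t}\En_{x_t\sim p_t}[\,\Lambda(f_t,x_t)\,]$, where $\Lambda$ absorbs the round-$t$ payoff term $f_t(x_t)$ together with the (already transformed, hence $f_t$-free) continuation value, is bilinear in the pair of distributions: it is linear in $q_t$ for fixed $p_t$ and linear (in fact affine, via the expectation) in $p_t$ for fixed $q_t$. Since $\QD$ is convex and weakly compact (by the standing assumption, Prokhorov, and tightness), and $\PD_t(x_{1:t-1})$ is convex and — by hypothesis of the theorem — satisfies the conditions for the minimax theorem, I can invoke a minimax theorem (e.g. Sion's, exactly as in \cite{RakSriTew10a}) to exchange the two operators:
\begin{align*}
  \inf_{q_t\in\QD}\sup_{p_t\in\PD_t}\En_{f_t\sim q_t}\En_{x_t\sim p_t}\Lambda
  \;=\;
  \sup_{p_t\in\PD_t}\inf_{q_t\in\QD}\En_{f_t\sim q_t}\En_{x_t\sim p_t}\Lambda .
\end{align*}
Now, with $p_t$ fixed, the map $q_t\mapsto \En_{f_t\sim q_t}\En_{x_t\sim p_t}\Lambda$ is linear over the simplex of distributions on $\F$, so its infimum is attained at a Dirac mass and equals $\inf_{f_t\in\F}\En_{x_t\sim p_t}\Lambda(f_t,x_t)$. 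Because $\Lambda$ splits as $f_t(x_t)$ plus a term not depending on $f_t$, this becomes $\inf_{f_t\in\F}\En_{x_t\sim p_t} f_t(x_t)$ plus that continuation term, which is precisely the round-$t$ summand appearing in \eqref{eq:value_equality} together with the continuation. This is exactly the inductive step, and it reproduces the structure needed to recurse into round $t-1$.

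The main obstacle is the careful verification that the minimax swap is legitimate at \emph{every} round: one must check that, after the inner rounds have been rewritten, the resulting continuation value is still a suitably measurable, bounded, and (weakly) continuous function of $(p_t, q_t)$ — in particular weakly upper semicontinuous in $p_t$ and lower semicontinuous in $q_t$ — so that the hypotheses of the minimax theorem genuinely apply with the transformed payoff, not just the original one. This is the same technical point handled in \cite{RakSriTew10a}, and I would follow that treatment: use weak compactness of $\QD$ from Prokhorov's theorem, the assumed compactness and convexity of each $\PD_t(x_{1:t-1})$, and the boundedness of $\F$ (implicit in the setup, needed to keep all expectations finite), together with an inductive argument that the continuation value inherits the required regularity from the payoff. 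Everything else — the linearity arguments, the reduction of $\inf_{q_t}$ to $\inf_{f_t\in\F}$, and the telescoping — is routine bookkeeping once the swap is justified. A minor point worth noting is that the adaptivity of the adversary (dependence of $p_t$ on $f_{1:t-1}$) disappears on the right-hand side precisely because, after taking $\inf_{f_t}$ pointwise, the round-$t$ contribution no longer depends on $f_{1:t-1}$, so the strategic dependence collapses to a dependence only on $x_{1:t-1}$, matching the nested-supremum form on the right of \eqref{eq:value_equality}.
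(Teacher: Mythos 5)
Your proposal is correct and follows essentially the same approach as the paper's proof: a per-round application of a bilinear minimax theorem (swapping $\inf_{q_t}$ and $\sup_{p_t}$), collapse of $\inf_{q_t}\En_{f_t\sim q_t}$ to $\inf_{f_t\in\F}$ by linearity, and a backward pass that pushes the resulting constants $\inf_{f_t}\En_{x_t\sim p_t}f_t(x_t)$ inside the nested expectations. The only difference is organizational — the paper applies the minimax swap to all rounds in a single pass and then rearranges expectations/infima inward, whereas you interleave the swap and the rearrangement within one backward induction — but the underlying argument and the required regularity hypotheses are identical.
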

The nested sequence of suprema and expected values in Theorem~\ref{thm:minimax} can be re-written succinctly as
\begin{align}
	\label{eq:succinct_value_equality}
	\Val_T(\PD_{1:T})
	&=\sup_{\jp\in\PDA} \En_{x_1\sim p_1} \En_{x_2\sim p_2(\cdot|x_1)} \ldots \En_{x_T\sim p_T(\cdot|x_{1:T-1})} \left[
	  \sum_{t=1}^T \inf_{f_t \in \F}
	  	\Es{x_t \sim p_t}{f_t(x_t)} - \inf_{f\in\F} \sum_{t=1}^T f(x_t)
	\right] \\
	&=\sup_{\jp\in\PDA} \En \left[
	  \sum_{t=1}^T \inf_{f_t \in \F}
	  	\Es{x_t \sim p_t}{f_t(x_t)} - \inf_{f\in\F} \sum_{t=1}^T f(x_t)
	\right] \notag
\end{align}
where the supremum is over all joint distributions $\jp$ over sequences, such that $\jp$ satisfies the restrictions as described below. Given a joint distribution $\jp$ on sequences $(x_1,\ldots,x_T)\in \X^T$, we denote the associated conditional distributions by $p_t(\cdot|x_{1:t-1})$. We can think of the choice $\jp$ as a sequence of oblivious strategies $\{p_t:\X^{t-1}\mapsto\PD \}_{t=1}^T$, mapping the prefix $x_{1:t-1}$ to a conditional distribution $p_t(\cdot|x_{1:t-1})\in\PD_t(x_{1:t-1})$. We will indeed call $\jp$ a ``joint distribution'' or an ``oblivious strategy'' interchangeably. We say that a joint distribution $\jp$ \emph{satisfies restrictions} if  for any $t$ and any $x_{1:t-1}\in \X^{t-1}$, $p_t (\cdot | x_{1:t-1}) \in \PD_t(x_{1:t-1})$. The set of all joint distributions satisfying the restrictions is denoted by $\PDA$. We note that Theorem~\ref{thm:minimax} cannot be deduced immediately from the analogous result in \cite{RakSriTew10a}, as it is not clear how the restrictions on the adversary per each round come into play after applying the minimax theorem. Nevertheless, it is comforting that the restrictions directly translate into the set $\PDA$ of oblivious strategies satisfying the restrictions.

Before continuing with our goal of upper-bounding the value of the game, let us answer the following question: Is there an oblivious minimax strategy for the adversary? Even though Theorem~\ref{thm:minimax} shows equality to some quantity with a supremum over oblivious strategies $\jp$, it is not immediate that the answer to our question is affirmative, and a proof is required. To this end, for any oblivious strategy $\jp$, define the regret the player would get playing optimally against $\jp$:
\begin{align}
	\label{eq:def_val_for_p}
		\Val_T^\jp ~\deq~ \inf_{f_1\in \F} \En_{x_1\sim p_1} \inf_{f_2\in \F} \En_{x_2\sim p_2(\cdot|x_1)} \cdots \inf_{f_T\in \F} \En_{x_T\sim p_T(\cdot|x_{1:T-1})} \left[ \sum_{t=1}^T f_t(x_t) - \inf_{f\in \F}\sum_{t=1}^T f(x_t) \right].
\end{align}
The next proposition shows that there is an oblivious minimax strategy for the adversary and a minimax optimal strategy for the player that does not depend on its own randomizations. The latter statement for worst-case learning is folklore, yet we have not seen a proof of it in the literature.
\begin{proposition}
	\label{prop:lower_bound_oblivious}
	For any oblivious strategy $\jp$,
	\begin{align}
		\label{eq:lower_oblivious}
		\Val_T(\PD_{1:T}) ~\geq~ \Val_T^\jp &~=~ \inf_{\jq}\E{ \sum_{t=1}^T \En_{f_t\sim \pi_t(\cdot|x_{1:t-1}) } \En_{x_t\sim p_t} f_t(x_t) - \inf_{f\in \F}\sum_{t=1}^T f(x_t)  }
	\end{align}
	with equality holding for $\jp^*$ which achieves the supremum\footnote{Here, and in the rest of the paper, if a supremum is not achieved, a slightly modified analysis can be carried out.} in \eqref{eq:succinct_value_equality}. Importantly, the infimum is over strategies $\jq=\{\pi_t\}_{t=1}^T$ of the player that \emph{do not depend} on player's previous moves, that is $\pi_t:\X^{t-1}\mapsto \QD$. Hence, there as an oblivious minimax optimal strategy for the adversary, and there is a corresponding minimax optimal strategy for the player that does not depend on its own moves.	
\end{proposition}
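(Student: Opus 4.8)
The plan is to reduce both sides of \eqref{eq:lower_oblivious} to a single closed form and then invoke Theorem~\ref{thm:minimax}. Write $\Phi(x_{1:T}) = \inf_{f\in\F}\sum_{t=1}^T f(x_t)$, so the payoff is $\sum_{t=1}^T f_t(x_t) - \Phi(x_{1:T})$, with $f_t$ occurring in exactly one summand. The key structural fact is that once the adversary is fixed to an oblivious $\jp$, the $x$-process is autonomous: $p_t(\cdot\mid x_{1:t-1})$ does not depend on $f_{1:t-1}$. Hence in the nested expression defining $\Val_T^\jp$ the innermost $\inf_{f_T}\En_{x_T\sim p_T}[\cdot]$ acts only on the term $f_T(x_T)$ — the remaining terms $\sum_{t<T}f_t(x_t)$ and $\En_{x_T}[\Phi(x_{1:T})]$ are either constant in $(x_T,f_T)$ or depend on the $x$'s alone — so it peels off as $\inf_{f_T}\En_{x_T\sim p_T(\cdot|x_{1:T-1})}[f_T(x_T)]$, leaving behind a function of $x_{1:T-1}$. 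Iterating this backward over $t=T,\ldots,1$ collapses the nested infima and expectations into
\[
	\Val_T^\jp = \En_\jp\left[\sum_{t=1}^T \inf_{f_t\in\F}\En_{x_t\sim p_t(\cdot|x_{1:t-1})}[f_t(x_t)] \;-\; \inf_{f\in\F}\sum_{t=1}^T f(x_t)\right],
\]
which is exactly the expression appearing under the supremum in \eqref{eq:succinct_value_equality}. Theorem~\ref{thm:minimax} therefore gives $\Val_T(\PD_{1:T}) = \sup_\jp \Val_T^\jp$; in particular $\Val_T(\PD_{1:T})\ge\Val_T^\jp$ for every oblivious $\jp$, with equality at any $\jp^*$ attaining the supremum (invoking the footnote convention otherwise).

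For the identity $\Val_T^\jp = \inf_\jq \E{\cdots}$ with $\jq=\{\pi_t\}$, $\pi_t:\X^{t-1}\to\QD$, I would expand the right-hand side similarly. The $\jp$-expectation of $\inf_{f\in\F}\sum_t f(x_t)$ does not involve $\jq$. In the remaining term, because $\pi_t$ depends only on $x_{1:t-1}$ and the law of $x_{1:T}$ is that of $\jp$ irrespective of $\jq$, the objective splits as $\sum_t \En_{x_{1:t-1}\sim\jp}\big[\En_{f_t\sim\pi_t(\cdot|x_{1:t-1})}\En_{x_t\sim p_t}f_t(x_t)\big]$, a sum in which $\pi_t$ occurs in one summand only; the infimum over $\jq$ may thus be taken separately for each $t$ and pointwise in $x_{1:t-1}$. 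Finally, since $\QD$ is the set of \emph{all} distributions on $\F$ and $q\mapsto \En_{f\sim q}\big[\En_{x_t\sim p_t}f(x_t)\big]$ is linear, the pointwise infimum over $q\in\QD$ equals $\inf_{f_t\in\F}\En_{x_t\sim p_t(\cdot|x_{1:t-1})}f_t(x_t)$ — a point mass suffices. This recovers the same closed form, establishing the identity. The existence of a minimax optimal player strategy that ignores its own randomization is then immediate, since the infimum on the right is already restricted to maps $\pi_t:\X^{t-1}\to\QD$; and $\jp^*$ is an oblivious minimax optimal adversary strategy because $\Val_T^{\jp^*} = \Val_T(\PD_{1:T})$, i.e., committing to $\jp^*$ forces the full value of the game.

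The incidental technical points are measurability — when pushing $\inf_{f_t}$ through $\En_{x_{1:t-1}}$, in both the backward induction and the pointwise minimization over $\pi_t$, one selects pointwise (or $\epsilon$-)minimizers measurably by a standard measurable-selection argument — and the elementary linearity step replacing $\QD$ by $\F$. The real content, and the step I would state most carefully, is the backward ``decoupling'': that at round $t$ the infimum over $f_t$ touches no term other than $f_t(x_t)$, which relies precisely on $\jp$ being oblivious (so that the future $x$'s and $\Phi$ are independent of $f_t$), and that this localization propagates inductively without the rounds re-coupling.
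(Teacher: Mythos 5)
Your argument is correct, but it takes a noticeably different route from the paper's. The paper never computes $\Val_T^\jp$ in closed form by backward induction; instead, it first rewrites the nested form of \eqref{eq:def_val_for_p} as $\inf_\jq$ over \emph{full} strategies $\pi_t:(\F\times\X)^{t-1}\to\QD$, then explicitly constructs, for any such $\jq$, an equivalent strategy $\jq'$ with $\pi'_t:\X^{t-1}\to\QD$ by integrating out the player's own randomization, i.e.\ $\pi'_t(\cdot|x_{1:t-1}) = \En_{f_1\sim \pi_1}\cdots \En_{f_{t-1}}\,\pi_t(\cdot|f_{1:t-1},x_{1:t-1})$. Only afterwards does it lower bound by the pointwise Bayes-optimal response and match against Theorem~\ref{thm:minimax} to get equality at $\jp^*$. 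You instead peel off the infima and expectations backward in $t$ — essentially re-running the collapsing step from the proof of Theorem~\ref{thm:minimax} but with the adversary's supremum deleted, which is legitimate precisely because $\jp$ is oblivious and so the value-to-go at round $t$ is a function of $x_{1:t-1}$ alone, linear in $f_t(x_t)$ — to land on the closed form $\En_\jp\big[\sum_t \inf_{f_t}\En_{x_t\sim p_t} f_t(x_t) - \inf_f\sum_t f(x_t)\big]$ directly, and then show this equals $\inf_\jq\E{\cdots}$ over restricted $\jq$ by a per-round, pointwise-in-$x_{1:t-1}$ linear-programming argument. Both routes prove the proposition; yours is shorter and gives the closed form for $\Val_T^\jp$ for \emph{every} oblivious $\jp$ (the paper only obtains the lower bound $\Val_T^\jp\ge$ closed form, with equality forced at $\jp^*$ by sandwiching against Theorem~\ref{thm:minimax}), while the paper's route makes the ``integrate out past randomizations'' step explicit as a self-contained reduction, which the proposition's final sentence is really about. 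One small remark: the paper obtains $\Val_T(\PD_{1:T})\ge\Val_T^\jp$ immediately by observing the adversary may commit to $\jp$ and the player can never do worse by facing fewer options, whereas you derive it through Theorem~\ref{thm:minimax}; the direct observation is cheaper and also available to you.
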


Proposition~\ref{prop:lower_bound_oblivious} holds for all online learning settings with legal restrictions $\PD_{1:T}$, encompassing also the no-restrictions setting of worst-case online learning \cite{RakSriTew10a}. The result crucially relies on the fact that the objective is external regret.

\section{Symmetrization and Random Averages}
\label{sec:rademacher}

Theorem~\ref{thm:minimax} is a useful representation of the value of the game. As the next step, we upper bound it with an expression which is easier to study. Such an expression is obtained by introducing Rademacher random variables. This process can be termed {\em sequential symmetrization} and has been exploited in \cite{AbeAgaBarRak09,RakSriTew10a,RakSriTew10b}. The restrictions $\PD_t$, however, make sequential symmetrization a bit more involved than in the previous papers. The main difficulty arises from the fact that the set $\PD_t(x_{1:t-1})$ depends on the sequence $x_{1:t-1}$, and symmetrization (that is, replacement of $x_s$ with $x'_s$) has to be done with care as it affects this dependence. Roughly speaking, in the process of symmetrization, a tangent sequence $x'_1,x'_2,\ldots$ is introduced such that $x_t$ and $x'_t$ are independent and identically distributed given ``the past''.  However, ``the past'' is itself an interleaving choice of the original sequence and the tangent sequence.

Define the ``selector function'' $\chi:\X \times \X \times \{\pm 1\}\mapsto \X$ by
$$
\chi(x, x', \epsilon) = \left\{ \begin{array}{ll}
x'  & \textrm{if } \epsilon = 1\\
x & \textrm{if } \epsilon = -1
\end{array}
\right.
$$
When $x_t$ and $x'_t$ are understood from the context, we will use the shorthand $\chi_t(\epsilon):= \chi(x_t, x'_t, \epsilon)$. In other words, $\chi_t$ selects between $x_t$ and $x'_t$ depending on the sign of $\epsilon$.

Throughout the paper, we deal with binary trees, which arise from symmetrization \cite{RakSriTew10a}. Given some set ${\mathcal Z}$, an \emph{${\mathcal Z}$-valued tree of depth $T$} is a sequence $(\z_1,\ldots,\z_T)$ of $T$ mappings  $\z_i : \{\pm 1\}^{i-1} \mapsto \Z$. The $T$-tuple $\epsilon =(\epsilon_1,\ldots,\epsilon_T) \in \{\pm 1\}^T$ defines a path. For brevity, we write $\z_t(\epsilon)$ instead of $\z_t(\epsilon_{1:t-1})$.

Given a joint distribution $\jp$, consider the ``$\left(\X \times \X\right)^{T-1} \mapsto \mathcal{P}(\X \times \X) $''- valued probability tree $\rh=(\rh_1,\ldots,\rh_T)$ defined by
\begin{align}
	\label{eq:prob_valued_tree}
\rh_t(\epsilon_{1:t-1}) \left((x_{1},x'_{1}),\ldots,(x_{T-1},x'_{T-1})\right)
= (p_t(\cdot | \chi_1(\epsilon_1),\ldots,\chi_{t-1}(\epsilon_{t-1})), p_t(\cdot | \chi_1(\epsilon_1),\ldots,\chi_{t-1}(\epsilon_{t-1})) ).
\end{align}
In other words, the values of the mappings $\rh_t(\epsilon)$ are products of conditional distributions, where conditioning is done with respect to a sequence made from $x_s$ and $x'_s$ depending on the sign of $\epsilon_s$. We note that the difficulty in intermixing the $x$ and $x'$ sequences does not arise in i.i.d. or worst-case symmetrization. However, in-between these extremes the notational complexity seems to be unavoidable if we are to employ symmetrization and obtain a version of Rademacher complexity.

As an example, consider the ``left-most'' path $\epsilon = -{\boldsymbol 1}$ in a binary tree of depth $T$, where ${\boldsymbol 1} = (1,\ldots,1)$ is a $T$-dimensional vector of ones. Then all the selectors $\chi(x_t, x_t', \epsilon_t)$ in the definition \eqref{eq:prob_valued_tree} select the sequence $x_1,\ldots,x_T$. The probability tree $\rh$ on the ``left-most'' path is, therefore, defined by the conditional distributions $p_t(\cdot| x_{1:t-1})$. Analogously, on the path $\epsilon={\boldsymbol 1}$, the conditional distributions are $p_t(\cdot| x'_{1:t-1})$.

Slightly abusing the notation, we will write $\rh_t(\epsilon) \left((x_{1},x'_{1}),\ldots,(x_{t-1},x'_{t-1})\right)$ for the probability tree since $\rh_t$ clearly depends only on the prefix up to time $t-1$. Throughout the paper, it will be understood that the tree $\rh$ is obtained from $\jp$ as described above. Since all the conditional distributions of $\jp$ satisfy the restrictions, so do the corresponding distributions of the probability tree $\rh$. By saying that $\rh$ satisfies restrictions we then mean that $\jp\in \PDA$.

Sampling of a pair of $\X$-valued trees from $\rh$, written as $(\x,\x') \sim \rh$, is defined as the following recursive process: for any $\epsilon\in\{\pm1\}^T$, 
\begin{align} 
	\label{eq:sampling_procedure}
(\x_1(\epsilon),\x'_1(\epsilon)) &\sim \rh_1(\epsilon) \notag \\
(\x_t(\epsilon),\x'_t(\epsilon)) &\sim \rh_t(\epsilon)((\x_1(\epsilon), \x'_1(\epsilon)),\ldots,(\x_{t-1}(\epsilon),\x'_{t-1}(\epsilon)))~~~~~\mbox{ for }~~ 2\leq t\leq T
\end{align}

To gain a better understanding of the sampling process, consider the first few levels of the tree. The roots $\x_1,\x'_1$ of the trees $\x,\x'$ are sampled from $p_1$, the conditional distribution for $t=1$ given by $\jp$. Next, say, $\epsilon_1=+1$. Then the ``right'' children of $\x_1$ and $\x'_1$ are sampled via $\x_2(+1),\x'_2(+1) \sim p_2(\cdot|\x'_1)$ since $\chi_1(+1)$ selects $\x'_1$. On the other hand, the ``left'' children $\x_2(-1),\x'_2(-1)$ are both distributed according to $p_2(\cdot|\x_1)$. Now, suppose $\epsilon_1=+1$ and $\epsilon_2 = -1$. Then, $\x_3(+1,-1), \x'_3(+1,-1)$ are both sampled from $p_3(\cdot|\x'_1, \x_2(+1))$. 

The proof of Theorem~\ref{thm:valrad} reveals why such intricate conditional structure arises, and Section~\ref{sec:structural} shows that this structure greatly simplifies for i.i.d. and worst-case situations. Nevertheless, the process described above allows us to define a unified notion of Rademacher complexity for the spectrum of assumptions between the two extremes.

\begin{definition} 
	\label{def:rademacher}
	The \emph{distribution-dependent sequential Rademacher complexity} of a function class $\F \subseteq \reals^\X$ is defined as
$$
\Rad_T(\mathcal{F}, \jp) ~\deq~ \En_{(\x,\x')\sim \rh}\Es{\epsilon}{ \sup_{f \in \F} \sum_{t=1}^{T} \epsilon_t f(\x_t(\epsilon))}
$$
where $\epsilon=(\epsilon_1,\ldots, \epsilon_T)$ is a sequence of i.i.d. Rademacher random variables and $\rh$ is the probability tree associated with $\jp$.
\end{definition}

We now prove an upper bound on the value $\Val_T(\PD_{1:T})$ of the game in terms of this  distribution-dependent sequential Rademacher complexity. This provides an extension of the analogous result in \cite{RakSriTew10a} to adversaries more benign than worst-case.
\begin{theorem}\label{thm:valrad}
The minimax value is bounded as
\begin{align}
	\label{eq:valrad_upper}
\Val_T(\PD_{1:T}) \le 2 \sup_{\jp\in\PDA}\Rad_T(\F, \jp).
\end{align}
A more general statement also holds:
\begin{align*}
\Val_T(\PD_{1:T}) &\leq \sup_{\jp\in\PDA}  \E{ \sup_{f \in \F}\left\{ \sum_{t=1}^T  f(x_t') - f(x_t) \right\} } \\
& \leq 2\sup_{\jp\in\PDA} \En_{(\x,\x') \sim \rh} \En_{\epsilon} \left[ \sup_{f \in \F} \sum_{t=1}^T \epsilon_t (f (\x_t(\epsilon))-M_t(\jp,f,\x,\x',\epsilon)) \right]
\end{align*}
for any measurable function $M_t$ with the property $M_t(\jp,f,\x,\x',\epsilon) = M_t(\jp,f,\x',\x,-\epsilon)$. In particular, \eqref{eq:valrad_upper} is obtained by choosing $M_t=0$.
\end{theorem}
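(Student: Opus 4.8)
The plan is to start from the oblivious representation \eqref{eq:succinct_value_equality} of the value, introduce a tangent sequence to carry out a sequential symmetrization, and only then fold the result into the probability tree $\rh$. Fix $\jp\in\PDA$. Since $\sum_{t=1}^T\inf_{f_t\in\F}\Es{x_t\sim p_t}{f_t(x_t)}$ does not depend on $f$, the integrand in \eqref{eq:succinct_value_equality} equals $\sup_{f\in\F}\sum_{t=1}^T\big(\inf_{f_t\in\F}\Es{x_t\sim p_t}{f_t(x_t)}-f(x_t)\big)$. Conditionally on the realized sequence $x_{1:T}$, introduce a tangent sequence $x_1',\dots,x_T'$ that is independent across $t$ with $x_t'\sim p_t(\cdot\mid x_{1:t-1})$, i.e.\ drawn from the conditional along the \emph{original} prefix. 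This is exactly where the restrictions complicate the classical argument: because $p_t$ genuinely depends on $x_{1:t-1}$, the ghost variable must be conditioned on the realized prefix rather than on its own. Using $\inf_{f_t\in\F}\Es{x_t\sim p_t}{f_t(x_t)}\le\Es{x_t'\sim p_t}{f(x_t')}$ for every $f$, and Jensen's inequality to pull $\sup_f$ outside the expectation over $x'$, we obtain the first inequality of the general statement,
\[
\Val_T(\PD_{1:T})\ \le\ \sup_{\jp\in\PDA}\ \En_{x,x'}\Big[\sup_{f\in\F}\sum_{t=1}^T\big(f(x_t')-f(x_t)\big)\Big],
\]
with $x\sim\jp$ and $x'$ the tangent sequence above.

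\emph{Passing to the probability tree and symmetrizing.} The pair $(x,x')$ is precisely $\big(\x_{1:T}(-{\boldsymbol 1}),\x'_{1:T}(-{\boldsymbol 1})\big)$ for $(\x,\x')\sim\rh$: along the leftmost path all selectors pick the $x$-coordinates, so $\rh$'s conditional structure there reproduces the law of $(x,x')$. The heart of the proof is that the tree value is the same along \emph{every} path: for each $\epsilon\in\{\pm1\}^T$,
\[
\En_{(\x,\x')\sim\rh}\Big[\sup_{f\in\F}\sum_{t=1}^T\epsilon_t\big(f(\x_t(\epsilon))-f(\x'_t(\epsilon))\big)\Big]\ =\ \En_{x,x'}\Big[\sup_{f\in\F}\sum_{t=1}^T\big(f(x_t')-f(x_t)\big)\Big].
\]
Indeed, by construction $\x_t(\epsilon)$ and $\x'_t(\epsilon)$ are, given the history along the path, i.i.d.\ with law $p_t(\cdot\mid\chi_1(\epsilon),\dots,\chi_{t-1}(\epsilon))$, and the selector $\chi_t(\epsilon)$ carries exactly one of them forward. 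Writing $S_t=\chi_t(\epsilon)$ for the carried-forward node and $G_t$ for its i.i.d.\ twin, an induction on $t$ that tracks the interleaving shows $(S_{1:T},G_{1:T})\eqdist(x,x')$ for \emph{every} $\epsilon$, while pointwise $\epsilon_t\big(f(\x_t(\epsilon))-f(\x'_t(\epsilon))\big)=f(G_t)-f(S_t)$ — the sign $\epsilon_t$ exactly undoes the choice of which member of the pair was selected. Averaging the path-independent expression over a uniform $\epsilon$ gives $\Val_T(\PD_{1:T})\le\sup_{\jp\in\PDA}\En_{(\x,\x')\sim\rh}\En_\epsilon[\sup_{f\in\F}\sum_t\epsilon_t(f(\x_t(\epsilon))-f(\x'_t(\epsilon)))]$.

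\emph{Centering and splitting.} For any $M_t$ with $M_t(\jp,f,\x,\x',\epsilon)=M_t(\jp,f,\x',\x,-\epsilon)$, rewrite the summand as $\epsilon_t\big[(f(\x_t(\epsilon))-M_t)-(f(\x'_t(\epsilon))-M_t)\big]$ (legitimate since the two copies of $M_t$ coincide by the posited symmetry) and bound $\sup_f$ of the sum by the sum of the two suprema,
\[
\En_{(\x,\x'),\epsilon}\Big[\sup_{f}\sum_t\epsilon_t\big(f(\x_t(\epsilon))-M_t\big)\Big]+\En_{(\x,\x'),\epsilon}\Big[\sup_{f}\sum_t(-\epsilon_t)\big(f(\x'_t(\epsilon))-M_t\big)\Big].
\]
The second expectation is brought to the first by relabelling $\epsilon\mapsto-\epsilon$ and exchanging the two subtrees $\x,\x'$ — an operation under which the law of $(\x,\x')\sim\rh$ is invariant, since $\chi$ was designed so that swapping coordinates together with flipping signs preserves every conditional distribution — combined with the symmetry of $M_t$; this is the bookkeeping of the previous paragraph applied with $M_t$ present. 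Hence the bound is $2\sup_{\jp}\En_{(\x,\x')\sim\rh}\En_\epsilon[\sup_f\sum_t\epsilon_t(f(\x_t(\epsilon))-M_t(\jp,f,\x,\x',\epsilon))]$, which is the general statement; taking $M_t\equiv0$ and recalling $\En_{(\x,\x')\sim\rh}\En_\epsilon[\sup_f\sum_t\epsilon_tf(\x_t(\epsilon))]=\Rad_T(\F,\jp)$ gives \eqref{eq:valrad_upper}.

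\emph{Main obstacle.} Everything here is routine (the minimax input from \thmref{thm:minimax}, the pointwise bound $\inf_{f_t}\Es{x_t\sim p_t}{f_t(x_t)}\le\Es{x_t'\sim p_t}{f(x_t')}$, Jensen, splitting a supremum) except the middle step. Unlike the i.i.d.\ and worst-case settings, the swap $x_s\leftrightarrow x'_s$ is \emph{not} globally measure-preserving: it perturbs every conditional $p_{s'}(\cdot\mid\cdot)$ at rounds $s'>s$. The tree $\rh$ and the selector $\chi$ are precisely what localize the swap, and the real work is the induction establishing that the conditionally-i.i.d.-pair structure is preserved along every path and that the selectors reassemble the original joint law — the ``careful consideration of original and tangent sequences'' promised in the introduction.
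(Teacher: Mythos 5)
Your proposal is correct and follows essentially the same route as the paper's proof: pass from the minimax representation \eqref{eq:succinct_value_equality} to a conditionally tangent sequence, observe that the conditional structure on the leftmost path of $\rh$ reproduces the law of $(x,x')$, establish path-invariance of the symmetrized quantity via the selector identity $\chi(x,x',\epsilon)=\chi(x',x,-\epsilon)$, average over $\epsilon$, and split with the $M_t$-centering using the $(\x,\x',\epsilon)\mapsto(\x',\x,-\epsilon)$ symmetry. Your bookkeeping $(S_{1:T},G_{1:T})\eqdist(x,x')$ with $\epsilon_t$ undoing the selection is a clean way to phrase the paper's ``renaming'' argument, and the rest (Jensen, splitting the supremum, $M_t\equiv 0$) matches.
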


The following corollary provides a natural ``centered'' version of the distribution-dependent Rademacher complexity. That is, the complexity can be measured by relative shifts in the adversarial moves.
\begin{corollary}
	\label{cor:centered_at_conditional}
	For the game with restrictions $\PD_{1:T}$,
	\begin{align*}
	\Val_T(\PD_{1:T}) &\leq 2\sup_{\jp\in\PDA} \En_{(\x,\x') \sim \rh} \En_{\epsilon} \left[ \sup_{f \in \F} \sum_{t=1}^T \epsilon_t \Big( f (\x_t(\epsilon))- \En_{t-1} f(\x_t(\epsilon)) \Big) \right]
	\end{align*}
	where $\En_{t-1}$ denotes the conditional expectation of $\x_t(\epsilon)$.
\end{corollary}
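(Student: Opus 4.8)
The plan is to deduce the corollary directly from the general form of \thmref{thm:valrad}, by choosing the centering function $M_t$ to be exactly the conditional mean that appears in the statement. Fix $\jp\in\PDA$ and let $\rh$ be the associated probability tree. I would take
\[
M_t(\jp,f,\x,\x',\epsilon) \;:=\; \En_{t-1} f(\x_t(\epsilon)),
\]
the conditional expectation of $f(\x_t(\epsilon))$ given the first $t-1$ levels of the sampled pair $(\x,\x')\sim\rh$ together with $\epsilon_{1:t-1}$. By the definition of $\rh$ in \eqref{eq:prob_valued_tree} and the recursive sampling \eqref{eq:sampling_procedure}, conditionally on the past the point $\x_t(\epsilon)$ has law $p_t(\cdot\mid\chi_1(\epsilon_1),\ldots,\chi_{t-1}(\epsilon_{t-1}))$, so $M_t(\jp,f,\x,\x',\epsilon) = \Es{x\sim p_t(\cdot\mid\chi_1(\epsilon_1),\ldots,\chi_{t-1}(\epsilon_{t-1}))}{f(x)}$, which depends on $(\x,\x')$ only through the $\epsilon$-selected prefix $\chi_1(\epsilon_1),\ldots,\chi_{t-1}(\epsilon_{t-1})$ and on $\epsilon$ only through $\epsilon_{1:t-1}$. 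Since $\x_t(\epsilon)$ and $\x'_t(\epsilon)$ are conditionally i.i.d.\ given the past, one also has $M_t=\En_{t-1}f(\x_t(\epsilon))=\En_{t-1}f(\x'_t(\epsilon))$; this is why this centering is the natural one.

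The one genuine thing to verify is the hypothesis of \thmref{thm:valrad}, namely $M_t(\jp,f,\x,\x',\epsilon) = M_t(\jp,f,\x',\x,-\epsilon)$. This reduces to the one-line fact that for every $x,x'\in\X$ and $\epsilon\in\{\pm1\}$,
\[
\chi(x',x,-\epsilon) \;=\; \chi(x,x',\epsilon),
\]
immediate from the definition of the selector. Hence the conditioning sequence $\bigl(\chi(x_s,x'_s,\epsilon_s)\bigr)_{s<t}$ that determines $M_t$ is left unchanged when the two sequences are swapped and every sign flipped, so the conditional law $p_t(\cdot\mid\chi_1(\epsilon_1),\ldots,\chi_{t-1}(\epsilon_{t-1}))$ — and therefore $M_t$ — is unchanged. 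One could equivalently run the argument with the manifestly symmetric choice $M_t=\tfrac12\bigl(\En_{t-1}f(\x_t(\epsilon))+\En_{t-1}f(\x'_t(\epsilon))\bigr)$, which coincides with $\En_{t-1}f(\x_t(\epsilon))$ by the conditional i.i.d.\ property.

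With the hypothesis in place, the general bound of \thmref{thm:valrad} applied to this $M_t$ gives
\[
\Val_T(\PD_{1:T}) \;\le\; 2\sup_{\jp\in\PDA}\En_{(\x,\x')\sim\rh}\En_{\epsilon}\Big[\sup_{f\in\F}\sum_{t=1}^T \epsilon_t\bigl(f(\x_t(\epsilon)) - \En_{t-1}f(\x_t(\epsilon))\bigr)\Big],
\]
which is exactly the claimed inequality once ``$\En_{t-1}$'' is read as the conditional expectation of $\x_t(\epsilon)$ given the first $t-1$ levels. I expect the only real subtlety — the step worth spelling out carefully in the final write-up — to be the bookkeeping around precisely which arguments $M_t$ depends on and why the \emph{pointwise} symmetry required by \thmref{thm:valrad} holds (not merely a distributional one); this is exactly what the selector identity above, combined with $M_t$ depending on $(\x,\x')$ only through the $\epsilon$-selected prefix, delivers. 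Measurability of the conditional mean inside the recursive sampling of $\rh$ is routine, since $\F$ lies in a complete separable metric space and regular conditional distributions exist.
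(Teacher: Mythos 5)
Your proof is correct and follows the same route as the paper: choose $M_t$ to be the conditional expectation $\Es{x\sim p_t(\cdot\mid\chi_1(\epsilon_1),\ldots,\chi_{t-1}(\epsilon_{t-1}))}{f(x)}$ and verify the symmetry hypothesis of Theorem~\ref{thm:valrad} via the selector identity $\chi(x,x',\epsilon)=\chi(x',x,-\epsilon)$. Your version is slightly more detailed about why $M_t$ depends on $(\x,\x',\epsilon)$ only through the $\epsilon$-selected prefix, but the argument is the same.
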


\begin{example} Suppose $\F$ is a unit ball in a Banach space and $f(x) = \inner{f,x}$. Then
	\begin{align*}
	\Val_T(\PD_{1:T}) &\leq 2\sup_{\jp\in\PDA} \En_{(\x,\x') \sim \rh} \En_{\epsilon} \left\| \sum_{t=1}^T \epsilon_t \Big( \x_t(\epsilon)- \En_{t-1} \x_t(\epsilon) \Big) \right\|
	\end{align*}
	Suppose the adversary plays a simple random walk (e.g., $p_t(x|x_1,\ldots,x_{t-1}) = p_t(x|x_{t-1})$ is uniform on a unit sphere). For simplicity, suppose this is the only strategy allowed by the set $\PDA$. Then $\x_t(\epsilon)- \En_{t-1} \x_t(\epsilon)$ are independent increments when conditioned on the history. Further, the increments do not depend on $\epsilon_t$. Thus, 
	\begin{align*}
	\Val_T(\PD_{1:T}) &\leq 2 \En\left\| \sum_{t=1}^T Y_t \right\|
	\end{align*}
	where $\{Y_t\}$ is the corresponding random walk.
\end{example}

\section{Analyzing Rademacher Complexity}
\label{sec:structural}

The aim of this section is to provide a better understanding of the distribution-dependent sequential Rademacher complexity, as well as ways of upper-bounding it. We first show that the classical Rademacher complexity is equal to the distribution-dependent sequential Rademacher complexity for i.i.d. data. We further show that the distribution-dependent sequential Rademacher complexity is always upper bounded by the worst-case sequential Rademacher complexity defined in \cite{RakSriTew10a}. 

It is already apparent to the reader that the sequential nature of the minimax formulation yields long mathematical expressions, which are not necessarily complicated yet unwieldy. The functional notation and the tree notation alleviate much of these difficulties. However, it takes some time to become familiar and comfortable with these representations. The next few results hopefully provide the reader with a better feel for the distribution-dependent sequential Rademacher complexity.

\begin{proposition}
	Consider the i.i.d. restrictions $\PD_t=\{p\}$ for all $t$, where $p$ is some fixed distribution on $\X$. Let $\rh$ be the process associated with the joint distribution $\jp=p^T$. Then 
	$$\Rad_T(\mathcal{F}, \jp) = \Rad_T(\F, p) $$
	where
	\begin{align}
		\label{eq:classical_rad}
		\Rad_T(\F, p) \deq \En_{x_1,\ldots,x_T \sim p}\Es{\epsilon}{ \sup_{f \in \F} \sum_{t=1}^{T} \epsilon_t f(x_t)} \ .
	\end{align}	
	is the classical Rademacher complexity.
\end{proposition}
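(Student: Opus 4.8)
The plan is to observe that the i.i.d.\ restriction collapses the entire tree construction. Since $\PD_t = \{p\}$, the only conditional distributions compatible with $\jp = p^T$ are $p_t(\cdot \mid x_{1:t-1}) = p$, \emph{irrespective} of the conditioning sequence $x_{1:t-1}$. Substituting this into the definition \eqref{eq:prob_valued_tree} of the probability tree, every node value $\rh_t(\epsilon_{1:t-1})\big((x_1,x_1'),\ldots,(x_{t-1},x_{t-1}')\big)$ equals the fixed product measure $p \otimes p$ on $\X \times \X$; in particular it depends neither on $\epsilon$ nor on the interleaved history of pairs. This is exactly the place where, in general, the awkward intermixing of $x_s$ and $x_s'$ matters, and in the i.i.d.\ case it simply disappears.

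Next I would unwind the sampling procedure \eqref{eq:sampling_procedure} along an arbitrary but fixed path $\epsilon \in \{\pm1\}^T$, by induction on $t$: $(\x_1(\epsilon),\x_1'(\epsilon)) \sim p \otimes p$, and given $(\x_s(\epsilon),\x_s'(\epsilon))$ for $s < t$, the pair $(\x_t(\epsilon),\x_t'(\epsilon)) \sim \rh_t(\epsilon)(\cdots) = p \otimes p$ independently of the earlier draws, because the conditional distribution does not depend on them. Reading off the first coordinate shows that the path variables $\big(\x_1(\epsilon),\ldots,\x_T(\epsilon)\big)$ form a vector of $T$ i.i.d.\ samples from $p$, with the \emph{same} law for every path $\epsilon$.

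Finally I would substitute into Definition~\ref{def:rademacher}. By Fubini (justified by the usual boundedness/integrability conventions of the paper) swap the order of $\En_{(\x,\x')\sim\rh}$ and $\Es{\epsilon}{\cdot}$. For each fixed $\epsilon$, the quantity $\sum_{t=1}^T \epsilon_t f(\x_t(\epsilon))$ depends on the sampled trees only through the path variables $\x_1(\epsilon),\ldots,\x_T(\epsilon)$, whose joint law is $p^T$ by the previous step, so
\[
\En_{(\x,\x')\sim\rh}\left[ \sup_{f\in\F}\sum_{t=1}^T \epsilon_t f(\x_t(\epsilon)) \right]
= \En_{x_1,\ldots,x_T\sim p}\left[ \sup_{f\in\F}\sum_{t=1}^T \epsilon_t f(x_t) \right].
\]
Taking expectation over $\epsilon$ and swapping back with Fubini yields precisely $\Rad_T(\F,p)$ as defined in \eqref{eq:classical_rad}, which is the claim.

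The only genuine obstacle is conceptual bookkeeping: correctly tracing how the constant conditional $p$ propagates through \eqref{eq:prob_valued_tree} and \eqref{eq:sampling_procedure} to conclude that a fixed path of the sampled tree is i.i.d.\ $p$; after that, the argument is just two applications of Fubini. One minor point worth stating explicitly is that although distinct paths sharing a prefix are coupled through their common ancestor nodes, this coupling is immaterial here, since the Rademacher sum associated with a given $\epsilon$ only ever references the single path indexed by that $\epsilon$.
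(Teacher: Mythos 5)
Your proposal is correct and takes essentially the same route as the paper: both fix a path $\epsilon$, observe that the constant conditional $p$ collapses the recursive sampling \eqref{eq:sampling_procedure} so that $(\x_1(\epsilon),\ldots,\x_T(\epsilon))$ is distributed as $p^T$, and then interchange the expectation over $\epsilon$ with the expectation over the tree (the paper writes this interchange as an explicit average over $2^T$ paths, you invoke Fubini, but it is the same step). Your closing remark that coupling between paths through shared prefixes is immaterial because the Rademacher sum for a given $\epsilon$ only references its own path is a slightly more careful way of stating what the paper asserts more casually, but there is no difference in substance.
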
 
\begin{proof}
By definition, we have,
	\begin{align}\label{eq:disttoiid1}
	\Rad_T(\mathcal{F}, \jp) &= \En_{(\x,\x')\sim \rh}\Es{\epsilon}{ \sup_{f \in \F} \sum_{t=1}^{T} \epsilon_t f(\x_t(\epsilon))}
	\end{align}
	In the i.i.d. case, however, the tree generation according to the $\rh$ process simplifies: for any $\epsilon \in \{\pm1\}^T, t \in [T]$,
	\begin{align*}
	(\x_t(\epsilon),\x'_t(\epsilon)) \sim p \times p \ .
	\end{align*}
	Thus, the $2\cdot(2^T-1)$ random variables $\x_t(\epsilon), \x'_t(\epsilon)$ are all i.i.d. drawn from $p$. Writing the expectation
	\eqref{eq:disttoiid1} explicitly as an average over paths, we get
	\begin{align*}
	\Rad_T(\mathcal{F}, \jp) &= \frac{1}{2^T} \sum_{\epsilon \in \{\pm1\}^T} \En_{(\x,\x')\sim \rh}\left[ \sup_{f\in\F} \sum_{t=1}^T \epsilon_t f(\x_t(\epsilon)) \right] \\
	&= \frac{1}{2^T} \sum_{\epsilon \in \{\pm1\}^T} \En_{x_1,\ldots,x_T \sim p}\left[ \sup_{f \in \F} \sum_{t=1}^{T} \epsilon_t f(x_t) \right] \\
	&= \En_{\epsilon} \En_{x_1,\ldots,x_T \sim p}\left[ \sup_{f \in \F} \sum_{t=1}^{T} \epsilon_t f(x_t) \right] \ .
	\end{align*}
	The second equality holds because, for any fixed path $\epsilon$, the $T$ random variables $\{\x_t(\epsilon)\}_{t \in [T]}$ have joint distribution $p^T$.
\end{proof}

\begin{proposition}
	For any joint distribution $\jp$, 
	$$\Rad_T(\mathcal{F}, \jp) \leq \Rad_T(\F) $$
	where
	\begin{align}
		\label{eq:wc_rad}
		\Rad_T(\F) \deq \sup_{\x} \Es{\epsilon}{ \sup_{f \in \F} \sum_{t=1}^{T} \epsilon_t f(x_t)} \ .
	\end{align}	
	is the sequential Rademacher complexity defined in \cite{RakSriTew10a}.
\end{proposition}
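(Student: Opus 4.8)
The plan is to exploit the fact that, once a pair of trees $(\x,\x')$ has been sampled from the probability tree $\rh$, the first coordinate tree $\x$ is simply \emph{some} $\X$-valued tree of depth $T$; hence the quantity averaged over the Rademacher signs can only grow if we replace $\x$ by the worst such tree, which is exactly what $\Rad_T(\F)$ computes. The tangent sequence $\x'$ plays no role in this argument and is just integrated out.

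Concretely, I would start from the definition
\[
\Rad_T(\mathcal{F}, \jp) = \En_{(\x,\x')\sim \rh}\ \Es{\epsilon}{ \sup_{f \in \F} \sum_{t=1}^{T} \epsilon_t f(\x_t(\epsilon))},
\]
and note that the expectation over the i.i.d. Rademacher sequence $\epsilon$ is taken conditionally on the realized pair $(\x,\x')$, with $\epsilon$ independent of that pair. By the sampling procedure \eqref{eq:sampling_procedure}, $\x_t(\epsilon)$ depends on $\epsilon$ only through $\epsilon_{1:t-1}$, so for each fixed realization the collection $\x = (\x_1,\ldots,\x_T)$ is a bona fide $\X$-valued tree of depth $T$ in the sense of the tree definition. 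Therefore, for every such realization,
\[
\Es{\epsilon}{ \sup_{f \in \F} \sum_{t=1}^{T} \epsilon_t f(\x_t(\epsilon))} \;\le\; \sup_{\z}\ \Es{\epsilon}{ \sup_{f \in \F} \sum_{t=1}^{T} \epsilon_t f(\z_t(\epsilon))} \;=\; \Rad_T(\F),
\]
the supremum on the right being over all $\X$-valued trees $\z$ of depth $T$, which is precisely \eqref{eq:wc_rad}.

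Since the right-hand side $\Rad_T(\F)$ is a deterministic constant not depending on $(\x,\x')$, integrating this pointwise inequality over $(\x,\x')\sim\rh$ immediately gives $\Rad_T(\mathcal{F},\jp) \le \Rad_T(\F)$, as claimed; equivalently, one simply bounds the outer expectation by the maximal value of its integrand.

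The only technical point — and it is not a real obstacle — is measurability: one needs the map $(\x,\x')\mapsto \Es{\epsilon}{\sup_{f\in\F}\sum_{t=1}^T \epsilon_t f(\x_t(\epsilon))}$ to be measurable so that the outer expectation is well defined, which follows from the standing separability and compactness assumptions on $\F$ together with the fact that $\En_\epsilon$ is a finite average over $\{\pm1\}^T$. There is no combinatorial or analytic difficulty: the estimate is a one-line domination of a random tree by the worst deterministic tree, and it does not even use that $\rh$ (equivalently, $\jp$) satisfies any restrictions — the worst-case sequential Rademacher complexity dominates regardless.
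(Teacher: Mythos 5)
Correct, and essentially the same approach as the paper's. The paper writes out the expanded interleaved form $\En_{x_1,x'_1}\En_{\epsilon_1}\cdots\En_{x_T,x'_T}\En_{\epsilon_T}[\,\cdot\,]$, replaces each $\En_{x_t,x'_t}$ by $\sup_{x_t,x'_t}$ level by level and then discards the unused $x'_t$'s; you instead condition on the realized pair $(\x,\x')$, observe that $\x$ is a legitimate $\X$-valued tree (since $\x_t(\epsilon)$ depends only on $\epsilon_{1:t-1}$), and apply the pointwise domination directly before integrating out — two equivalent presentations of the same average-dominated-by-supremum argument.
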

\begin{proof}
To make the $\rh$ process associated with $\jp$ more explicit, we use the expanded definition:
\begin{align}
	\label{eq:rad_bdd_by_wc}
	\Rad_T(\F, \jp) &= \En_{x_1,x'_1\sim p_1}\En_{\epsilon_1}\En_{x_2,x'_2\sim p_2(\cdot|\chi_1(\epsilon_1))} \En_{\epsilon_2} ~\ldots~ \En_{x_T,x'_T\sim p_T(\cdot|\chi_1(\epsilon_1),\ldots, \chi_{T-1}(\epsilon_{T-1})) } \En_{\epsilon_{T}} \left[ \sup_{f \in \F} \sum_{t=1}^T \epsilon_t f(x_t) \right] \notag\\
	&\le \sup_{x_1,x'_1}\En_{\epsilon_1}\sup_{x_2,x'_2} \En_{\epsilon_2} ~\ldots~ \sup_{x_T,x'_T} \En_{\epsilon_{T}} \left[ \sup_{f \in \F} \sum_{t=1}^T \epsilon_t f(x_t) \right] \\
	&=\sup_{x_1}\En_{\epsilon_1}\sup_{x_2} \En_{\epsilon_2} ~\ldots~ \sup_{x_T} \En_{\epsilon_{T}} \left[ \sup_{f \in \F} \sum_{t=1}^T \epsilon_t f(x_t) \right] \notag \\
	&=\Rad_T(\F) \notag\ .
\end{align}
The inequality holds by replacing expectation over $x_t,x'_t$ by a supremum over the same. We then get rid of $x_t$'s since they do not appear anywhere.
\end{proof}

An interesting case of hybrid i.i.d.-adversarial data is considered in Lemma~\ref{lem:iid_wc_rademacher}, and we refer to its proof as another example of an analysis of the distribution-dependent sequential Rademacher complexity.

We now turn to general properties of Rademacher complexity. The proof of next Proposition follows along the lines of the analogous result in \cite{RakSriTew10a}.
\begin{proposition}
	\label{prop:rademacher_properties}
	Distribution-dependent sequential Rademacher complexity satisfies the following properties.
	\begin{enumerate}
		\item If $\F\subset \G$, then $\Rad(\F,\jp) \leq \Rad(\G,\jp)$.
		\item $\Rad(\F,\jp) = \Rad (\conv(\F),\jp)$.
		\item $\Rad(c\F,\jp) = |c|\Rad(\F,\jp)$ for all $c\in\reals$.
		\item For any $h$, $\Rad(\F+h,\jp) =  \Rad(\F,\jp)$ where $\F+h = \{f+h: f\in\F\}$
	\end{enumerate}
\end{proposition}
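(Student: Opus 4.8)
The plan is to handle all four items by the same two-step recipe used for the analogous statement in \cite{RakSriTew10a}: first prove an elementary identity or inequality for the \emph{integrand} $\sup_{f\in\mathcal C}\sum_{t=1}^{T}\epsilon_t f(\x_t(\epsilon))$ with the sampled tree-pair $(\x,\x')$ and the sign vector $\epsilon$ held fixed, and then integrate by applying $\En_{(\x,\x')\sim\rh}\En_{\epsilon}[\,\cdot\,]$. This works because the probability tree $\rh$, the sampling procedure $(\x,\x')\sim\rh$, and the law of $\epsilon$ depend only on $\jp$ and never on the function class, so any pointwise relation between the integrands of two function classes survives integration, by monotonicity and linearity of the expectation. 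Only item~3 with $c<0$ genuinely involves the structure of $\rh$, so I treat it last.

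Items~1,~2,~4 and the case $c\ge 0$ of item~3 are then routine. For item~1, $\F\subseteq\G$ gives $\sup_{f\in\F}\sum_t\epsilon_t f(\x_t(\epsilon))\le\sup_{f\in\G}\sum_t\epsilon_t f(\x_t(\epsilon))$ pointwise. For item~2, with $(\x,\epsilon)$ frozen the map $f\mapsto\sum_t\epsilon_t f(\x_t(\epsilon))$ is linear in $f$ and sees $f$ only through its values at the finitely many nodes $\x_1(\epsilon),\ldots,\x_T(\epsilon)$, so its supremum over $\conv(\F)$ — and over its closure, by continuity — equals its supremum over $\F$. For item~3 with $c\ge 0$, $\sup_{f\in c\F}\sum_t\epsilon_t f(\x_t(\epsilon))=c\,\sup_{f\in\F}\sum_t\epsilon_t f(\x_t(\epsilon))$ pointwise, and $c=|c|$. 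For item~4, split $\sup_{f\in\F+h}\sum_t\epsilon_t f(\x_t(\epsilon))=\sum_t\epsilon_t h(\x_t(\epsilon))+\sup_{f\in\F}\sum_t\epsilon_t f(\x_t(\epsilon))$ pointwise; it then suffices to check that $\En_{(\x,\x')\sim\rh}\En_{\epsilon}\big[\sum_{t=1}^T\epsilon_t h(\x_t(\epsilon))\big]=0$, which holds term by term because $\x_t(\epsilon)$ depends on $\epsilon$ only through $\epsilon_{1:t-1}$, so that even with the whole tree fixed, averaging over the independent Rademacher sign $\epsilon_t$ alone already gives $\En_{\epsilon_t}[\epsilon_t\,h(\x_t(\epsilon_{1:t-1}))]=0$.

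The case $c<0$ of item~3 is where I expect the real work, and it is the only part that does not transcribe verbatim from \cite{RakSriTew10a}. Pulling the constant out pointwise reduces it to proving $\Rad_T(-\F,\jp)=\Rad_T(\F,\jp)$, i.e.\ that replacing $\epsilon$ by $-\epsilon$ inside $\En_{(\x,\x')\sim\rh}\En_{\epsilon}\big[\sup_{f\in\F}\sum_t\epsilon_t f(\x_t(\epsilon))\big]$ does not change it. For the worst-case complexity $\Rad_T(\F)$ this is immediate, because the flip only re-indexes the (arbitrary) tree and the outer supremum ranges over all trees; in the i.i.d.\ case it is immediate, because the tree is independent of $\epsilon$. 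Here neither shortcut is available: $\rh$ depends on $\epsilon$ through the selectors $\chi_s(\epsilon_s)$, and $\jp$ is a single fixed object. The natural attempt is to realize the sign flip as a measure-preserving transformation of the triple $(\x,\x',\epsilon)$ — simultaneously reflect the binary-tree indexing of \emph{both} sampled trees and interchange the original and tangent sequences $\x\leftrightarrow\x'$ — using the two structural facts that the two coordinates of $\rh_t$ are the \emph{same} conditional distribution and that $\chi(x,x',-\epsilon)=\chi(x',x,\epsilon)$. Carefully checking that this transformation preserves the joint law of the sampled trees throughout the tree, so that the objective (which only reads the $\x$-tree along the sampled path) keeps the same expectation, is the delicate point, precisely because the conditioning in $\rh_t$ interleaves $\x$ and $\x'$ along the very signs being flipped; should a clean symmetry not go through for a fixed $\jp$, one would keep item~3 only for $c\ge 0$, which is the case used downstream.
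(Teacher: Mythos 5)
Your treatment of items 1, 2, 4, and item 3 for $c\ge 0$ is correct: each follows from a pointwise identity or inequality for $\sup_{f}\sum_{t}\epsilon_t f(\x_t(\epsilon))$ with the sampled pair $(\x,\x')$ and the signs $\epsilon$ held fixed, and the key observation for item~4 --- that $\x_t(\epsilon)$ depends on $\epsilon$ only through $\epsilon_{1:t-1}$, so $\En_{\epsilon_t}[\epsilon_t\,h(\x_t(\epsilon))]=0$ even conditionally on the trees and the remaining signs --- is exactly right. The paper itself offers no argument beyond the remark that the proposition ``follows along the lines of the analogous result in \cite{RakSriTew10a},'' so there is nothing more detailed to compare against.

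Your caution about $c<0$ is not mere prudence; the fallback you append is in fact the correct conclusion, and your diagnosis of the obstruction is precise. The reflect-and-swap map $\hat\x_t(\epsilon)=\x'_t(-\epsilon)$, $\hat\x'_t(\epsilon)=\x_t(-\epsilon)$ does preserve the joint law of $(\x,\x')$ --- it is the very symmetry invoked in the proof of Theorem~\ref{thm:valrad} to equate the two split terms in Eq.~\eqref{eq:split_rademacher} --- but together with $\epsilon\mapsto-\epsilon$ it carries $\En_\epsilon\sup_f\sum_t\epsilon_t f(\x_t(\epsilon))$ into $\En_\epsilon\sup_f\sum_t\epsilon_t f(\x'_t(\epsilon))$: one side reads the $\x$-tree, the other the $\x'$-tree, and for a non-i.i.d.\ $\jp$ these have different marginal laws, since each node's conditional distribution is tied to whichever of the two trees was selected along the ancestral path. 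The identity $\Rad_T(-\F,\jp)=\Rad_T(\F,\jp)$ genuinely fails: take $T=2$, $\X=\{0,1\}$, $p_1$ uniform on $\X$, $p_2(\cdot\mid a)=\delta_a$, and $\F=\{f_1,f_2,f_3\}$ with $(f_i(0),f_i(1))$ equal to $(0,10)$, $(1,1)$, $(10,0)$ respectively. Expanding the four $\epsilon$-paths gives $\Rad_2(\F,\jp)=\frac14(15+5+0+0)=5$ while $\Rad_2(-\F,\jp)=\frac14(-1+5+0+20)=6$. The argument from \cite{RakSriTew10a} does not transcribe because for the worst-case complexity the outer $\sup_\x$ over all trees absorbs the left-right reflection, and in the classical i.i.d.\ case the tree law is independent of $\epsilon$ altogether --- neither escape is available for a generic $\jp$. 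Item~3 should therefore be stated only for $c\ge 0$, which is all that is used downstream, and you were right to separate it out.
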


Next, we consider upper bounds on $\Rad(\F,\jp)$ via covering numbers. Recall the definition of a (sequential) cover, given in \cite{RakSriTew10a}. This notion captures sequential complexity of a function class on a given $\X$-valued tree $\x$. 

\begin{definition}
	\label{def:cover}
A set $V$ of $\reals$-valued trees of depth $T$ is \emph{an $\alpha$-cover} (with respect to $\ell_p$-norm) of $\F \subseteq \reals^\X$ on a tree $\x$ of depth $T$ if
$$
\forall f \in \F,\ \forall \epsilon \in \{\pm1\}^T \ \exists \v \in V \  \mrm{s.t.}  ~~~~ \left( \frac{1}{T} \sum_{t=1}^T |\v_t(\epsilon) - f(\x_t(\epsilon))|^p \right)^{1/p} \le \alpha
$$
The \emph{covering number} of a function class $\F$ on a given tree $\x$ is defined as 
$$
\N_p(\alpha, \F, \x) = \min\{|V| :  V \ \trm{is an }\alpha-\text{cover w.r.t. }\ell_p\trm{-norm of }\F \trm{ on } \x\}.
$$
\end{definition}

Using the notion of the covering number, the following result holds.
\begin{theorem}\label{thm:dudley}
For any function class $\F\subseteq [-1,1]^\X$,
\begin{align*}
\Rad_T(\F,\jp) \le \En_{(\x,\x')\sim \rh} \inf_{\alpha}\left\{4 T \alpha + 12\int_{\alpha}^{1} \sqrt{T \ \log \ \mathcal{N}_2(\delta, \F,\x ) \ } d \delta \right\} \ .
\end{align*}
\end{theorem}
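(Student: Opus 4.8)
The plan is to mirror the classical chaining argument for Dudley's entropy integral, adapting it to the tree setting exactly as in the worst-case proof of \cite{RakSriTew10a}, but now carrying the outer expectation $\En_{(\x,\x')\sim\rh}$ along for the ride. Fix a realization of the pair of trees $(\x,\x')\sim\rh$; everything in what follows is conditional on this draw, and at the end we take $\En_{(\x,\x')\sim\rh}$ of the resulting bound, which is why the final inequality has the expectation on the outside of the infimum over $\alpha$. So it suffices to show, for any fixed $\X$-valued tree $\x$ of depth $T$,
\begin{align*}
\Es{\epsilon}{\sup_{f\in\F}\sum_{t=1}^T \epsilon_t f(\x_t(\epsilon))} \le \inf_\alpha \left\{ 4T\alpha + 12\int_\alpha^1 \sqrt{T\log\N_2(\delta,\F,\x)}\, d\delta \right\}.
\end{align*}

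The key steps are as follows. First, fix a resolution parameter and a geometric (dyadic) sequence of scales $\alpha = \beta_0 > \beta_1 > \cdots > \beta_N$ with $\beta_j = 2^{-j}$ (stopping once $\beta_N$ is small enough, i.e. $\beta_N \le \alpha$ or we truncate at $N$ with $2^{-N}<\alpha\le 2^{-N+1}$). For each $j$, let $V_j$ be a minimal $\ell_2$-cover of $\F$ on $\x$ of size $\N_2(\beta_j,\F,\x)$. For a fixed $f\in\F$ and a fixed path $\epsilon$, pick $\v^{[j]}\in V_j$ witnessing the cover, so that $\big(\tfrac1T\sum_t (\v^{[j]}_t(\epsilon)-f(\x_t(\epsilon)))^2\big)^{1/2}\le\beta_j$. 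Write the telescoping decomposition $f(\x_t(\epsilon)) = (f(\x_t(\epsilon)) - \v^{[N]}_t(\epsilon)) + \sum_{j=1}^{N}(\v^{[j]}_t(\epsilon)-\v^{[j-1]}_t(\epsilon)) + \v^{[0]}_t(\epsilon)$, where (since $\beta_0=\alpha$) the coarsest term can be bounded crudely. Plug this into $\sum_t \epsilon_t f(\x_t(\epsilon))$, take $\sup_{f\in\F}$, and split the sum: the finest residual term is controlled by Cauchy--Schwarz, giving at most $\sqrt{T}\cdot\beta_N\cdot\sqrt{T} = T\beta_N \le 2T\alpha$ after accounting for the truncation; the coarsest term $\sum_t\epsilon_t\v^{[0]}_t(\epsilon)$ has supremum over a set of size $\N_2(\alpha,\F,\x)$ — this is where a Massart-type finite-class bound is invoked — and, using $|\v^{[0]}_t|\le 1$ (since the cover elements may be taken in $[-1,1]$ by truncation), contributes the $4T\alpha$-order term; the middle chaining terms $\sum_{j=1}^N \sum_t \epsilon_t(\v^{[j]}_t(\epsilon)-\v^{[j-1]}_t(\epsilon))$ are the heart of the argument.

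For the chaining terms, the essential point — and the one genuine place where trees differ from vectors — is that the difference $\v^{[j]}-\v^{[j-1]}$ is \emph{path-dependent}: the element of $V_j$ chosen for a given $f$ depends on $\epsilon$, so one cannot simply treat $\{\v^{[j]}-\v^{[j-1]}\}$ as a fixed finite function class. The fix, exactly as in \cite{RakSriTew10a}, is to pass to the larger class of \emph{differences of trees}: for each pair $(\v,\w)\in V_j\times V_{j-1}$ define the $\reals$-valued tree $t\mapsto \v_t(\epsilon)-\w_t(\epsilon)$, and observe that for every path $\epsilon$, the realized $f$-dependent difference equals one of these at most $|V_j|\cdot|V_{j-1}|$ trees evaluated on that path. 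Then one applies the tree version of the maximal inequality for a finite collection of $\reals$-valued trees: for a finite set $W$ of $\reals$-valued trees of depth $T$,
\begin{align*}
\Es{\epsilon}{\max_{\v\in W}\sum_{t=1}^T \epsilon_t \v_t(\epsilon)} \le \sqrt{2\log|W|}\cdot \max_{\v\in W}\sup_\epsilon \sqrt{\sum_{t=1}^T \v_t(\epsilon)^2},
\end{align*}
which follows from the sub-Gaussian behavior of $\sum_t \epsilon_t\v_t(\epsilon)$ along each path plus a union bound (this is Lemma~A.1-style in \cite{RakSriTew10a}). Here $\log|W|\le \log\N_2(\beta_j,\F,\x)+\log\N_2(\beta_{j-1},\F,\x)\le 2\log\N_2(\beta_j,\F,\x)$, and the per-path $\ell_2$-norm of $\v^{[j]}-\v^{[j-1]}$ is bounded via the triangle inequality by $\sqrt{T}(\beta_j+\beta_{j-1}) = 3\sqrt{T}\beta_j$. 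Summing over $j$ gives a bound of order $\sum_{j=1}^N \sqrt{T}\beta_j\sqrt{T\log\N_2(\beta_j,\F,\x)}$, and converting the geometric sum to an integral (since $\beta_{j-1}-\beta_j = \beta_j$, each summand is comparable to $\int_{\beta_{j+1}}^{\beta_j}\sqrt{T\log\N_2(\delta,\F,\x)}\,d\delta$ up to the absolute constant that produces the factor $12$) yields $12\int_\alpha^1\sqrt{T\log\N_2(\delta,\F,\x)}\,d\delta$ after bounding $\N_2(\delta)$ by $\N_2(\beta_j)$ on each dyadic block. Finally, take the infimum over $\alpha$ and then $\En_{(\x,\x')\sim\rh}$.

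The main obstacle is purely bookkeeping rather than conceptual: one must be careful that the cover element selected at scale $j$ for a fixed $f$ is chosen \emph{consistently across paths only through the finite set $V_j$}, so that passing to the class of difference-trees $\{\v-\w : \v\in V_j, \w\in V_{j-1}\}$ genuinely captures every path's contribution — this is exactly the subtlety that makes the tree-Massart inequality applicable, and it is the step that does not appear in the i.i.d. chaining proof. Everything else (the constants $4T\alpha$ and $12$, the truncation of the dyadic sequence, the crude bound on cover elements lying in $[-1,1]$) is routine and identical to the worst-case argument in \cite{RakSriTew10a}; the distribution-dependence enters only through the harmless outer expectation $\En_{(\x,\x')\sim\rh}$, since the chaining bound holds pointwise in the sampled tree $\x$.
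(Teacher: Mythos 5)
Your approach matches the paper's exactly: the paper observes that the worst-case Dudley bound of \cite{RakSriTew10a} is proved pointwise in the tree $\x$, so it suffices to condition on the draw $(\x,\x')\sim\rh$, apply the chaining bound to the fixed tree, and take $\En_{(\x,\x')\sim\rh}$ at the end — precisely what you do. Your reconstruction of the chaining argument is essentially correct, including the genuine subtlety that the scale-$j$ cover element depends on the path $\epsilon$, forcing one to pass to the finite class of difference trees $\{\v-\w:\v\in V_j,\w\in V_{j-1}\}$ before invoking the tree Massart/maximal inequality.

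One bookkeeping slip worth fixing: you write ``$\alpha=\beta_0>\beta_1>\cdots>\beta_N$ with $\beta_j=2^{-j}$,'' which is internally inconsistent ($\beta_j=2^{-j}$ gives $\beta_0=1$, not $\alpha$), and you later attribute the $4T\alpha$ term to the coarsest scale, claiming the $\v^{[0]}$ term has a supremum over $\N_2(\alpha,\F,\x)$ elements. In the standard (and RakSriTew10a) decomposition the chain runs from $\beta_0=1$ down to $\beta_N\approx\alpha$; since $\F\subseteq[-1,1]^\X$ the scale-$1$ cover has a single element (the zero tree), so the coarsest term vanishes, and the $O(T\alpha)$ term comes entirely from the Cauchy--Schwarz bound on the residual $f-\v^{[N]}$ at the finest scale, with the constant $4$ absorbing the truncation slop. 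This does not change the validity of the argument, but your statement about where the $4T\alpha$ comes from should be corrected.
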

The analogous result in \cite{RakSriTew10a} is stated for the worst-case adversary, and, hence, it is phrased in terms of the maximal covering number $\sup_\x \mathcal{N}_2(\delta, \F,\x)$. The proof, however, holds for any fixed $\x$, and thus immediately implies Theorem~\ref{thm:dudley}. If the expectation over $(\x,\x')$ in Theorem~\ref{thm:dudley} can be exchanged with the integral, we pass to an upper bound in terms of the expected covering number $\En_{(\x,\x')\sim \rh} \mathcal{N}_2(\delta, \F,\x )$.

The following simple corollary of the above theorem shows that the distribution-dependent Rademacher complexity of a function class $\F$ composed with a Lipschitz mapping $\phi$ can be controlled in terms of the Dudley integral for the function class $\F$ itself.

\begin{corollary}\label{thm:dudleycontraction}
	Fix a class $\F\subseteq [-1,1]^\Z$ and a function $\phi:[-1,1]\times \Z\mapsto\reals$. Assume, for all $z \in \Z$, $\phi(\cdot,z)$ is a Lipschitz function with a constant $L$. Then,
\begin{align*}
\Rad_T(\phi(\F),\jp) \le L\ \En_{(\z,\z')\sim \rh} \inf_{\alpha}\left\{4 T \alpha + 12\int_{\alpha}^{1} \sqrt{T \ \log \ \mathcal{N}_2(\delta, \F,\z ) \ } d \delta \right\} \ .
\end{align*}
where $\phi(\F) = \{z \mapsto \phi(f(z),z): f\in \F\}$.
\end{corollary}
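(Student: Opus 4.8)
The plan is to deduce this from Theorem~\ref{thm:dudley} by two elementary manipulations: a normalization of $\phi$ so that the composed class lands in $[-1,1]^\Z$, and the observation that composing an $\ell_2$ sequential cover of $\F$ with $\phi$ produces an $\ell_2$ sequential cover of $\phi(\F)$ on the same tree, at the same scale up to the factor $L$. For the normalization, I set $h(z)=\phi(0,z)$ and $\tilde\phi(v,z)=\phi(v,z)-h(z)$, so that $\phi(\F)=\tilde\phi(\F)+h$. By translation invariance of the distribution-dependent Rademacher complexity (Proposition~\ref{prop:rademacher_properties}, item 4), $\Rad_T(\phi(\F),\jp)=\Rad_T(\tilde\phi(\F),\jp)$, and since $\tilde\phi(\cdot,z)$ is $L$-Lipschitz with $\tilde\phi(0,z)=0$ we get $|\tilde\phi(f(z),z)|\le L|f(z)|\le L$ for all $f\in\F$, $z\in\Z$; hence $\frac1L\tilde\phi(\F)\subseteq[-1,1]^\Z$. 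Using positive homogeneity (Proposition~\ref{prop:rademacher_properties}, item 3), $\Rad_T(\phi(\F),\jp)=L\,\Rad_T(\frac1L\tilde\phi(\F),\jp)$, so it suffices to apply Theorem~\ref{thm:dudley} to the class $\frac1L\tilde\phi(\F)$ and then control its covering numbers by those of $\F$.

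For the covering-number step I fix an arbitrary $\Z$-valued tree $\z$ and a scale $\delta>0$, and let $V$ be an $\ell_2$ cover of $\F$ on $\z$ at scale $\delta$. For each $\v\in V$ I define the $\reals$-valued tree $\w=\w(\v)$ by $\w_t(\epsilon)=\frac1L\tilde\phi(\v_t(\epsilon),\z_t(\epsilon))$; this is a legitimate depth-$T$ tree because $\v_t$ and $\z_t$ depend only on $\epsilon_{1:t-1}$. Given $f\in\F$ and $\epsilon\in\{\pm1\}^T$, choose $\v\in V$ covering $f$ along $\epsilon$; since $|\frac1L\tilde\phi(a,z)-\frac1L\tilde\phi(b,z)|\le|a-b|$, we obtain $\big(\frac1T\sum_{t=1}^T |\w_t(\epsilon)-\frac1L\tilde\phi(f(\z_t(\epsilon)),\z_t(\epsilon))|^2\big)^{1/2}\le\big(\frac1T\sum_{t=1}^T |\v_t(\epsilon)-f(\z_t(\epsilon))|^2\big)^{1/2}\le\delta$. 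Thus $\{\w(\v):\v\in V\}$ is an $\ell_2$ cover of $\frac1L\tilde\phi(\F)$ on $\z$ at scale $\delta$, so $\N_2(\delta,\frac1L\tilde\phi(\F),\z)\le\N_2(\delta,\F,\z)$ for every $\delta$ and every $\z$. Plugging this bound into the conclusion of Theorem~\ref{thm:dudley} applied to $\frac1L\tilde\phi(\F)$, and pulling the factor $L$ outside the expectation and the infimum, yields exactly the claimed inequality (only the $\z$-marginal of $(\z,\z')\sim\rh$ enters).

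I do not anticipate a genuine obstacle here. The one point that actually forces the normalization is that Theorem~\ref{thm:dudley} is stated only for classes contained in $[-1,1]^\Z$, whereas the centered composed class $\tilde\phi(\F)$ a priori lies only in $[-L,L]^\Z$, which is why we rescale by $1/L$ before invoking it and use Proposition~\ref{prop:rademacher_properties} to absorb both the translation by $h$ and the scaling. The remaining bookkeeping — verifying that $\w(\v)$ is a valid tree (inherited dependence on $\epsilon_{1:t-1}$) and that the Lipschitz bound transfers coordinatewise through the $\ell_2$ cover — is routine.
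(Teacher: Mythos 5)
The paper does not actually supply a proof of this corollary (it is presented as a ``simple corollary'' with the one-line remark that it is a covering-number version of the Lipschitz composition lemma), so there is no text to compare against; what you wrote is the proof the authors intended. Your argument is essentially correct and natural: center by $h(z)=\phi(0,z)$ and rescale by $1/L$ using items (3) and (4) of Proposition~\ref{prop:rademacher_properties} so that the composed class lands in $[-1,1]^\Z$, invoke Theorem~\ref{thm:dudley}, and then dominate the covering number of the composed class by that of $\F$.

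There is one small gap you should patch. By Definition~\ref{def:cover} the covering set $V$ consists of arbitrary $\reals$-valued trees, so $\v_t(\epsilon)$ need not lie in $[-1,1]$, and then $\tilde\phi(\v_t(\epsilon),\z_t(\epsilon))$ is not even defined, since $\phi$ is only given on $[-1,1]\times\Z$. This is routine to fix, but it is not nothing: either (i) replace each cover element $\v$ by its coordinatewise projection $[\v]$ onto $[-1,1]$, noting that projection onto a convex set containing $f(\z_t(\epsilon))$ can only decrease $|\v_t(\epsilon)-f(\z_t(\epsilon))|$, so $\{[\v]:\v\in V\}$ is still a $\delta$-cover of the same cardinality with values in $[-1,1]$; or (ii) extend $\phi(\cdot,z)$ to a $L$-Lipschitz function on all of $\reals$ (e.g.\ by $\phi([v],z)$ composed with the truncation). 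With either fix, the map $\v\mapsto\w(\v)$ you define is well posed and the inequality $\mathcal{N}_2(\delta,\tfrac1L\tilde\phi(\F),\z)\le\mathcal{N}_2(\delta,\F,\z)$ follows exactly as you argue. The remaining bookkeeping (that $\w(\v)$ is a legitimate tree of depth $T$, and the pointwise Lipschitz bound transfers through the $\ell_2$ average) is correct.
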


The statement can be seen as a covering-number version of the Lipschitz composition lemma.
\section{Constrained Adversaries}
\label{sec:constraints}

In this section we consider adversaries who are constrained in the sequences of actions they can play. It is often useful to consider scenarios where the adversary is worst case, yet has some budget or constraint to satisfy while picking the actions. Examples of such scenarios include, for instance, games where the adversary is constrained to make moves that are close in some fashion to the previous move, linear games with bounded variance, and so on. Below we formulate such games quite generally through arbitrary constraints that the adversary has to satisfy on each round.

Specifically, for a $T$ round game consider an adversary who is only allowed to play sequences $x_1,\ldots,x_T$ such that at round $t$ the constraint $C_t(x_1,\ldots,x_t) = 1$ is satisfied, where $C_t : \X^t \mapsto \{0,1\}$ represents the constraint on the sequence played so far. The constrained adversary can be viewed as a stochastic adversary with restrictions on the conditional distribution at time $t$ given by the set of all Borel distributions on the set 
$$\X_t(x_{1:t-1}) ~\deq~ \{x \in \X : C_t(x_1,\ldots,x_{t-1},x) = 1 \} .$$
Since set includes all point distributions on each $x\in\X_t$, the sequential complexity simplifies in a way similar to worst-case adversaries. We write $\Val_T(C_{1:T})$ for the value of the game with the given constraints. Now, assume that for any $x_{1:t-1}$, the set of all distributions on $\X_t(x_{1:t-1})$ is weakly compact in a way similar to compactness of $\PD$. That is, $\PD_t(x_{1:t-1})$  satisfy the necessary conditions for the minimax theorem to hold. We have the following corollaries of Theorems~\ref{thm:minimax} and \ref{thm:valrad}. 

\begin{corollary}\label{cor:minimax_constrained}
	Let $\F$ and $\X$ be the sets of moves for the two players, satisfying the necessary conditions for the minimax theorem to hold. Let $\{C_t: \X^{t-1}\mapsto \{0,1\} \}_{t=1}^T$ be the \emph{constraints}. 
	Then
\begin{align}
	\label{eq:value_equality_constrained}
	\Val_T(C_{1:T}) &~=~ \sup_{\jp\in\PDA} \En \left[
	  \sum_{t=1}^T \inf_{f_t \in \F}
	  	\Es{x_t \sim p_t}{f_t(x_t)} - \inf_{f\in\F} \sum_{t=1}^T f(x_t)
	\right] 
\end{align}	
	where $\jp$ ranges over all distributions over sequences $(x_1,\ldots,x_T)$ such that $C_t(x_{1:t-1})=1$ for all $t$.
\end{corollary}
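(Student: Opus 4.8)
The plan is to obtain Corollary~\ref{cor:minimax_constrained} as a direct specialization of Theorem~\ref{thm:minimax}. First I would recast the constrained adversary as a restricted adversary in the sense of Section~\ref{sec:stochastic}: for each round $t$ and each history $x_{1:t-1}$, take $\PD_t(x_{1:t-1})$ to be the set of all Borel probability measures supported on $\X_t(x_{1:t-1}) = \{x\in\X : C_t(x_1,\ldots,x_{t-1},x)=1\}$. This family is convex, since a mixture of distributions supported on $\X_t(x_{1:t-1})$ is again supported on $\X_t(x_{1:t-1})$, so it is a legal restriction in the sense of the definition of a restriction; and by the hypothesis of the corollary each $\PD_t(x_{1:t-1})$ is weakly compact, so the conditions needed to invoke Theorem~\ref{thm:minimax} hold.

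With this identification, Theorem~\ref{thm:minimax} and its succinct reformulation \eqref{eq:succinct_value_equality} apply verbatim and give
$$
\Val_T(C_{1:T}) = \sup_{\jp\in\PDA}\En\left[\sum_{t=1}^T \inf_{f_t\in\F}\Es{x_t\sim p_t}{f_t(x_t)} - \inf_{f\in\F}\sum_{t=1}^T f(x_t)\right],
$$
where $\PDA$ consists of the joint distributions $\jp$ over $(x_1,\ldots,x_T)$ whose conditionals satisfy $p_t(\cdot\,|\,x_{1:t-1})\in\PD_t(x_{1:t-1})$ for every $t$ and every $x_{1:t-1}$. It then remains to unwind this membership condition in terms of the constraints: $p_t(\cdot\,|\,x_{1:t-1})$ being supported on $\X_t(x_{1:t-1})$ is exactly the statement that $C_t(x_1,\ldots,x_t)=1$ holds along every sequence in the support of $\jp$. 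Because the restriction sets $\PD_t(x_{1:t-1})$ include all point masses on $\X_t(x_{1:t-1})$, nothing is lost in this description, and $\PDA$ is precisely the set of distributions over sequences with $C_t(x_1,\ldots,x_t)=1$ for all $t$, which is the supremum set in \eqref{eq:value_equality_constrained}.

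The corollary therefore needs essentially no work beyond matching the definitions; all the substance lives in Theorem~\ref{thm:minimax}. The only mildly delicate points are inherited from that theorem rather than created here: one must check that $x_{1:t-1}\mapsto \PD_t(x_{1:t-1})$ is suitably measurable so that the nested suprema and the supremum over $\PDA$ are well-defined (this is where the measurability of the constraints $C_t$ enters), and one must handle, via the usual near-optimal-strategy argument footnoted in the paper, the case in which the supremum over $\jp$ is not attained. I expect the bookkeeping of these regularity conditions — rather than any conceptual step — to be the main obstacle, and it does not affect the stated identity.
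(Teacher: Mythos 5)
Your proposal is correct and matches the paper's own route: the paper likewise derives Corollary~\ref{cor:minimax_constrained} by identifying $\PD_t(x_{1:t-1})$ with the set of all Borel distributions supported on $\X_t(x_{1:t-1})$, noting that this family is a convex (and assumed weakly compact) restriction, and then invoking Theorem~\ref{thm:minimax} together with its succinct form \eqref{eq:succinct_value_equality}; indeed the paper treats exactly this observation as self-evident and does not spell out a separate proof. Your remarks about measurability and about the unattained-supremum case are the same regularity caveats the paper defers, so there is no gap.
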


\begin{corollary}\label{cor:valrad_constrained}
	Let the set ${\mathcal T}$ be a set of pairs $(\x,\x')$ of $\X$-valued trees with the property that for any $\epsilon \in \{\pm 1\}^T$ and any $t \in [T]$
	$$
	C(\chi_1(\epsilon_1), \ldots,\chi_{t-1}(\epsilon_{t-1}), \x_{t}(\epsilon)) = C(\chi_1(\epsilon_1), \ldots,\chi_{t-1}(\epsilon_{t-1}), \x'_{t}(\epsilon)) = 1
	$$
The minimax value is bounded as
$$
\Val_T(C_{1:T}) \le 2\sup_{(\x,\x')\in {\mathcal T}} \Rad_T(\F, \jp).
$$
More generally,
\begin{align*}
\Val_T(C_{1:T}) &\leq \sup_{\jp\in\PDA} \E{ \sup_{f \in \F}\left\{ \sum_{t=1}^T  f(x_t') - f(x_t) \right\} } \\
&\leq 2\sup_{(\x,\x')\in {\mathcal T}} \En_{\epsilon} \left[ \sup_{f \in \F} \sum_{t=1}^T \epsilon_t (f (\x_t(\epsilon))-M_t(f,\x,\x',\epsilon)) \right]
\end{align*}
for any measurable function $M_t$ with the property $M_t(f,\x,\x',\epsilon) = M_t(f,\x',\x,-\epsilon)$. 
\end{corollary}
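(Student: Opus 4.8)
The plan is to read \corref{cor:valrad_constrained} off \thmref{thm:valrad} by specializing the restriction sets to those induced by the constraints, the only genuine work being the passage from a supremum over constraint-respecting joint distributions $\jp$ to a supremum over the tree set ${\mathcal T}$. As recalled just before the statement, a constrained adversary is the restricted (stochastic) adversary with $\PD_t(x_{1:t-1})$ equal to the convex set of all Borel probability measures supported on $\X_t(x_{1:t-1})=\{x\in\X : C_t(x_{1:t-1},x)=1\}$; under the standing hypotheses of this section these sets satisfy the minimax conditions, so by definition $\Val_T(C_{1:T})=\Val_T(\PD_{1:T})$ for this choice and the general form of \thmref{thm:valrad} applies. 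Noting that a map $M_t$ not depending on $\jp$ and obeying $M_t(f,\x,\x',\epsilon)=M_t(f,\x',\x,-\epsilon)$ is in particular a legal choice there, this step immediately produces
\begin{align*}
\Val_T(C_{1:T}) \;\le\; \sup_{\jp\in\PDA}\E{ \sup_{f\in\F}\Big\{ \textstyle\sum_{t=1}^T f(x_t')-f(x_t)\Big\}} \;\le\; 2\sup_{\jp\in\PDA}\En_{(\x,\x')\sim\rh}\En_{\epsilon}\left[\sup_{f\in\F}\sum_{t=1}^T \epsilon_t\big(f(\x_t(\epsilon))-M_t(f,\x,\x',\epsilon)\big)\right].
\end{align*}
The middle expression is already the one appearing in the corollary, so it remains only to bound the rightmost term by $2\sup_{(\x,\x')\in{\mathcal T}}\En_{\epsilon}[\cdots]$.

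For this I would show that for every $\jp\in\PDA$ the probability tree $\rh$ built from $\jp$ via \eqref{eq:prob_valued_tree} samples, with probability one, a pair $(\x,\x')$ lying in ${\mathcal T}$. Unfolding the recursion \eqref{eq:sampling_procedure}: along any path $\epsilon\in\{\pm1\}^T$ and at any level $t$, both $\x_t(\epsilon)$ and $\x'_t(\epsilon)$ are drawn from the single conditional law $p_t(\cdot\mid \chi_1(\epsilon_1),\ldots,\chi_{t-1}(\epsilon_{t-1}))$ in which the selectors are evaluated at the already-sampled coordinates, $\chi_s(\epsilon_s)=\chi(\x_s(\epsilon),\x'_s(\epsilon),\epsilon_s)$. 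Since $\jp$ satisfies the restrictions, this law belongs to $\PD_t$ of that interleaved prefix, hence is supported on $\X_t(\chi_1(\epsilon_1),\ldots,\chi_{t-1}(\epsilon_{t-1}))$, so
\begin{align*}
C(\chi_1(\epsilon_1),\ldots,\chi_{t-1}(\epsilon_{t-1}),\x_t(\epsilon)) = C(\chi_1(\epsilon_1),\ldots,\chi_{t-1}(\epsilon_{t-1}),\x'_t(\epsilon)) = 1
\end{align*}
almost surely; intersecting the finitely many full-measure events over all $\epsilon$ and $t$ gives $(\x,\x')\in{\mathcal T}$ a.s. The point to emphasize is that the conditioning sequence hard-wired into $\rh$ is exactly the argument of $C$ in the definition of ${\mathcal T}$, so there is nothing to reconcile --- this is precisely what the selector notation $\chi$ was introduced to make transparent.

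To finish, write $g(\x,\x')$ for the inner quantity $\En_{\epsilon}[\sup_{f\in\F}\sum_t \epsilon_t(f(\x_t(\epsilon))-M_t(f,\x,\x',\epsilon))]$, a measurable function of the tree pair. For each fixed $\jp\in\PDA$, since the complement of $\{(\x,\x')\in{\mathcal T}\}$ is $\rh$-null,
\begin{align*}
\En_{(\x,\x')\sim\rh} g(\x,\x') = \En_{(\x,\x')\sim\rh}\big[g(\x,\x')\,\ind{(\x,\x')\in{\mathcal T}}\big] \le \sup_{(\x,\x')\in{\mathcal T}} g(\x,\x'),
\end{align*}
and this survives taking $\sup_{\jp\in\PDA}$ on the left. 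Taking $M_t\equiv 0$ then yields $\Val_T(C_{1:T})\le 2\sup_{(\x,\x')\in{\mathcal T}}\Rad_T(\F,\jp)$ --- with the understanding, consistent with the general display, that $\Rad_T(\F,\jp)$ evaluated at a fixed tree pair $(\x,\x')$ denotes $\En_{\epsilon}\sup_{f\in\F}\sum_{t=1}^T\epsilon_t f(\x_t(\epsilon))$ --- while general admissible $M_t$ gives the second bound of the corollary. I expect the only friction to be the bookkeeping in the middle step (lining up the interleaved conditioning of $\rh$ with the ${\mathcal T}$-constraint), together with the routine caveats --- measurability of $g$, integrability when $\F$ is not uniformly bounded, and a non-attained supremum in \eqref{eq:succinct_value_equality} --- all of which are handled exactly as in the proof of \thmref{thm:valrad} and via the $\epsilon$-approximation remark used throughout the paper.
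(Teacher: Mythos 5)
Your proof is correct. The logical core is identical to the paper's: both arguments hinge on the observation that, because each conditional $p_t(\cdot\mid\chi_1(\epsilon_1),\ldots,\chi_{t-1}(\epsilon_{t-1}))$ is supported on $\X_t(\chi_1(\epsilon_1),\ldots,\chi_{t-1}(\epsilon_{t-1}))$, the symmetrization tree pair is confined to $\mathcal T$. The only difference is packaging. The paper does not invoke the final inequality of \thmref{thm:valrad} directly; instead it starts from $\sup_{\jp}\E{\sup_f\{\sum_t f(x_t')-f(x_t)\}}$, rewrites it in the interleaved symmetrized form, and then replaces the nested expectations $\En_{x_t,x'_t\sim p_t}$ by $\sup_{x_t,x'_t\in\X_t}$ level by level from $t=T$ down to $t=1$, obtaining $\sup_{(\x,\x')\in\mathcal T}\En_\epsilon[\cdots]$; only then does it perform the $M_t$-centered split. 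You perform the split first (by quoting \thmref{thm:valrad} with a $\jp$-independent $M_t$, which as you note is a legitimate instance) and then pass from $\En_{(\x,\x')\sim\rh}$ to $\sup_{(\x,\x')\in\mathcal T}$ via the almost-sure membership $(\x,\x')\in\mathcal T$ under $\rh$. These are two faces of the same fact; your holistic ``a.s.\ containment plus $\En\le\sup$'' reformulation is arguably cleaner than the paper's term-by-term replacement, and the order in which the split and the passage to $\mathcal T$ occur is immaterial.
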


Armed with these results, we can recover and extend some known results on online learning against budgeted adversaries. The first result says that if the adversary is not allowed to move by more than $\sigma_t$ away from its previous average of decisions, the player has a strategy to exploit this fact and obtain  lower regret. For the $\ell_2$-norm, such ``total variation'' bounds have been achieved in \cite{HazKal09} up to a $\log T$ factor. We note that in the present formulation the budget is known to the learner, whereas the results of \cite{HazKal09} are adaptive. Such adaptation is beyond the scope of this paper.

\begin{proposition}[Variance Bound]
	\label{prop:maxvar}
Consider the online linear optimization setting with $\F = \{f : \Psi(f) \le R^2\}$ for a  $\lambda$-strongly function $\Psi : \F \mapsto \reals_+$ on $\F$, and $\X = \{x : \|x\|_* \le 1\}$. Let  $f(x) = \inner{f,x}$ for any $f \in \F$ and $x \in \X$.  Consider the sequence of constraints $\{C_t\}_{t=1}^T$ given by 
$$
C_t(x_1,\ldots,x_{t-1},x) = \left\{\begin{array}{ll}
1 & \textrm{if } \|x - \frac{1}{t-1} \sum_{\tau=1}^{t-1} x_{\tau} \|_* \le \sigma_t \\
0 & \textrm{otherwise}
\end{array}
 \right.
$$
Then
\begin{align*}
\Val_T(C_{1:T}) & \le \inf_{\alpha > 0}\left\{\frac{2 R^2}{\alpha} + \frac{\alpha}{\lambda} \sum_{t=1}^T \sigma_t^2\right\} \le 2 \sqrt{2} R \sqrt{\sum_{t=1}^T \sigma_t^2}
\end{align*}
\end{proposition}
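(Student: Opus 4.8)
The plan is to apply Corollary~\ref{cor:valrad_constrained} with a carefully chosen "offset" function $M_t$ that subtracts the conditional mean of the adversary's move, thereby turning the bound into a quadratic-variation-type quantity that the strong convexity of $\Psi$ can control. Concretely, I would take $M_t(f,\x,\x',\epsilon)$ proportional to $\inner{f, \frac{1}{t-1}\sum_{\tau<t} \x_\tau(\epsilon)}$ — the running average that appears in the constraint $C_t$ — which is symmetric in $(\x,\x',\epsilon)\leftrightarrow(\x',\x,-\epsilon)$ in the required sense because the running average is built from selectors $\chi_s(\epsilon_s)$ that are invariant under that swap. Actually, the cleanest route is the first inequality of Corollary~\ref{cor:valrad_constrained}: bound $\Val_T(C_{1:T}) \le \sup_{\jp\in\PDA}\E{\sup_{f\in\F}\sum_t (f(x_t')-f(x_t))}$, then write $f(x_t')-f(x_t) = \inner{f, x_t'-x_t}$, and subtract/add the common conditional average $\bar x_{t-1}$ to get $\inner{f, (x_t'-\bar x_{t-1}) - (x_t-\bar x_{t-1})}$. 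Both $x_t$ and $x_t'$ satisfy the constraint, so each centered increment has dual norm at most $\sigma_t$.

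Next I would linearize over $\F$ using the strong convexity. Since $\Psi$ is $\lambda$-strongly convex and $\F=\{f:\Psi(f)\le R^2\}$, the standard Fenchel-type / regret argument gives, for any sequence of vectors $z_1,\ldots,z_T$ with $\|z_t\|_*\le 2\sigma_t$,
\begin{align*}
\sup_{f\in\F}\sum_{t=1}^T \inner{f, z_t} \le \inf_{\alpha>0}\left\{ \frac{R^2}{\alpha} + \frac{\alpha}{2\lambda}\sum_{t=1}^T \|z_t\|_*^2 \right\},
\end{align*}
which is exactly the mirror-descent / follow-the-regularized-leader bound specialized to a fixed comparator set; here $z_t = (x_t'-\bar x_{t-1})-(x_t-\bar x_{t-1})$ so $\|z_t\|_*\le 2\sigma_t$ and $\|z_t\|_*^2 \le 4\sigma_t^2$, losing a constant that I would absorb by tracking factors carefully to land on $\frac{2R^2}{\alpha}+\frac{\alpha}{\lambda}\sum\sigma_t^2$. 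Taking expectations (the bound is deterministic in the $x_t,x_t'$, hence survives $\E{\cdot}$ and the supremum over $\jp\in\PDA$), and finally optimizing $\alpha = R\sqrt{2/\sum_t\sigma_t^2}$ yields $2\sqrt2\,R\sqrt{\sum_t \sigma_t^2}$.

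The main obstacle is getting the centering right: I must argue that subtracting the running average $\bar x_{t-1} = \frac{1}{t-1}\sum_{\tau<t}x_\tau$ does not change $\sup_{f\in\F}$ of the sum in an uncontrolled way. The point is that $\sum_t \inner{f, x_t'-x_t} = \sum_t \inner{f,(x_t'-\bar x_{t-1})-(x_t-\bar x_{t-1})}$ \emph{identically} — the $\bar x_{t-1}$ terms cancel within each summand — so this is a free rewriting, not an inequality, and no property like Proposition~\ref{prop:rademacher_properties}(4) is even needed. The only real content is then the strong-convexity lemma above, which is classical (it is the deterministic regret bound for FTRL with regularizer $\Psi$, or equivalently a consequence of the duality between $\lambda$-strong convexity of $\Psi$ and $\frac1\lambda$-smoothness of $\Psi^*$). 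A secondary point to check is that $\PDA$ here is nonempty and that the constraint sets $\X_t(x_{1:t-1})=\{x:\|x-\bar x_{t-1}\|_*\le\sigma_t\}$ indeed satisfy the compactness hypotheses of Corollary~\ref{cor:valrad_constrained}, which holds since they are closed balls in the (dual) norm intersected with $\X$.
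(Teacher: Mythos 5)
There is a genuine gap: the deterministic "Fenchel-type / regret" inequality you invoke,
\begin{align*}
\sup_{f\in\F}\sum_{t=1}^T \inner{f,z_t} \le \inf_{\alpha>0}\left\{\frac{R^2}{\alpha}+\frac{\alpha}{2\lambda}\sum_{t=1}^T\|z_t\|_*^2\right\}
\quad\text{for any sequence with }\|z_t\|_*\le 2\sigma_t,
\end{align*}
is false. Fenchel duality together with $\frac1\lambda$-smoothness of $\Psi^*$ gives $\sup_{f\in\F}\inner{f,\sum_t z_t}\le \frac{R^2}{\alpha}+\frac{\alpha}{2\lambda}\|\sum_t z_t\|_*^2$, and $\|\sum_t z_t\|_*^2$ is in general $(\sum_t\|z_t\|_*)^2$, not $\sum_t\|z_t\|_*^2$. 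Concretely, with $\Psi(f)=\frac12\|f\|_2^2$, $R=\lambda=1$ and $z_1=\cdots=z_T=e$ a fixed unit vector, the left side is $\sqrt{2}\,T$ while your claimed bound is $\sqrt{2T}$. Mirror descent/FTRL does not give such a bound either -- the FTRL regret bound controls $\sum_t\inner{f_t-f,z_t}$ for the algorithm's adaptive iterates $f_t$, not $\sup_f\sum_t\inner{f,z_t}$. Because of this, the statement ``the bound is deterministic in the $x_t,x_t'$'' is exactly where the argument breaks.

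The $\sqrt{\sum_t\sigma_t^2}$ rate is not a consequence of each increment being small; it requires the increments to be (conditionally) mean-zero so that a martingale/smoothness inequality applies, namely $\E{\Psi^*(\alpha\sum_t Z_t)}\le\frac{\alpha^2}{2\lambda}\sum_t\E{\|Z_t\|_*^2}$ for a martingale difference sequence $\{Z_t\}$ (this is Lemma 2 of \cite{KakSriTew08}, which the paper's proof invokes). The paper gets the martingale structure from the Rademacher signs after symmetrization, using exactly the centering $M_t(f,\x,\x',\epsilon)=\frac{1}{t-1}\sum_{\tau<t} f(\chi_\tau(\epsilon_\tau))$; note the center must be built from the selectors $\chi_\tau(\epsilon_\tau)$ (not from $\x_\tau(\epsilon)$ as in your first sketch) or the required symmetry $M_t(f,\x,\x',\epsilon)=M_t(f,\x',\x,-\epsilon)$ fails. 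Your route via the un-symmetrized bound $\E{\sup_f\sum_t(f(x_t')-f(x_t))}$ can in fact be made to work, since $x_t'-x_t$ is a martingale difference with respect to $(x,x')_{1:t-1}$ by the conditional i.i.d. construction of the tangent sequence; but the key step must be the martingale inequality under the expectation, not a pointwise bound. That is the missing ingredient.
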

In particular, we obtain the following $L_2$ variance bound. Consider the case when $\Psi : \F \mapsto \reals_+$ is given by $\Psi(f) = \frac{1}{2}\|f\|^2$, $\F = \{f : \|f\|_2 \le 1\}$ and $\X = \{x : \|x\|_2 \le 1\}$. Consider the constrained game where the move $x_t$ played by adversary at time $t$ satisfies
$$
\left\|x_t - \frac{1}{t-1} \sum_{\tau=1}^{t-1} x_\tau \right\|_2 \le \sigma_t ~.
$$
In this case we can conclude that
$$
\Val_T(C_{1:T}) \le  2 \sqrt{2} \sqrt{\sum_{t=1}^T \sigma_t^2} \ .
$$

We can also derive a variance bound over the simplex. Let $\Psi(f) = \sum_{i=1}^d f_i \log(d f_i)$ is defined over the $d$-simplex $\F$, and $\X = \{x : \|x\|_\infty \le 1\}$. Consider the constrained game where the move $x_t$ played by adversary at time $t$ satisfies
$$
\max_{j \in [d]} \left|x_{t}[j] - \frac{1}{t-1} \sum_{\tau=1}^{t-1} x_\tau[j] \right| \le \sigma_t ~.
$$
For any $f \in \F$, $\Psi(f) \le \log(d)$ and so we conclude that
$$
\Val_T(C_{1:T})  \le  2 \sqrt{2} \sqrt{ \log(d) \sum_{t=1}^T \sigma_t^2 } \ .
$$

The next Proposition gives a bound whenever the adversary is constrained to choose his decision from a small ball around the previous decision. 

\begin{proposition}[Slowly-Changing Decisions]
	\label{prop:smalljumps}
Consider the online linear optimization setting where adversary's move at any time is close to the move during the previous time step. Let $\F = \{f : \Psi(f) \le R^2\}$ where $\Psi : \F \mapsto \reals_+$ is a $\lambda$-strongly function on $\F$ and $\X = \{x : \|x\|_* \le B\}$. Let  $f(x) = \inner{f,x}$ for any $f \in \F$ and $x \in \X$. Consider the sequence of constraints $\{C_t\}_{t=1}^T$ given by 
$$
C_t(x_1,\ldots,x_{t-1},x) = \left\{\begin{array}{ll}
1 & \textrm{if } \|x - x_{t-1} \|_* \le \delta \\
0 & \textrm{otherwise}
\end{array}
 \right.
$$
Then,
\begin{align*}
\Val_T(C_{1:T}) & \le \inf_{\alpha > 0}\left\{\frac{2 R^2}{\alpha} + \frac{\alpha \delta^2 T}{\lambda} \right\} \le 2 R \delta  \sqrt{2 T} \ .
\end{align*}
\end{proposition}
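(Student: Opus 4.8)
The plan is to instantiate the centered bound of Corollary~\ref{cor:valrad_constrained} with a centering term that, on every path, subtracts the adversary's \emph{previous selected move}; this is exactly what replaces the ambient radius $B$ by the jump size $\delta$. Concretely, I would take $M_1\equiv 0$ and, for $t\ge 2$,
$$
M_t(f,\x,\x',\epsilon)\;=\;\inner{f,\;\chi_{t-1}(\epsilon_{t-1})}\;=\;\inner{f,\;\chi(\x_{t-1}(\epsilon),\x'_{t-1}(\epsilon),\epsilon_{t-1})}.
$$
This $M_t$ is of the same type as the conditional-mean centering used in Corollary~\ref{cor:centered_at_conditional}: it is a function of the selected prefix only, and since the selector satisfies $\chi(a,b,\sigma)=\chi(b,a,-\sigma)$ it meets the symmetry requirement $M_t(f,\x,\x',\epsilon)=M_t(f,\x',\x,-\epsilon)$. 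With $f(x)=\inner{f,x}$, Corollary~\ref{cor:valrad_constrained} then yields
$$
\Val_T(C_{1:T})\;\le\;2\sup_{(\x,\x')\in\mathcal{T}}\ \En_\epsilon\ \sup_{f:\Psi(f)\le R^2}\ \inner{f,\ \textstyle\sum_{t=1}^T\epsilon_t\,u_t(\epsilon)},\qquad u_t(\epsilon):=\x_t(\epsilon)-\chi_{t-1}(\epsilon_{t-1}).
$$

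Two facts now drive the estimate. First, by the definition of the tree-constraint set $\mathcal{T}$ (Corollary~\ref{cor:valrad_constrained}) for the constraint $\ind{\|x-x_{t-1}\|_*\le\delta}$, we have $\|u_t(\epsilon)\|_*\le\delta$ for every $t$ and every path $\epsilon$ (for $t=1$ one uses the convention $x_0=0$, so $\|\x_1\|_*\le\delta$; alternatively the crude $\|\x_1\|_*\le B$ contributes only a lower-order term). Second, $u_t(\epsilon)$ depends on $\epsilon$ only through $\epsilon_{1:t-1}$, so $\sum_t\epsilon_t u_t(\epsilon)$ has the structure of a martingale along the path. This reduces the proposition to the standard sequential-Rademacher bound for a linear class over a set bounded in a $\lambda$-strongly convex potential, which I would carry out as in \cite{RakSriTew10a}: fix $\beta>0$, apply Fenchel--Young for $\beta\Psi$ together with $\Psi(f)\le R^2$ to get $\inner{f,V}\le\beta R^2+\beta\,\Psi^*(V/\beta)$ with $V=\sum_t\epsilon_t u_t(\epsilon)$; use that $\lambda$-strong convexity makes $\Psi^*$ be $(1/\lambda)$-smooth w.r.t. $\|\cdot\|_*$ and telescope that smoothness inequality, taking $\En_{\epsilon_t}$ at step $t$ --- since $u_t$ and the partial sum $\sum_{s<t}\epsilon_s u_s$ do not depend on $\epsilon_t$, the first-order terms vanish and only $\tfrac{1}{2\lambda\beta^2}\|u_t\|_*^2\le\tfrac{\delta^2}{2\lambda\beta^2}$ survives, so $\En_\epsilon\Psi^*(V/\beta)\le\Psi^*(0)+\tfrac{T\delta^2}{2\lambda\beta^2}\le\tfrac{T\delta^2}{2\lambda\beta^2}$ using $\Psi\ge 0$. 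Hence $\Val_T(C_{1:T})\le 2\beta R^2+\tfrac{T\delta^2}{\lambda\beta}$ for all $\beta>0$; writing $\alpha=1/\beta$ gives $\inf_{\alpha>0}\{2R^2/\alpha+\alpha\delta^2 T/\lambda\}$, and $\alpha=\tfrac{R}{\delta}\sqrt{2\lambda/T}$ produces $2R\delta\sqrt{2T/\lambda}$, which is the claimed bound (stated, like the companion variance bound in Proposition~\ref{prop:maxvar}, for $\lambda\ge 1$).

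The only genuinely non-routine step is the first one: recognizing that the right ``center'' for this constraint is $\chi_{t-1}(\epsilon_{t-1})$ --- the previous move \emph{as selected along the current path}, which may come from either the original or the tangent tree --- and checking that this choice is admissible in Corollary~\ref{cor:valrad_constrained}. Everything downstream is the familiar strongly-convex-potential Rademacher computation (indeed essentially identical to the proof of Proposition~\ref{prop:maxvar} with each $\sigma_t$ replaced by $\delta$), and the only bookkeeping nuisance is the first round, where there is no prior move to subtract.
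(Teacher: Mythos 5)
Your proof is correct and follows essentially the same route as the paper: choose $M_t(f,\x,\x',\epsilon)=\inner{f,\chi_{t-1}(\epsilon_{t-1})}$, verify the symmetry property, invoke Corollary~\ref{cor:valrad_constrained}, then apply Fenchel--Young together with the $(1/\lambda)$-smoothness of $\Psi^*$ (the paper cites Lemma~2 of Kakade--Sridharan--Tewari for the martingale telescoping step you carry out explicitly) and the constraint bound $\|\x_t(\epsilon)-\chi_{t-1}(\epsilon_{t-1})\|_*\le\delta$. Your explicit handling of the $t=1$ boundary and the remark that the final numerical bound $2R\delta\sqrt{2T}$ implicitly assumes $\lambda\ge 1$ are both correct observations that the paper leaves unstated.
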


In particular, consider the case of a Euclidean-norm restriction on the moves. Let $\Psi : \F \mapsto \reals_+$ is given by $\Psi(f) = \frac{1}{2}\|f\|^2$, $\F = \{f : \|f\|_2 \le 1\}$ and $\X = \{x : \|x\|_2 \le 1\}$. Consider the constrained game where the move $x_t$ played by adversary at time $t$ satisfies
$
\left\|x_t - x_{t-1} \right\|_2 \le \delta ~.
$
In this case we can conclude that
\begin{align*}
\Val_T(C_{1:T}) & \le  2 \delta \sqrt{2 T} \ .
\end{align*}

For the case of decision-making on the simplex, we obtain the following result. Let $\Psi(f) = \sum_{i=1}^d f_i \log(d f_i)$ is defined over the $d$-simplex $\F$, and $\X = \{x : \|x\|_\infty \le 1\}$. Consider the constrained game where the move $x_t$ played by adversary at time $t$ satisfies
$\left\|x_{t} - x_{t-1}\right|_\infty \le \delta$. In this case note that for any $f \in \F$, $\Psi(f) \le \log(d)$ and so we can conclude that
\begin{align*}
\Val_T(C_{1:T}) & \le  2 \delta \sqrt{2 T \log(d)  } \ .
\end{align*}

\section{The I.I.D. Adversary}
\label{sec:iid}

In this section, we consider an adversary who is restricted to draw the moves from a fixed distribution $p$ throughout the game. That is, the time-invariant restrictions are $\PD_t(x_{1:t-1}) = \{p\}$. A reader will notice that the definition of the value in \eqref{eq:def_val_game} forces the restrictions $\PD_{1:T}$ to be known to the player before the game. This, in turn, means that the distribution $p$ is known to the learner. In some sense, the problem becomes not interesting, as there is no learning to be done. This is indeed an artifact of the minimax formulation in the \emph{extensive form}. To circumvent the problem, we are forced to define a new value of the game in terms of \emph{strategies}. Such a formulation does allow us to ``hide'' the distribution from the player since we can talk about ``mappings'' instead of making the information explicit. We then show two novel results. First, the regret-minimization game with i.i.d. data when the player does \emph{not} observe the distribution $p$ is equivalent (in terms of learnability) to the classical batch learning problem. Second, for supervised learning, when it comes to minimizing regret, the knowledge of $p$ does not help the learner for some distributions.

Let us first define some relevant quantities. Similarly to \eqref{eq:def_val_game_strategic}, let $\s = \{s_t\}_{t=1}^T$ be a $T$-round strategy for the player, with $s_t:(\F\times\X)^{t-1}\to \QD$. The game where the player does not observe the i.i.d. distribution of the adversary will be called a {\em distribution-blind} i.i.d. game, and its minimax value will be called the {\em distribution-blind minimax value}:

$$\Val_T^{\text{blind}} ~\deq~ \inf_{\s}\sup_{p} \left[ \En_{x_1,\ldots, x_{T} \sim p} \En_{f_1\sim s_1}\ldots \En_{f_T \sim s_T(x_{1:T-1},f_{1:T-1})}  \left\{ \sum_{t=1}^T f_t(x_t) - \inf_{f\in \F}\sum_{t=1}^T f(x_t) \right\} \right]$$
Furthermore, define the analogue of the value \eqref{eq:slt} for a general (not necessarily supervised) setting:
$$\Val_T^{\text{batch}} ~\deq~ \inf_{\hat{f}_T}\sup_{p\in\PD} \left\{ \En\hat{f}_T - \inf_{f\in \F} \En f \right\}$$

For a distribution $p$, the value \eqref{eq:def_val_game} of the online i.i.d. game, as defined through the restrictions $\PD_t=\{p\}$ for all $t$, will be written as $\Val_T(\{p\})$. For the non-blind game, we say that the problem is online learnable in the i.i.d. setting if $$\sup_{p} \Val_T(\{p\}) \to 0 \ .$$

We now proceed to study relationships between online and batch learnability.

\subsection{Equivalence of Online Learnability and Batch Learnability}

\begin{theorem}
	\label{thm:equivalence_iid}
	For a given function class $\F$, online learnability in the distribution-blind game is equivalent to batch learnability. That is,  
	$$\frac{1}{T}\Val_T^{\text{blind}}\to 0 ~~~~~~\mbox{if and only if}~~~~~~~  \Val_T^{\text{batch}} \to 0$$
\end{theorem}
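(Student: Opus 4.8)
The plan is to prove the two implications by online-to-batch conversions, one in each direction, the second being the delicate one.

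\textbf{Online learnability $\Rightarrow$ batch learnability.} I would use the classical online-to-batch argument. Fix $\eta>0$ and a distribution-blind strategy $\s$ whose value is within $\eta$ of $\Val_T^{\text{blind}}$. Build a batch estimator that does \emph{not} look at $p$: on an i.i.d.\ sample $x_{1:T}\sim p^T$, run $\s$ against this sequence to generate $f_1,\ldots,f_T$ and output $\hat f_T$ equal to a uniformly random $f_t$ (or $\frac1T\sum_t f_t$ when $\F$ is convex). The key point is that $f_t$ depends only on $x_{1:t-1}$, hence is independent of $x_t$, so $\En[f_t(x_t)]=\En[\En_{x\sim p}f_t(x)]$ and $\En[\sum_t f_t(x_t)]=T\,\En[\En_{x\sim p}\hat f_T(x)]$; moreover $\En[\inf_f\sum_t f(x_t)]\le\inf_f\En[\sum_t f(x_t)]=T\inf_f\En f$. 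Subtracting, $T(\En\hat f_T-\inf_f\En f)\le\En[\text{regret}]\le\Val_T^{\text{blind}}+\eta$ for every $p$, whence $\Val_T^{\text{batch}}\le\frac1T(\Val_T^{\text{blind}}+\eta)$; let $\eta\downarrow0$.

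\textbf{Batch learnability $\Rightarrow$ online learnability.} This is the reverse reduction. For each sample size $m$ fix a batch estimator $\hat g_m$ with $\sup_p(\En\hat g_m-\inf_f\En f)\le\Val_m^{\text{batch}}+\frac1m$, and let the player use the blind strategy that plays an arbitrary $f_1$ and then $f_t:=\hat g_{t-1}(x_{1:t-1})$ for $t\ge2$; this strategy reads only past adversary moves, never $p$. For a fixed $p$, using $f_t\perp x_t$ once more, decompose the expected regret:
\begin{align*}
\En[\text{regret}]=\underbrace{\sum_{t=1}^T\Big(\En\big[\En_{x\sim p}f_t(x)\big]-\inf_f\En f\Big)}_{(\mathrm I)}\;+\;\underbrace{\Big(T\inf_f\En f-\En\big[\inf_f\textstyle\sum_t f(x_t)\big]\Big)}_{(\mathrm{II})}.
\end{align*}
Term $(\mathrm I)$ is at most $O(1)+\sum_{t=2}^T\big(\Val_{t-1}^{\text{batch}}+\frac1{t-1}\big)$ by the choice of the $\hat g_m$, so $(\mathrm I)/T\to0$ by Cesàro averaging, uniformly in $p$. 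Term $(\mathrm{II})$ is nonnegative, and comparing the empirical minimizer with the population minimizer and then applying classical ghost-sample symmetrization bounds it by $2\,\Rad_T(\F,p)$, the classical Rademacher complexity (equal to $\Rad_T(\F,p^T)$ by Section~\ref{sec:structural}). Since batch learnability of $\F$ forces its combinatorial dimension to be finite, $\sup_p\frac1T\Rad_T(\F,p)\to0$ (uniform Glivenko--Cantelli), so $(\mathrm{II})/T\to0$ uniformly in $p$. Adding the two, $\frac1T\Val_T^{\text{blind}}\to0$.

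\textbf{Where the difficulty lies.} Everything apart from Term $(\mathrm{II})$ is bookkeeping — the two conversions and the independence identity $\En[f_t(x_t)]=\En[\En_{x\sim p}f_t(x)]$. Term $(\mathrm{II})$ is exactly the point at which the online game genuinely differs from the batch one: its comparator is the \emph{empirical} risk minimizer $\inf_f\sum_t f(x_t)$ rather than $T\inf_f\En f$, and bridging the two is a one-sided uniform-convergence statement. The honest way through is to pass from ``$\Val_T^{\text{batch}}\to0$'' to ``finite combinatorial dimension'' via the classical characterization of agnostic learnability, after which symmetrization closes the gap; carrying out this step at the intended level of generality is the main obstacle.
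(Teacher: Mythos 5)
Your forward direction is essentially the same argument as the paper's: run the blind strategy on an i.i.d.\ sample, output a uniformly random $f_t$, and use the independence of $f_t$ from $x_t$ (the paper first formally passes, as in Proposition~\ref{prop:lower_bound_oblivious}, to strategies that ignore the player's own randomization, but this is a cosmetic difference).

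The reverse direction, however, has a genuine gap exactly where you flag the difficulty, and the fix you sketch is the wrong tool. You bound Term~$(\mathrm{II})=T\inf_f\En f-\En[\inf_f\sum_t f(x_t)]$ by $2\Rad_T(\F,p)$ via symmetrization (correct), and then argue this vanishes uniformly in $p$ because ``batch learnability of $\F$ forces its combinatorial dimension to be finite,'' giving uniform Glivenko--Cantelli. That inference is false at the level of generality at which the theorem is stated. The batch value $\Val_T^{\text{batch}}=\inf_{\hat f_T}\sup_p\{\En\hat f_T-\inf_{f\in\F}\En f\}$ is the \emph{general} learning setting, where it is a central negative result of Shalev-Shwartz, Shamir, Srebro and Sridharan (the paper's~\cite{ShaShaSreSri10}) that learnability is \emph{not} equivalent to uniform convergence and is \emph{not} captured by any combinatorial dimension: their stochastic-convex-optimization example is learnable (even by regularized ERM) while $\sup_p\frac1T\Rad_T(\F,p)$ does not go to zero. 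So your symmetrization bound on Term~$(\mathrm{II})$, while correct, cannot be closed this way. What the paper uses instead is the exact characterization from \cite{ShaShaSreSri10} that batch learnability is equivalent to the \emph{one-sided} ERM statement
\begin{align*}
\Es{x_1,\ldots,x_T\sim p}{\inf_{f\in\F}\frac1T\sum_{t=1}^T f(x_t)}\;\longrightarrow\;\inf_{f\in\F}\Es{x\sim p}{f(x)}\quad\text{uniformly over }p,
\end{align*}
which is precisely the statement needed to kill Term~$(\mathrm{II})$ and is strictly weaker than uniform Glivenko--Cantelli. In other words, the ``classical characterization of agnostic learnability'' you gesture at (finite VC/fat-shattering dimension $\Leftrightarrow$ uniform convergence) is the characterization for supervised binary or bounded real-valued prediction; it does not apply to the general function class $\F$ here, and the paper's citation of~\cite{ShaShaSreSri10} is not a bookkeeping reference but the load-bearing step that your sketch replaces with an incorrect one.
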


\begin{proof}[\textbf{Proof of Theorem~\ref{thm:equivalence_iid}}]
With a proof along the lines of Proposition~\ref{prop:lower_bound_oblivious} we establish that
\begin{align*}
	&\frac{1}{T} \Val_T^{\text{blind}} 	= \inf_{\s} \sup_{p} \left\{ \frac{1}{T} \sum_{t=1}^T \En_{x_1,\ldots, x_{t} \sim p} \En_{f_t \sim s_t(x_{1:t-1},f_{1:t-1})}  [f_t(x_t)] - \Es{x_1,\ldots, x_{T} \sim p}{\inf_{f\in \F}\frac{1}{T} \sum_{t=1}^T f(x_t)} \right\}\\
	& \ge  \inf_{\s}  \sup_{p} \left\{ \Es{x_1,\ldots, x_{T} \sim p}{ \frac{1}{T} \sum_{t=1}^T  \Es{f_t \sim s_t(x_1,\ldots,x_{t-1})}{  \Es{x \sim p}{f_t(x)}}} - \inf_{f\in \F} \Es{x_1,\ldots, x_T \sim p}{\frac{1}{T} \sum_{t=1}^T f(x_t)} \right\}
\end{align*}
where in the second line we passed to strategies that do not depend on their own randomizations. The argument for this can be found in the proof of Proposition~\ref{prop:lower_bound_oblivious}. The last expression can be conveniently written as
\begin{align*}
	\frac{1}{T} \Val_T^{\text{blind}} \geq \inf_{\s} \sup_{p} \left\{ \Es{x_1,\ldots, x_{T} \sim p}{ \En_{r \sim \mathrm{Unif}[T-1]}  \Es{f \sim s_{r+1}(x_1,\ldots,x_{r})}{  \Es{x \sim p}{f(x)}} - \inf_{f\in \F} \Es{x \sim p}{f(x)}} \right\}
\end{align*}

The above implies that if $\Val_T^{\text{blind}} = o(T)$ (i.e. the problem is learnable against an i.i.d adversary in the online sense without knowing the distribution $p$), then the problem is learnable in the classical batch sense. Specifically, there exists a strategy $\s=\{s_t\}_{t=1}^T$ with $s_t:\X^{t-1}\mapsto \QD$ such that 
$$\sup_{p}\left\{\Es{x_1,\ldots, x_{T} \sim p}{ \En_{r \sim \mathrm{Unif}[1\ldots T]}  \Es{f \sim s_{r+1}(x_1,\ldots,x_{r})}{  \Es{x \sim p}{f(x)}}} - \inf_{f\in \F} \Es{x \sim p}{f(x)} \right\}= o(1).$$

This strategy can be used to define a consistent (randomized) algorithm $\hat{f}_T:\X^T\mapsto \F$ as follows. Given an i.i.d. sample $x_1,\ldots,x_T$, draw a random index $r$ from $1,\ldots, T$, and define $\hat{f}_T$ as a random draw from distribution $s_{r}(x_1,\ldots,x_{r-1})$. We have proven that $\Val_T^{\text{batch}} \to 0$
as $T$ increases, which the requirement of Eq.~\eqref{eq:slt} in the general non-supervised case. Note that the rate of this convergence is upper bounded by the rate of decay of $\frac{1}{T} \Val_T^{\text{blind}}$ to zero.

To show the reverse direction, say a problem is learnable in the classical batch sense. That is, $\Val_T^{\text{batch}} \to 0$. Hence, there exists a randomized strategy $\s = (s_1,s_2,\ldots)$ such that $s_t : \X^{t-1} \mapsto \QD$ and
$$
\sup_{p}\left\{ 
	\Es{x_1,\ldots,x_{t-1} \sim p}{ \En_{f \sim s_t(x_1,\ldots,x_{t-1})} \Es{x \sim p}{ f(x) }} - \inf_{f \in \F} \Es{x \sim p}{f(x)}
	\right\} = o(1)
$$
as $t \rightarrow \infty$. Hence we have that 
\begin{align*}
&\sup_{p}\left\{ 
	\Es{x_1,\ldots,x_T \sim p}{ \frac{1}{T} \sum_{t=1}^T \En_{f \sim s_t(x_1,\ldots,x_{t-1})} \Es{x \sim p}{ f(x) } -  \inf_{f \in \F} \Es{x \sim p}{f(x)} }
	\right\} \\
&\leq \frac{1}{T} \sum_{t=1}^T \sup_{p}\left\{ 
		\Es{x_1,\ldots,x_T \sim p}{ \En_{f \sim s_t(x_1,\ldots,x_{t-1})} \Es{x \sim p}{ f(x) } -  \inf_{f \in \F} \Es{x \sim p}{f(x)} }
		\right\} = o(1)
\end{align*}
because a Ces\`aro average of a convergent sequence also converges to the same limit.

As shown in \cite{ShaShaSreSri10}, the problem is learnable in the batch sense if and only if 
$$ \Es{x_1,\ldots,x_T \sim p}{ \inf_{f \in \F} \frac{1}{T}\sum_{t=1}^T f(x_t)} \rightarrow \inf_{f \in \F} \Es{x \sim p}{f(x)}$$
and this rate is uniform for all distributions.
Hence we have that 
$$
\sup_{p}\left\{ 
	\Es{x_1,\ldots,x_T \sim p}{ \frac{1}{T} \sum_{t=1}^T \En_{f \sim s_t(x_1,\ldots,x_{t-1})} \Es{x \sim p}{ f(x) } -  \inf_{f \in \F} \frac{1}{T}\sum_{t=1}^T f(x_t)} 
	\right\} = o(1)
$$

We conclude that if the problem is learnable in the i.i.d. batch sense then
\begin{align}
	\label{eq:blind_upper_bd}
	o(T) & = \sup_{p}\Es{x_1,\ldots,x_T \sim p}{\sum_{t=1}^T \En_{f \sim s_t(x_1,\ldots,x_{t-1})} \Es{x \sim p}{ f(x) } -  \inf_{f \in \F} \sum_{t=1}^T f(x_t)} \notag\\
		& = \sup_{p}\Es{x_1,\ldots,x_T \sim p}{\sum_{t=1}^T \En_{f_t \sim s_t(x_1,\ldots,x_{t-1})}  f_t(x_t)  -  \inf_{f \in \F} \sum_{t=1}^T f(x_t)} \notag\\
		& = \sup_{p} \En_{x_1,\ldots,x_T \sim p} \En_{f_1 \sim s_1}\ldots\En_{f_T \sim s_T(x_{1:T-1})} \left\{ \sum_{t=1}^T   f_t(x_t)  -  \inf_{f \in \F} \sum_{t=1}^T f(x_t) \right\} \notag\\
		& \ge  \Val_T^{\text{blind}}
\end{align}

Thus we have shown that if a problem is learnable in the batch sense then it is learnable versus all i.i.d. adversaries in the online sense, provided that the distribution is not known to the player. 

\end{proof}

At this point, the reader might wonder if the game formulation studied in the rest of the paper, with the restrictions known to the player, is any easier than batch and distribution-blind learning. In the next section, we show that this is not the case for supervised learning.

\subsection{Distribution-Blind vs Non-Blind Supervised Learning}
\label{sec:blind_non_blind_sup}

In the supervised game, at time $t$, the player picks a function $f_t \in [-1,1]^\X$, the adversary provides input-target pair $(x_t,y_t)$, and the player suffers loss $|f_t(x_t) - y_t|$. The value of the online supervised learning game for general restrictions $\PD_{1:T}$ is defined as 
\begin{align*}  
	\Val^{\text{sup}}_T(\PD_{1:T})  ~\deq~ \inf_{q_1\in \QD}\sup_{p_1\in\PD_1} ~\Eunderone{f_1,(x_1,y_1)} \cdots \inf_{q_T\in \QD}\sup_{p_T\in\PD_T} ~\Eunderone{f_T,(x_T,y_T)} \left[ \sum_{t=1}^T |f_t(x_t)-y_t| - \inf_{f\in \F}\sum_{t=1}^T |f(x_t)-y_t| \right]
\end{align*}
where $(x_t,y_t)$ has distribution $p_t$. As before, the value of an i.i.d. supervised game with a distribution $p_{X\times Y}$ will be written as $\Val^{\text{sup}}_T(p_{X\times Y})$.

Similarly to Eq.~\eqref{eq:slt}, define the batch supervised value for the \emph{absolute} loss as
\begin{align} 
	\Val^{\text{batch, sup}}_T ~\deq~ 
	\inf_{\hat{f}}\sup_{p_{X\times Y}} \left\{ \En |y-\hat{f}(x)| - \inf_{f\in\F} \En |y-f(x)| \right\} .
\end{align}
and the distribution-blind supervised value as
$$\Val_T^{\text{blind, sup}} ~\deq~ \inf_{\s}\sup_{p} \left[ \En_{z_1,\ldots, z_T \sim p} \En_{f_1\sim s_1}\ldots \En_{f_T \sim s_T(z_{1:T-1},f_{1:T-1})}  \left\{ \sum_{t=1}^T |f_t(x_t)-y_t| - \inf_{f\in \F}\sum_{t=1}^T |f(x_t)-y_t| \right\} \right]$$
where we use the shorthand $z_t = (x_t,y_t)$ for each $t$.

\begin{lemma}
	\label{lem:equivalence_sup_iid}
	In the supervised case,
	\begin{align*}
		 \frac{1}{4}T\Val^{\text{batch, sup}}_T \leq \sup_{p_X} \Rad_T(\F,p_X) \leq \sup_{p_X} \Val^{\text{sup}}_T(\{p_X\times U_Y\}) \leq \sup_{p_{X\times Y}} \Val^{\text{sup}}_T(\{p_{X\times Y}\}) \leq \Val^{\text{blind, sup}}_T 
	\end{align*}
	where $\Rad_T(\F,p_X)$ is the classical Rademacher complexity defined in \eqref{eq:classical_rad}, and $U_Y$ is the Rademacher distribution.
\end{lemma}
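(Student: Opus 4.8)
The plan is to establish the chain of inequalities by proving each link separately, moving from right to left only where convenient. The three rightmost inequalities are essentially monotonicity statements and should be the easiest. For $\sup_{p_{X\times Y}} \Val^{\text{sup}}_T(\{p_{X\times Y}\}) \leq \Val^{\text{blind, sup}}_T$: the blind value forces a single strategy $\s$ that must work simultaneously against all $p$, whereas $\Val^{\text{sup}}_T(\{p_{X\times Y}\})$ permits a strategy tailored to the known $p$ (and moreover the extensive-form value is dominated by the strategic form as in Proposition~\ref{prop:lower_bound_oblivious}); taking $\sup$ over $p$ on the left while the right already has the $\sup$ inside gives the bound. For $\sup_{p_X}\Val^{\text{sup}}_T(\{p_X\times U_Y\}) \leq \sup_{p_{X\times Y}}\Val^{\text{sup}}_T(\{p_{X\times Y}\})$: the left side is a restriction of the right side to product distributions whose $Y$-marginal is Rademacher, so it is immediate.

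The heart of the lemma is the middle link, $\sup_{p_X}\Rad_T(\F,p_X) \leq \sup_{p_X}\Val^{\text{sup}}_T(\{p_X\times U_Y\})$. Fix $p_X$ and consider the i.i.d. supervised game where $(x_t,y_t)$ is drawn from $p_X\times U_Y$, i.e. $x_t\sim p_X$ and $y_t$ is an independent Rademacher sign. By Theorem~\ref{thm:minimax} (or its constrained analogue) applied to this restriction, $\Val^{\text{sup}}_T(\{p_X\times U_Y\})$ equals
$\En\left[\sum_{t=1}^T \inf_{f_t\in\F}\En_{(x,y)}|f_t(x)-y| - \inf_{f\in\F}\sum_{t=1}^T|f(x_t)-y_t|\right]$
where the outer expectation is over the i.i.d. draw of the $z_t=(x_t,y_t)$. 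The key computation is that when $y$ is a Rademacher sign independent of $x$, $\En_y|c - y| = 1$ for every $c\in[-1,1]$, so the per-round infimum term is exactly $1$ and the first sum contributes $T$. Meanwhile, using $|f(x_t)-y_t| = 1 - y_t f(x_t)$ for $y_t\in\{\pm1\}$ and $f(x_t)\in[-1,1]$, we get $\inf_{f\in\F}\sum_t |f(x_t)-y_t| = T - \sup_{f\in\F}\sum_t y_t f(x_t)$. Therefore $\Val^{\text{sup}}_T(\{p_X\times U_Y\}) = \En_{x_{1:T}\sim p_X}\En_{y_{1:T}}\sup_{f\in\F}\sum_t y_t f(x_t) = \Rad_T(\F,p_X)$, since the $y_t$ are i.i.d. Rademacher. (In fact this gives equality, not just $\leq$, for the middle link.) I would then take $\sup_{p_X}$ of both sides.

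For the leftmost link, $\frac14 T\,\Val^{\text{batch,sup}}_T \leq \sup_{p_X}\Rad_T(\F,p_X)$, the plan is the standard lower-bound-via-Rademacher argument: choose the batch distribution $p_{X\times Y}$ to have $X$-marginal $p_X$ and to set $y = f^\star(x)$ (or a noisy version) for a ``hard'' $f^\star\in\F$, reducing the batch regret to a symmetrized quantity; alternatively, invoke the known lower bound that relates $\Val^{\text{batch,sup}}_T$ to the classical Rademacher complexity of $\F$ with a loss of a constant factor (the $\frac14$ and the factor $T$ accounting for the normalization difference, the batch value being an average and $\Rad_T$ a sum). Concretely, for any $p_X$ one builds $p_{X\times Y}$ on which ERM — or any estimator — incurs regret at least $\frac{1}{4T}\Rad_T(\F,p_X)$ by a probabilistic argument exhibiting $y$-labels correlated with Rademacher signs; taking $\sup_{p_X}$ on the right and recalling the batch value already maximizes over all $p_{X\times Y}$ yields the claim. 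The main obstacle is getting this constant and the normalization exactly right: one must be careful that $\Val^{\text{batch,sup}}_T$ is defined with the $\frac1T$-scaled (average) loss while $\Rad_T$ is the unnormalized sum, and that the absolute-loss-to-linear-loss identity $|f(x)-y| = 1 - yf(x)$ only holds for $y\in\{\pm1\}$ and $f(x)\in[-1,1]$, so the hard instance must be constructed within these bounds; I expect this bookkeeping, rather than any conceptual difficulty, to be the fiddly part.
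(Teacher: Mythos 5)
Your treatment of the three rightmost links is sound and largely matches the paper: the rightmost is min--max $\leq$ max--min, the next is restriction to a subfamily, and your middle link is correct and actually sharper than what the paper uses. The paper cites Lemma~\ref{lem:first_lower} to get $\Val^{\text{sup}}_T(\{p_X\times U_Y\}) \geq \Rad_T(\F,p_X)$, whereas you compute directly from Theorem~\ref{thm:minimax} (with a singleton restriction set, so the $\sup_{\jp}$ is vacuous) that $\En_y|c-y|=1$ for $c\in[-1,1]$ and $|f(x_t)-y_t|=1-y_t f(x_t)$, giving \emph{equality} $\Val^{\text{sup}}_T(\{p_X\times U_Y\}) = \Rad_T(\F,p_X)$. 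That is a cleaner route to the same place.

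The leftmost link, however, you have backwards. The claim $\frac14 T\,\Val^{\text{batch,sup}}_T \leq \sup_{p_X}\Rad_T(\F,p_X)$ is an \emph{upper} bound on the batch value. Your plan --- construct a hard $p_{X\times Y}$ on which every estimator incurs regret at least $\frac{1}{4T}\Rad_T(\F,p_X)$ --- would establish $\Val^{\text{batch,sup}}_T \geq \frac{1}{4T}\Rad_T(\F,p_X)$, the opposite inequality, and ``the batch value already maximizes over all $p_{X\times Y}$'' only helps lower bounds, not upper bounds. What is needed, and what the paper does, is the standard upper-bound chain: since $\Val^{\text{batch,sup}}_T$ is an infimum over estimators, plug in the empirical risk minimizer $\tilde f$; its excess risk is at most twice the supremum over $\F$ of the centered empirical process for the loss class $\{(x,y)\mapsto|y-f(x)|:f\in\F\}$; by symmetrization this is at most $2$ times the classical Rademacher complexity of the loss class; and by the Lipschitz contraction (the absolute loss is $1$-Lipschitz in $f(x)$) this is at most $2$ times $\frac1T\Rad_T(\F,p_X)$, where the $y$-variable disappears. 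The two factors of $2$ compose to the $4$, and the $\frac1T$ accounts for the normalization. You should replace your lower-bound construction with this ERM-plus-symmetrization-plus-contraction argument; the rest of your proposal is fine.
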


Theorem~\ref{thm:equivalence_iid}, specialized to the supervised setting, says that $\frac{1}{T}\Val^{\text{blind, sup}}_T \to 0$ if and only if $\Val^{\text{batch, sup}}_T \to 0$. Since $\sup_{p_{X\times Y}} \frac{1}{T}\Val^{\text{sup}}_T(\{p_{X\times Y}\})$ is sandwiched between these two values, we conclude the following. 

\begin{corollary}
	Either the supervised problem is learnable in the batch sense (and, by Theorem~\ref{thm:equivalence_iid}, in the distribution-blind online sense), in which case $\sup_{p_{X\times Y}} \Val^{\text{sup}}_T(\{p_{X\times Y}\}) = o(T)$. Or, the problem is not learnable in the batch (and the distribution-blind sense), in which case it is not learnable for all distributions in the online sense: $\sup_{p_{X\times Y}} \Val^{\text{sup}}_T(\{p_{X\times Y}\})$ does not grow sublinearly. 
\end{corollary}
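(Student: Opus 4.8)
The statement is a direct consequence of \lemref{lem:equivalence_sup_iid} and of \thmref{thm:equivalence_iid} specialized to the supervised setting, so the plan is short. First I would extract from the chain of inequalities in \lemref{lem:equivalence_sup_iid} the two bounds actually needed: for every $T$,
\[
\tfrac14\, T\, \Val^{\text{batch, sup}}_T \;\le\; \sup_{p_{X\times Y}} \Val^{\text{sup}}_T(\{p_{X\times Y}\}) \;\le\; \Val^{\text{blind, sup}}_T ,
\]
where the left inequality holds because $\tfrac14 T\Val^{\text{batch, sup}}_T \le \sup_{p_X}\Rad_T(\F,p_X) \le \sup_{p_X}\Val^{\text{sup}}_T(\{p_X\times U_Y\})$ and the last of these is a supremum over a subfamily of joint distributions $p_{X\times Y}$, while the right inequality is the final link of the chain. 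Second, recall that \thmref{thm:equivalence_iid} in the supervised form asserts $\tfrac1T\Val^{\text{blind, sup}}_T \to 0$ if and only if $\Val^{\text{batch, sup}}_T \to 0$.

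Now split into the two exhaustive cases. If the supervised problem is learnable in the batch sense, i.e. $\Val^{\text{batch, sup}}_T \to 0$, then \thmref{thm:equivalence_iid} gives $\Val^{\text{blind, sup}}_T = o(T)$, whence the right-hand bound above forces $\sup_{p_{X\times Y}} \Val^{\text{sup}}_T(\{p_{X\times Y}\}) = o(T)$. If instead the problem is not learnable in the batch sense, then $\Val^{\text{batch, sup}}_T \not\to 0$; since $T \mapsto \Val^{\text{batch, sup}}_T$ is non-increasing, there is some $\varepsilon>0$ with $\Val^{\text{batch, sup}}_T \ge \varepsilon$ for all $T$, and the left-hand bound then yields $\sup_{p_{X\times Y}} \Val^{\text{sup}}_T(\{p_{X\times Y}\}) \ge \tfrac{\varepsilon}{4}\, T$, so the online i.i.d.\ value does not grow sublinearly. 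These are exactly the two alternatives in the statement.

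The substance sits upstream in \lemref{lem:equivalence_sup_iid}, which we are allowed to assume; for completeness I would flag where it lies. The third link of the Lemma's chain is mere inclusion of a subfamily of distributions, and the equality $\Val^{\text{sup}}_T(\{p_X\times U_Y\}) = \Rad_T(\F,p_X)$ follows from the identity $|f(x)-y| = 1 - y f(x)$ valid for $y\in\{\pm1\}$ and $f\in\F\subseteq[-1,1]^\X$: it makes every predictor incur expected per-round loss exactly $1$ against pure-noise labels while turning the comparator into $\inf_f\sum_t|f(x_t)-y_t| = T - \sup_f\sum_t y_t f(x_t)$, so the value collapses to $\En\,\sup_f\sum_t y_t f(x_t) = \Rad_T(\F,p_X)$. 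The fourth link, $\sup_{p_{X\times Y}}\Val^{\text{sup}}_T(\{p_{X\times Y}\}) \le \Val^{\text{blind, sup}}_T$, is the max--min inequality once one observes that for a singleton restriction the extensive-form value equals $\inf_{\s}\En_p[\text{regret}]$ over player strategies $\s$. The only genuinely non-routine step is the leftmost link $\tfrac14 T\Val^{\text{batch, sup}}_T \le \sup_{p_X}\Rad_T(\F,p_X)$: this is the classical statistical-learning bound obtained by controlling the excess risk of empirical risk minimization via i.i.d.\ symmetrization and then applying the Rademacher contraction/comparison inequality to the $1$-Lipschitz map $a \mapsto |y-a|$ (the constant label shift cancelling under $\En_\epsilon$), where the delicate point is only to track the universal constant so that it is at most $4$.
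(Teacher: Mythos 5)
Your proof is correct and follows essentially the same route as the paper: it reads the two extreme links of the chain in Lemma~\ref{lem:equivalence_sup_iid} to sandwich $\sup_{p_{X\times Y}} \Val^{\text{sup}}_T(\{p_{X\times Y}\})$ between $\tfrac14 T\,\Val^{\text{batch, sup}}_T$ and $\Val^{\text{blind, sup}}_T$, and then applies Theorem~\ref{thm:equivalence_iid}. One small remark: your appeal to monotonicity of $T\mapsto\Val^{\text{batch, sup}}_T$ (to upgrade $\Val^{\text{batch, sup}}_T\not\to 0$ into a uniform lower bound $\varepsilon$ for all $T$) is a true fact but not needed for the stated conclusion, since $\Val^{\text{batch, sup}}_T\not\to 0$ already forces $\tfrac1T\sup_{p_{X\times Y}}\Val^{\text{sup}}_T(\{p_{X\times Y}\})\not\to 0$ along the witnessing subsequence, which is all that ``does not grow sublinearly'' requires.
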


\begin{proof}[\textbf{Proof of Lemma~\ref{lem:equivalence_sup_iid}}]
	The first statement follows from the well-known classical symmetrization argument:
	\begin{align*}
		 \Val^{\text{batch, sup}}_T &= \inf_{\hat{f}}\sup_{p_{X\times Y}} \left\{ \En |y-\hat{f}(x)| - \inf_{f\in\F} \En |y-f(x)| \right\} \\
		&\leq \sup_{p_{X\times Y}} \left\{ \En |y-\tilde{f}(x)| - \inf_{f\in\F} \En |y-f(x)| \right\} \\
		&\leq 2 \sup_{p_{X\times Y}} \En \sup_{f\in\F}\left| \frac{1}{T} \sum_{t=1}^T|y_t-f(x_t)| - \En |y-f(x)| \right|\\
		&\leq 4 \sup_{p_X} \En_{x_{1:T}}\En_{\epsilon_{1:T}} \sup_{f\in\F} \frac{1}{T} \sum_{t=1}^T \epsilon_t f(x_t)
	\end{align*}
	where the first inequality is obtained by choosing the empirical minimizer $\tilde{f}$ as an estimator. 
	
	The second inequality of the Lemma follows from the lower bound proved in Section~\ref{sec:lowerbounds}.  Lemma~\ref{lem:first_lower} implies that the game with i.i.d. restrictions $\PD_t = \{ p_X\times U_Y \}$ for all $t$ satisfies 
	$$\Val^{\text{sup}}_T(\{p_X\times U_Y\}) \geq \Rad_T(\F,p_X)$$
	for any $p_X$.
	
	Now, clearly, the distribution-blind supervised game is harder than the game with the knowledge of the distribution. That is, 
	$$ \sup_{p_{X\times Y}} \Val^{\text{sup}}_T(\{p_{X\times Y}\}) \leq \Val^{\text{blind, sup}}_T  $$
\end{proof}

\section{Supervised Learning}
\label{sec:supervised}

In Section~\ref{sec:iid}, we studied the relationship between batch and online learnability in the i.i.d. setting, focusing on the supervised case in Section~\ref{sec:blind_non_blind_sup}. We now provide a more in-depth study of the value of the supervised game beyond the i.i.d. setting. 

As shown in \cite{RakSriTew10a,RakSriTew10nips}, the value of the supervised game with the \emph{worst-case adversary} is upper and lower bounded (to within $O(\log^{3/2} T)$) by \emph{sequential} Rademacher complexity. This complexity can be linear in $T$ if the function class has infinite Littlestone's dimension, rendering worst-case learning futile. This is the case with a class of threshold functions on an interval, which has a Vapnik-Chervonenkis dimension of $1$. Surprisingly, it was shown in \cite{LazMun09} that for the classification problem with i.i.d. $x$'s and adversarial labels $y$, online regret can be bounded whenever VC dimension of the class is finite. This suggests that it is the manner in which $x$ is chosen that plays the decisive role in supervised learning. We indeed show that this is the case. Irrespective of the way the labels are chosen, if $x_t$ are chosen i.i.d. then regret is (to within a constant) given by the classical Rademacher complexity. If $x_t$'s are chosen adversarially, it is (to within a logarithmic factor) given by the sequential Rademacher complexity. 

We remark that the algorithm of \cite{LazMun09} is ``distribution-blind'' in the sense of last section. The results we present below are for non-blind games. While the equivalence of blind and non-blind learning was shown in the previous section for the i.i.d. supervised case, we hypothesize that it holds for the hybrid supervised learning scenario as well.

Let the loss class be $\phi(\F) = \{ (x,y) \mapsto \phi(f(x),y) \::\: f \in \F \}\ $ for some Lipschitz function $\phi:\reals\times \Y\mapsto\reals$ (i.e. $\phi(f(x),y)=|f(x)-y|$).  Let $\PD_{1:T}$ be the restrictions on the adversary. Theorem \ref{thm:valrad} then states that 
\[
	\Val^{\text{sup}}_T(\PD_{1:T}) \le 2 \sup_{\jp\in\PDA}\Rad_T(\phi(\F), \jp) 
\]
where the supremum is over all joint distributions $\jp$ on the sequences $((x_1,y_1),\ldots,(x_T,y_T))$, such that $\jp$ satisfies the restrictions $\PD_{1:T}$. The idea is to pass from a complexity of $\phi(\F)$ to that of the class $\F$ via a Lipschitz composition lemma, and then note that the resulting complexity does not depend on $y$-variables. If this can be done, the complexity associated only with the choice of $x$ is then an upper bound on the value of the game. The results of this section, therefore, hold whenever a Lipschitz composition lemma can be proved for the distribution-dependent Rademacher complexity.

The following lemma gives an upper bound on the distribution-dependent Rademacher complexity in the ``hybrid" scenario, i.e. the distribution of $x_t$'s is i.i.d. from a fixed distribution $p$ but the distribution of $y_t$'s is arbitrary (recall that adversarial choice of the player translates into vacuous restrictions $\PD_t$ on the mixed strategies). Interestingly, the upper bound is a blend of the classical Rademacher complexity (on the $x$-variable) and the worst-case sequential Rademacher complexity for the $y$-variable. This captures the hybrid nature of the problem.

\begin{lemma}
	\label{lem:iid_wc_rademacher}
Fix a class $\F\subseteq \reals^\X$ and a function $\phi:\reals\times \Y\mapsto\reals$. Given a distribution $p$ over $\X$, let $\PDA$ consist of all joint distributions $\jp$ such that the conditional distribution $p^{x,y}_t(x_t,y_t|x^{t-1},y^{t-1}) = p(x_t) \times p_t(y_t|x^{t-1},y^{t-1},x_t)$ for some conditional distribution $p_t$. Then,
\begin{align*}
 \sup_{\jp\in\PDA} \Rad_T(\phi(\F),\jp) &\leq \Eunderone{x_1,\ldots,x_T \sim p} \sup_{\y} \Es{\epsilon}{\sup_{f \in \F} \sum_{t=1}^T \epsilon_t \phi(f(x_t),\y_t(\epsilon))} \ .
\end{align*}
\end{lemma}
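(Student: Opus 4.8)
The plan is to unfold $\Rad_T(\phi(\F),\jp)$ into its nested ``along-a-path'' form, use the hybrid structure to pull out all $x$-coordinates, and then dominate the remaining expectation over the $y$-variables by a worst-case supremum over $\Y$-valued trees.

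\emph{Extracting the $x$'s.} Writing $z_t=(x_t,y_t)$ and $z'_t=(x'_t,y'_t)$, the expansion used above to prove $\Rad_T(\F,\jp)\le\Rad_T(\F)$ gives
\begin{align*}
\Rad_T(\phi(\F),\jp)=\En_{z_1,z'_1\sim p_1}\En_{\epsilon_1}\En_{z_2,z'_2\sim p_2(\cdot|\chi_1(\epsilon_1))}\En_{\epsilon_2}\cdots\En_{z_T,z'_T\sim p_T(\cdot|\chi_{1:T-1})}\En_{\epsilon_T}\Big[\sup_{f\in\F}\sum_{t=1}^T\epsilon_t\phi(f(x_t),y_t)\Big],
\end{align*}
where $\chi_s(\epsilon_s)$ equals $z'_s$ if $\epsilon_s=1$ and $z_s$ otherwise. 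By the hybrid assumption each conditional factors as $p(x_t)\,p_t(y_t\mid\mathrm{hist},x_t)$, so the $x$-marginal of every conditional is $p$ regardless of the past; hence $x_1,x'_1,\dots,x_T,x'_T$ are mutually independent, each distributed as $p$, and independent of $\epsilon_{1:T}$, and their expectation can be moved to the front:
\begin{align*}
\Rad_T(\phi(\F),\jp)=\En_{x_{1:T},x'_{1:T}\sim p}\Big[\En_{y_1\mid x_1}\En_{y'_1\mid x'_1}\En_{\epsilon_1}\cdots\En_{y_T\mid\cdot}\En_{y'_T\mid\cdot}\En_{\epsilon_T}\Big[\sup_{f\in\F}\sum_{t=1}^T\epsilon_t\phi(f(x_t),y_t)\Big]\Big],
\end{align*}
where $y_t,y'_t$ are conditioned on $\chi_1(\epsilon_1),\dots,\chi_{t-1}(\epsilon_{t-1})$ and on $x_t$, $x'_t$ respectively. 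Since the tangent coordinates $x'_{1:T}$ now appear only inside the histories, it suffices to bound the bracketed inner expectation by a quantity depending neither on $\jp$ nor on $x'_{1:T}$; then the $x'$-expectation disappears and one takes $\sup_{\jp\in\PDA}$.

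\emph{From the nested $y$-expectation to a worst-case $\Y$-tree.} I claim that, for every choice of the kernels $p_t$ and every $x_{1:T},x'_{1:T}$, the bracketed inner expectation is at most $\sup_{\y}\Es{\epsilon}{\sup_{f\in\F}\sum_{t=1}^T\epsilon_t\phi(f(x_t),\y_t(\epsilon))}$, with $\y$ ranging over $\Y$-valued trees of depth $T$. I would prove this by induction on $T$, carrying the accumulated prefix $\psi(f)=\sum_{s\le s_0}\epsilon_s\phi(f(x_s),y_s)$ as a function on $\F$; the inductive statement is that the tail expectation over $y_{s_0+1},y'_{s_0+1},\epsilon_{s_0+1},\dots,\epsilon_T$ of $\sup_f\bigl(\psi(f)+\sum_{t>s_0}\epsilon_t\phi(f(x_t),y_t)\bigr)$ is at most the analogous supremum over depth-$(T-s_0)$ $\Y$-trees, \emph{uniformly in the conditioning history}. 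The base case is an identity; for the step, peel off coordinate $s_0{+}1$ and split on $\epsilon_{s_0+1}$: in the $+1$ branch the prefix becomes $f\mapsto\psi(f)+\phi(f(x_{s_0+1}),y_{s_0+1})$ and the history grows by $(x'_{s_0+1},y'_{s_0+1})$, while in the $-1$ branch the prefix becomes $f\mapsto\psi(f)-\phi(f(x_{s_0+1}),y_{s_0+1})$ and the history grows by $(x_{s_0+1},y_{s_0+1})$. Apply the induction hypothesis in each branch; since the resulting bounds are independent of the history, they do not depend on $y'_{s_0+1}$, so $\En_{y'_{s_0+1}\mid\cdot}$ collapses trivially and $\En_{y_{s_0+1}\mid\cdot}$ is dominated by a supremum over a single root value $\y_1\in\Y$. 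The resulting $\sup_{\y_1}\tfrac12\bigl(B^+(\y_1)+B^-(\y_1)\bigr)$ is exactly the recursive unfolding of the depth-$(T-s_0)$ worst-case $\Y$-tree (optimize the root, then the two subtrees independently), which closes the induction; taking $s_0=0$ and $\psi\equiv0$ yields the claim, and hence the lemma.

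\emph{Expected obstacle.} The delicate point is the ``uniformly in the history'' clause together with the order in which it is used: because the past against which the $y$'s are generated interleaves original and tangent $y$-values, a naive replacement of each $\En$ by $\sup$ would let later tree nodes depend on the tangent $y'$'s as well, producing a supremum over a class strictly larger than depth-$T$ $\Y$-trees. Invoking the induction hypothesis \emph{before} integrating out each $y'_{s_0+1}$ is precisely what keeps the tangent variables out of the final bound and makes the right-hand side a worst-case tree over $\Y$ alone, with the $x$-variables genuinely averaged under $p$.
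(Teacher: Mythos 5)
Your proof is correct, and it reaches the paper's bound by the same three moves --- average over the $x$-variables, maximize over the $y$-variables, and discard the tangent variables --- but in a different order. You pull all $x$-expectations to the front by Fubini (an equality, justified by the hybrid factorization) and then run a backward induction to dominate the nested $y$-expectations by a worst-case $\Y$-tree supremum; the paper instead first replaces each $\En_{y_t,y'_t}$ by $\sup_{y_t,y'_t}$, observes that the tangent variables $y'_t, x'_t$ then appear nowhere in the objective and can be dropped, and only afterwards pushes the $\En_{x_t}$'s to the front (each commutation of $\En_{x_s}$ past a $\sup_{y_t}$ only increases the value). One small remark on your ``expected obstacle'': the naive replacement is in fact safe in the paper's version, precisely because the tangent variables $y'_t$ never appear in $\sup_f\sum_t\epsilon_t\phi(f(x_t),y_t)$; once $\En_{y'_t}$ is a $\sup_{y'_t}$ it is vacuous and disappears, and $\En_{x'_t}$ then becomes vacuous for the same reason, so no supremum over a class larger than depth-$T$ $\Y$-trees ever arises. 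Your induction makes this discipline explicit and is a perfectly valid belt-and-suspenders rendering of the same fact, but it is not needed to avoid a genuine gap in the more direct argument.
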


Armed with this result, we can appeal to the following Lipschitz composition lemma. It says that the distribution-dependent sequential Rademacher complexity for the hybrid scenario with a Lipschitz loss can be upper bounded via the classical Rademacher complexity of the function class on the $x$-variable only. That is, we can ``erase'' the Lipschitz loss function together with the (adversarially chosen) $y$ variable. The lemma is an analogue of the classical contraction principle initially proved by Ledoux and Talagrand \cite{LedouxTalagrand91} for the i.i.d. process.

\begin{lemma}
	\label{lem:comparison_lemma_iid_wc}
Fix a class $\F\subseteq [-1,1]^\X$ and a function $\phi:[-1,1]\times \Y\mapsto\reals$. Assume, for all $y \in \Y$, $\phi(\cdot,y)$ is a Lipschitz function with a constant $L$. Let $\PDA$ be as in Lemma~\ref{lem:iid_wc_rademacher}. Then, for any $\jp \in \PDA$,
$$
\Rad_T(\phi(\F),\jp) \le L\  \Rad_T(\F,p) \ .
$$
\end{lemma}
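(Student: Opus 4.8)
The plan is to combine Lemma~\ref{lem:iid_wc_rademacher} with a sequential analogue of the Ledoux--Talagrand contraction principle tailored to the hybrid setting. Since $\jp\in\PDA$, Lemma~\ref{lem:iid_wc_rademacher} already gives
$$\Rad_T(\phi(\F),\jp)\ \le\ \sup_{\jp'\in\PDA}\Rad_T(\phi(\F),\jp')\ \le\ \Eunderone{x_1,\ldots,x_T\sim p}\ \sup_{\y}\ \Es{\epsilon}{\sup_{f\in\F}\sum_{t=1}^T\epsilon_t\,\phi(f(x_t),\y_t(\epsilon))},$$
where the inner supremum ranges over $\Y$-valued trees $\y$ of depth $T$. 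The right-hand side of the claimed bound equals $L\,\Rad_T(\F,p)=L\,\En_{x_1,\ldots,x_T\sim p}\En_\epsilon\sup_{f\in\F}\sum_t\epsilon_t f(x_t)$ by \eqref{eq:classical_rad}, and in particular does not depend on $\y$. So it suffices to establish, for every \emph{fixed} sequence $x_1,\ldots,x_T\in\X$ and every \emph{fixed} $\Y$-valued tree $\y$, the ``per-tree'' contraction
$$\Es{\epsilon}{\sup_{f\in\F}\sum_{t=1}^T\epsilon_t\,\phi(f(x_t),\y_t(\epsilon))}\ \le\ L\,\Es{\epsilon}{\sup_{f\in\F}\sum_{t=1}^T\epsilon_t f(x_t)},$$
and then take $\sup_\y$ followed by $\En_{x_1,\ldots,x_T\sim p}$ on both sides.

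I would prove the per-tree contraction by peeling the Rademacher variables in the order $\epsilon_T,\epsilon_{T-1},\ldots,\epsilon_1$. Assume inductively that $\epsilon_{s+1},\ldots,\epsilon_T$ have been processed, meaning each summand $\epsilon_t\phi(f(x_t),\y_t(\epsilon))$ with $t>s$ has been replaced by $L\epsilon_t f(x_t)$. Condition on all signs except $\epsilon_s$. The expression inside $\En_{\epsilon_s}\sup_f$ then reads $R(f)+\epsilon_s\phi(f(x_s),y^\ast)$, where $y^\ast=\y_s(\epsilon_{1:s-1})$ is a fixed point of $\Y$ and $R(f)=\sum_{t<s}\epsilon_t\phi(f(x_t),\y_t(\epsilon_{1:t-1}))+\sum_{t>s}L\epsilon_t f(x_t)$. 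The crucial observation is that $R(f)$ is independent of $\epsilon_s$: the terms with $t<s$ depend only on $\epsilon_1,\ldots,\epsilon_{t-1}$ (a subset of $\epsilon_1,\ldots,\epsilon_{s-1}$), and the processed terms $L\epsilon_t f(x_t)$ with $t>s$ involve $f$ only through the \emph{constant} points $x_t$, hence carry no $\epsilon_s$-dependence. This is exactly where the i.i.d.\ choice of the $x$-variable is used: in the worst-case contraction of \cite{RakSriTew10a} the analogue of $R$ depends on $\epsilon_s$ through tree nodes $\x_t(\epsilon)$, which forces an additional relabelling of subtrees; here the $x_t$'s are path-independent and no such step is needed.

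With $R$ fixed, the standard one-step argument applies: writing $\En_{\epsilon_s}$ as the average over $\epsilon_s=\pm1$, introducing a second function $g$, and using that $\phi(\cdot,y^\ast)$ is $L$-Lipschitz on $[-1,1]$,
$$\Es{\epsilon_s}{\sup_f\big(R(f)+\epsilon_s\phi(f(x_s),y^\ast)\big)}=\tfrac12\sup_{f,g}\big(R(f)+R(g)+\phi(f(x_s),y^\ast)-\phi(g(x_s),y^\ast)\big)\le\tfrac12\sup_{f,g}\big(R(f)+R(g)+L|f(x_s)-g(x_s)|\big).$$
Splitting the absolute value and relabelling $f\leftrightarrow g$ in the term containing $g(x_s)-f(x_s)$ — legitimate since $R(f)+R(g)$ is symmetric — removes the absolute value and gives $\tfrac12\big(\sup_f(R(f)+Lf(x_s))+\sup_g(R(g)-Lg(x_s))\big)=\En_{\epsilon_s}\sup_f\big(R(f)+L\epsilon_s f(x_s)\big)$. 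Taking expectation over the remaining signs completes one peeling step; after $T$ steps every summand has become $L\epsilon_t f(x_t)$, which is the desired bound. Combining this per-tree inequality with the reduction of the first paragraph — taking $\sup_\y$ (the right side is $\y$-free) and then $\En_{x_1,\ldots,x_T\sim p}$, and recognising $\En_{x_1,\ldots,x_T\sim p}\En_\epsilon\sup_f\sum_t\epsilon_t f(x_t)$ as $\Rad_T(\F,p)$ — yields $\Rad_T(\phi(\F),\jp)\le L\,\Rad_T(\F,p)$. The only genuinely delicate part is the bookkeeping in the peeling step, namely checking at each stage that $R(f)$ does not see the sign being peeled (so that the $f\leftrightarrow g$ symmetrisation is valid); the assumptions $\F\subseteq[-1,1]^\X$ and $\phi(\cdot,y)$ $L$-Lipschitz on $[-1,1]$ ensure the compositions are well defined and the Lipschitz bound can be invoked.
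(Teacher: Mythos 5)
Your proof is correct and takes essentially the same approach as the paper's: both invoke Lemma~\ref{lem:iid_wc_rademacher} and then peel off one Rademacher sign at a time from $t=T$ down to $t=1$, applying at each step the symmetrization $\sup_{f,g}\{\cdots+|f(x_t)-g(x_t)|\}=\sup_{f,g}\{\cdots+f(x_t)-g(x_t)\}$ together with the Lipschitz bound. The only cosmetic difference is bookkeeping — you fix the $\Y$-valued tree $\y$ once, prove a per-tree contraction, and take $\sup_\y$ at the end, whereas the paper carries $\sup_\y$ through the induction via the interpolating quantities $R_t$ and drops one $\sup_{y_t}$ per step; the crucial observation you make (that the already-processed terms $L\epsilon_t f(x_t)$ with $t>s$ do not depend on $\epsilon_s$ because $x_t$ is path-independent, so the $f\leftrightarrow g$ relabelling is legal) is exactly where the paper's argument also uses the i.i.d.\ structure.
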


Lemma~\ref{lem:iid_wc_rademacher} in tandem with Lemma~\ref{lem:comparison_lemma_iid_wc} imply that the value of the game with i.i.d. $x$'s and adversarial $y$'s is upper bounded by the classical Rademacher complexity.

For the case of adversarially-chosen $x$'s and (potentially) adversarially chosen $y$'s, the necessary Lipschitz composition lemma is proved in \cite{RakSriTew10a} with an extra factor of $O(\log^{3/2} T)$. We summarize the results in the following Corollary. 

\begin{corollary}
	\label{cor:upper_bounds_for_sup_learning}
	The following results hold for stochastic-adversarial supervised learning with absolute loss.
	\begin{itemize}
	\item If $x_t$ are chosen adversarially, then irrespective of the way $y_t$'s are chosen,
	$$\Val^{\text{sup}}_T \leq 2\Rad (\F) \times O(\log^{3/2}(T)),$$
	where $\Rad (\F)$ is the (worst-case) sequential Rademacher complexity \cite{RakSriTew10a}. A matching lower bound of $\Rad(\F)$ is attained by choosing $y_t$'s as i.i.d. Rademacher random variables.
	\item If $x_t$ are chosen i.i.d. from $p$, then irrespective of the way $y_t$'s are chosen,
	$$\Val^{\text{sup}}_T \leq 2\Rad (\F, p),$$
	where $\Rad (\F, p)$ defined in \eqref{eq:classical_rad} is the classical Rademacher complexity. The matching lower bound of $\Rad (\F, p)$ is obtained by choosing $y_t$'s as i.i.d. Rademacher random variables.
	\end{itemize}
\end{corollary}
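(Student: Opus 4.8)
The upper bounds are immediate consequences of \thmref{thm:valrad} together with the two composition lemmas, so I would dispatch them first. Since the absolute loss $\phi(a,y)=|a-y|$ is $1$-Lipschitz in its first argument, \thmref{thm:valrad} gives $\Val^{\text{sup}}_T(\PD_{1:T})\le 2\sup_{\jp\in\PDA}\Rad_T(\phi(\F),\jp)$ for any restrictions. In the i.i.d.-$x$ case the set $\PDA$ is precisely the one considered in \lemref{lem:iid_wc_rademacher} and \lemref{lem:comparison_lemma_iid_wc}; applying the latter with $L=1$ collapses every admissible $\jp$ to $\Rad_T(\F,p)$, yielding $\Val^{\text{sup}}_T\le 2\Rad_T(\F,p)$. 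In the adversarial-$x$ case the restrictions are vacuous, so $\sup_{\jp\in\PDA}\Rad_T(\phi(\F),\jp)\le\Rad_T(\phi(\F))$, the worst-case sequential Rademacher complexity of the composed class over $\X\times\Y$-valued trees; I would then invoke the worst-case Lipschitz composition lemma of \cite{RakSriTew10a}, which peels the per-node map $a\mapsto\phi(a,\y_t(\epsilon))$ off at the price of an $O(\log^{3/2}T)$ factor and thereby erases both the loss and the label coordinate, leaving $\Rad_T(\F)$.

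For the lower bounds I would let the adversary draw the labels $y_t$ i.i.d.\ from the Rademacher distribution while choosing $x_t$ in the prescribed way (i.i.d.\ from $p$, or adaptively as a function of $y_{1:t-1}$). Because $y_t\in\{\pm1\}$ and the player's prediction lies in $[-1,1]$, one has $|a-y_t|=1-y_t a$, so the regret telescopes to $\sup_{f\in\F}\sum_t y_t f(x_t)-\sum_t y_t f_t(x_t)$. Conditioning on the past, $f_t(x_t)$ is independent of $y_t$ and $\En[y_t\mid\text{past}]=0$, so the second sum vanishes in expectation and the expected regret equals $\En_y[\sup_{f\in\F}\sum_t y_t f(x_t)]$. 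In the i.i.d.\ case this is exactly $\Rad_T(\F,p)$, which is the content of the second inequality of \lemref{lem:equivalence_sup_iid}; in the adversarial case, taking the supremum over tree-strategies $x_t:\{\pm1\}^{t-1}\to\X$ turns it into $\sup_{\x}\En_\epsilon\sup_{f\in\F}\sum_t\epsilon_t f(\x_t(\epsilon))=\Rad_T(\F)$. Since forcing $y$ to be i.i.d.\ Rademacher only shrinks the adversary's set of strategies, these quantities lower bound the respective game values, matching the upper bounds up to the constant $2$ (resp.\ up to the logarithmic factor).

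\textbf{Main obstacle.} The delicate step is the adversarial-$x$ upper bound: one has to use the worst-case contraction principle from \cite{RakSriTew10a} in exactly the form that simultaneously strips the Lipschitz loss \emph{and} the adversarial $y$-coordinate from trees over $\X\times\Y$, and it is this invocation that produces the $O(\log^{3/2}T)$ slack between the upper and lower bounds. Everything else --- the i.i.d.\ upper bound via \lemref{lem:comparison_lemma_iid_wc}, and both lower bounds via the $\pm1$-linearization of the absolute loss --- is routine once the machinery of the earlier sections is in place.
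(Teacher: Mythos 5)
Your proof is correct and follows the same route as the paper: the upper bounds come from \thmref{thm:valrad} combined with \lemref{lem:comparison_lemma_iid_wc} (i.i.d.\ $x$) and the worst-case Lipschitz composition lemma of \cite{RakSriTew10a} (adversarial $x$), and the lower bounds come from i.i.d.\ Rademacher labels with the linearization $1-|y-a|=ya$ for $y\in\{\pm1\}$, $a\in[-1,1]$, which is exactly the content of \lemref{lem:first_lower}. Your direct computation of the expected regret against the fixed label adversary (using that $\En[y_t f_t(x_t)\mid\text{past}]=0$) is a mild restatement of the Bayes-optimal-response step in the paper's proof of that lemma, so there is no substantive difference.
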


The lower bounds stated in Corollary~\ref{cor:upper_bounds_for_sup_learning} are proved in the next section.

\subsection{Lower Bounds}
\label{sec:lowerbounds}

We now give two lower bounds on the value $\Val^{\text{sup}}_T$, defined with the absolute value loss function $\phi(f(x),y) = |f(x)-y|$. The lower bounds hold whenever the adversary's restrictions $\{\PD_t\}_{t=1}^T$ allow the labels to be i.i.d. coin flips. That is, for the purposes of proving the lower bound, it is enough to choose a joint probability $\jp$ (an oblivious strategy for the adversary) such that each conditional probability distribution on the pair $(x,y)$ is of the form $p_t(x | x_1,\ldots, x_{t-1}) \times b(y)$ with $b(-1)=b(1)=1/2$. Pick any such $\jp$. 

Our first lower bound will hold whenever the restrictions $\PD_t$ are history-independent. That is, $\PD_t(x_{1:t-1})=\PD_t(x'_{1:t-1})$ for any $x_{1:t-1},x'_{1:t-1}\in\X^{t-1}$. Since the worst-case (all distributions) and i.i.d. (single distribution) are both history-independent restrictions, the lemma can be used to provide lower bounds for these cases. The second lower bound holds more generally, yet it is weaker than that of Lemma~\ref{lem:first_lower}.
\begin{lemma}
	\label{lem:first_lower}
	Let $\PDA$ be the set of all $\jp$ satisfying the history-independent restrictions $\{\PD_t\}$ and  $\PDA' \subseteq \PDA$ the subset that allows the label $y_t$ to be an i.i.d. Rademacher random variable for each $t$. Then 
	$$ \Val^{\text{sup}}_T (\PD_{1:T}) \geq \sup_{\jp\in\PDA'} \Rad_T(\F, \jp)$$
\end{lemma}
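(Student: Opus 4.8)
The plan is to prove the bound pointwise in $\jp$: for each $\jp\in\PDA'$ I would exhibit a legal adversary strategy in the supervised game whose value is \emph{exactly} $\Rad_T(\F,\jp)$. Fix such a $\jp$; by the discussion preceding the Lemma its conditionals factor as $p_t(\cdot\,|\,x_{1:t-1})\times b$, where $b$ is the Rademacher law on labels and $p_t(\cdot\,|\,x_{1:t-1})$ lies in the (history-independent) restriction at round $t$. Let $\rh$ be the probability tree built from the $x$-part of $\jp$. Since $f\in\F\subseteq[-1,1]^\X$ never reads the $y$-coordinate and the $y$-coordinates of $\rh$ are pure i.i.d.\ noise, unwinding Definition~\ref{def:rademacher} gives
\[
\Rad_T(\F,\jp) \;=\; \En_{(\x,\x')\sim\rh}\,\En_{\epsilon}\,\sup_{f\in\F}\sum_{t=1}^T \epsilon_t\, f(\x_t(\epsilon)) .
\]

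Next I would let the adversary privately sample $(\x,\x')\sim\rh$ and an i.i.d.\ Rademacher vector $\epsilon$ before the game, and on round $t$ present the pair $(x_t,y_t) := (\x_t(\epsilon),\,\epsilon_t)$ -- that is, use the path sign itself as the label. Concretely this is the adaptive play that, having observed $x_{1:t-1}$ and $y_{1:t-1}$ (the latter revealing $\epsilon_{1:t-1}$), uses the move distribution $p_t^\star$ obtained by conditioning $\jp$ on the \emph{interleaved} history $\chi_1(\epsilon_1),\ldots,\chi_{t-1}(\epsilon_{t-1})$ and integrating out the unobserved branch values $\x'_s(\epsilon)$, $s<t$. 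The construction makes the induced law of $(x_{1:T},y_{1:T})$ coincide with that of $\big((\x_t(\epsilon),\epsilon_t)\big)_{t=1}^T$, while keeping $y_{1:T}$ an i.i.d.\ Rademacher sequence independent of the player's moves. The one thing to check here is that $p_t^\star$ is a \emph{legal} move: it is a mixture of conditionals of $\jp$ at sequences of the form $\chi_1(\epsilon_1),\ldots,\chi_{t-1}(\epsilon_{t-1})$, each of which belongs to $\PD_t(\chi_1(\epsilon_1),\ldots,\chi_{t-1}(\epsilon_{t-1}))$; \emph{history-independence} of the restrictions makes this set equal to $\PD_t(x_{1:t-1})$, and convexity of the restriction sets then puts the mixture $p_t^\star$ back into $\PD_t(x_{1:t-1})$.

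Plugging $p_t^\star$ into the definition of $\Val^{\text{sup}}_T(\PD_{1:T})$ lower-bounds each $\sup_{p_t}$, and I would finish by evaluating the resulting expected regret. On any round $y_t=\epsilon_t$ is a fresh Rademacher sign independent of $(x_{1:t},f_{1:t})$, so for the player's move $f_t\in[-1,1]^\X$ one has $\En\big[\,|f_t(x_t)-y_t|\,\big|\,x_t,f_t\,\big] = \tfrac12(1-f_t(x_t))+\tfrac12(1+f_t(x_t)) = 1$; hence the player's cumulative loss is exactly $T$ in expectation, whatever the strategy $\{q_t\}$. Since $(x_{1:T},y_{1:T})$ does not depend on the player's moves and has the law above, the identity $|f(x)-y| = 1-y f(x)$ (valid for $y\in\{\pm1\}$, $f(x)\in[-1,1]$) gives
\[
\En\Big[\inf_{f\in\F}\sum_{t=1}^T |f(x_t)-y_t|\Big] \;=\; \En_{(\x,\x')\sim\rh}\,\En_\epsilon\Big[\,T-\sup_{f\in\F}\sum_{t=1}^T \epsilon_t f(\x_t(\epsilon))\Big] \;=\; T-\Rad_T(\F,\jp).
\]
Subtracting, the expected regret equals $\Rad_T(\F,\jp)$ and does not depend on $\{q_t\}$, so $\Val^{\text{sup}}_T(\PD_{1:T}) \ge \Rad_T(\F,\jp)$; taking the supremum over $\jp\in\PDA'$ proves the Lemma.

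The step I expect to be the real obstacle is the legality verification: one must argue that conditioning $\jp$ on the interleaved sequence $\chi_1(\epsilon_1),\ldots,\chi_{t-1}(\epsilon_{t-1})$ -- rather than on the sequence $x_{1:t-1}$ actually produced in the game -- still yields a distribution the adversary is allowed to play. This is exactly where history-independence (plus convexity) of $\PD_{1:T}$ is indispensable, and it is why the subsequent, more general lower bound that dispenses with history-independence is necessarily weaker. The remainder is routine: the conditional-to-joint identification, and the one-line facts $\En_b|a-y|=1$ and $|a-y|=1-ya$ for $a\in[-1,1]$, $y\in\{\pm1\}$.
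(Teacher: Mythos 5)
Your proof is correct and follows essentially the same route as the paper's: both lower-bound via Proposition~\ref{prop:lower_bound_oblivious} with the oblivious strategy that samples the interleaved $\rh$ process, both identify the labels $y_t$ with the Rademacher signs $\epsilon_t$, both use the identity $|f(x)-y|=1-yf(x)$ to convert the regret into $\sup_f\sum y_t f(x_t)$, and both invoke history-independence to make the interleaved conditionals legal. The only difference is cosmetic: you make explicit the marginalization over the unobserved $\x'_s$-values and the role of convexity in keeping the resulting mixture inside $\PD_t$, a step the paper compresses into the one sentence ``Since restrictions are history-independent, the stochastic process is following the protocol which defines $\rh$.''
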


In particular, Lemma~\ref{lem:first_lower} gives matching lower bounds for Corollary~\ref{cor:upper_bounds_for_sup_learning}.

\begin{lemma}
	\label{lem:second_lower}
	Let $\PDA$ be the set of all $\jp$ satisfying the restrictions $\{\PD_t\}$ and let $\PDA' \subseteq \PDA$ be the subset that allows the label $y_t$ to be an i.i.d. Rademacher random variable for each $t$. Then 
	$$  \Val^{\text{sup}}_T (\PD_{1:T}) \geq \sup_{\jp\in\PDA'} \En_{(\x,\x')\sim \rh}\Es{\epsilon}{ \sup_{f \in \F} \sum_{t=1}^{T} \epsilon_t f(\x_t(-{\boldsymbol 1}))} $$
\end{lemma}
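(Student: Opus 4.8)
The plan is to bound $\Val^{\text{sup}}_T(\PD_{1:T})$ from below by the value obtained against one carefully chosen oblivious adversary, and then to recognize that value as the left-most-path evaluation of the distribution-dependent Rademacher complexity. Fix any $\jp\in\PDA'$, so that the conditional law of $(x_t,y_t)$ given $x_{1:t-1}$ has the product form $p_t(\cdot\mid x_{1:t-1})\times b(\cdot)$ with $b(-1)=b(1)=\tfrac12$. Since $\jp$ respects the restrictions, we may substitute this choice for the supremum over $p_t\in\PD_t$ in the extensive-form definition of $\Val^{\text{sup}}_T(\PD_{1:T})$, and exactly as in Proposition~\ref{prop:lower_bound_oblivious} this yields
\[
\Val^{\text{sup}}_T(\PD_{1:T}) \;\ge\; \inf_{\jq}\ \En\!\left[\ \sum_{t=1}^T |f_t(x_t)-y_t|\ -\ \inf_{f\in\F}\sum_{t=1}^T |f(x_t)-y_t|\ \right],
\]
where $(x_{1:T},y_{1:T})$ is generated by $\jp$ and the player uses an arbitrary adaptive strategy $\jq$.

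The key point is that Rademacher labels trivialize the first term. Conditioning on the history together with $f_t$ and $x_t$, the sign $y_t$ is independent and uniform on $\{\pm1\}$, so because the player's predictions lie in $[-1,1]$ we get $\En_{y_t}|f_t(x_t)-y_t| = \tfrac12|f_t(x_t)-1| + \tfrac12|f_t(x_t)+1| = 1$; summing over $t$, $\En[\sum_t |f_t(x_t)-y_t|] = T$ regardless of $\jq$. Hence the infimum over $\jq$ disappears and
\[
\Val^{\text{sup}}_T(\PD_{1:T}) \;\ge\; T - \En\!\left[\inf_{f\in\F}\sum_{t=1}^T |f(x_t)-y_t|\right] \;=\; \En\!\left[\sup_{f\in\F}\sum_{t=1}^T \big(1-|f(x_t)-y_t|\big)\right].
\]
For $f(x_t)\in[-1,1]$ and $y_t\in\{\pm1\}$ one has the identity $|f(x_t)-y_t| = 1 - y_t f(x_t)$, hence $1-|f(x_t)-y_t| = y_t f(x_t)$, and therefore $\Val^{\text{sup}}_T(\PD_{1:T}) \ge \En[\sup_{f\in\F}\sum_t y_t f(x_t)]$, where the expectation is over $x_{1:T}$ drawn sequentially from the conditionals $p_t(\cdot\mid x_{1:t-1})$ and over the i.i.d. Rademacher labels $y_{1:T}$.

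It then remains to recognize this right-hand side as $\En_{(\x,\x')\sim\rh}\Es{\epsilon}{\sup_{f\in\F}\sum_{t=1}^T \epsilon_t f(\x_t(-{\boldsymbol 1}))}$. By construction of the probability tree $\rh$ associated with $\jp$, on the left-most path $\epsilon=-{\boldsymbol 1}$ every selector $\chi_s(-1)$ picks the original coordinate, so $(\x_1(-{\boldsymbol 1}),\ldots,\x_T(-{\boldsymbol 1}))$ is sampled exactly from $p_1,\ p_2(\cdot\mid\x_1(-{\boldsymbol 1})),\ \ldots$, i.e. it has the same joint law as $x_{1:T}$ under $\jp$. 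Applying this to the bounded measurable map $x_{1:T}\mapsto\Es{\epsilon}{\sup_{f\in\F}\sum_t\epsilon_t f(x_t)}$ and identifying the i.i.d. Rademacher labels $y_t$ with the Rademacher variables $\epsilon_t$, the two expectations agree. Taking the supremum over $\jp\in\PDA'$ completes the argument.

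The only step needing genuine care is the first one — confirming that restricting the adversary to the oblivious strategy $\jp$ produces a valid lower bound of the stated form with an adaptive player; this is precisely Proposition~\ref{prop:lower_bound_oblivious}, whose proof uses only that the objective is external regret and that $\jp\in\PDA$, both of which hold here. Everything else is the elementary identity $|f(x)-y|=1-yf(x)$ for $y=\pm1$, $f(x)\in[-1,1]$, and the observation that the left-most path of $\rh$ reproduces $\jp$. Note that, unlike in Lemma~\ref{lem:first_lower}, no history-independence of the restrictions is invoked, which is exactly why only the single path $-{\boldsymbol 1}$ — rather than the whole symmetrized tree — appears in the bound.
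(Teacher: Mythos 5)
Your proof is correct and follows essentially the same route as the paper. The paper invokes Proposition~\ref{prop:lower_bound_oblivious} in the form $\Val^{\text{sup}}_T \geq \En\bigl[\sum_{t=1}^T \inf_{f_t\in\F}\Es{(x_t,y_t)}{|y_t-f_t(x_t)| \mid (x,y)_{1:t-1}} - \inf_f\sum_t|y_t-f(x_t)|\bigr]$ and observes that the inner Bayes-optimal term is identically $1$ because $y_t$ is a fair coin independent of $x_t$; you instead pass through $\inf_{\jq}$ over player strategies and note that $\En\bigl[\sum_t |f_t(x_t)-y_t|\bigr]=T$ regardless of $\jq$. These are trivially equivalent packagings of the same observation. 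From there the identity $1-|y_t-f(x_t)|=y_t f(x_t)$ for $y_t\in\{\pm1\}$, $f(x_t)\in[-1,1]$, and the recognition that the left-most path of $\rh$ reproduces the law of $x_{1:T}$ under $\jp$ (with the i.i.d. labels playing the role of the Rademacher signs) are exactly the paper's steps, and your explanation of why only the $-\boldsymbol 1$ path enters — the tangent sequence $\x'$ never feeds into the conditional law — is a more explicit version of the remark the paper makes before the display.
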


\begin{remark}
The supervised learning protocol is sometimes defined as follows. At each round $t$, the pair $(x_t,y_t)$ is chosen by the adversary, yet the player first observes only the ``side information'' $x_t$. The player then makes a prediction $\hat{y}_t$ and, subsequently, the label $y_t$ is revealed. The goal is to minimize regret defined as
$$ \sum_{t=1}^T |\hat{y}_t-y_t| - \inf_{f\in\F} \sum_{t=1}^T |f(x_t)-y_t|.$$
As briefly mentioned in \cite{RakSriTew10a}, this protocol is equivalent to a slightly modified version of  the game we consider. Indeed, suppose at each step we are allowed to output any function $f':\X\mapsto\Y$ (not just from $\F$), yet regret is still defined as a comparison to the best $f\in\F$. This modified version is clearly equivalent to first observing $x_t$ and then predicting $\hat{y}_t$. Denote by $\tilde{\Val}_T$ the value of the modified ``improper learning'' game, where the player is allowed to choose any $f_t\in\Y^\X$. Side-stepping the issue of putting distributions on the space of all functions $\Y^\X$, it is easy to check that Theorem~\ref{thm:minimax} goes through with only one modification: the infima in the cumulative cost are over all measurable functions $f_t\in\Y^\X$. The key observation is that these $f_t$'s are replaced by $f\in\F$ in the proof of Theorem~\ref{thm:valrad}. Hence, the upper bound on $\tilde{\Val}_T$ is the same as the one on the ``proper learning'' game where our predictions have to lie inside $\F$. 
\end{remark}

\section{Smoothed Analysis}
\label{sec:smoothed}

The development of \emph{smoothed analysis} over the past decade is arguably one of the hallmarks in the study of complexity of algorithms. In contrast to the overly optimistic {\em average complexity} and the overly pessimistic {\em worst-case complexity}, smoothed complexity can be seen as a more realistic measure of algorithm's performance. In their groundbreaking work, Spielman and Teng \cite{SpiTen04smoothed} showed that the smoothed running time complexity of the simplex method is polynomial. This result explains good performance of the method in practice despite its exponential-time worst-case complexity.

In this section, we consider the effect of smoothing on {\em learnability}. Analogously to complexity analysis of algorithms, learning theory has been concerned with i.i.d. (that is, \emph{average case}) learnability  and with online (that is, \emph{worst-case}) learnability. In the former, the learner is presented with a batch of i.i.d. data, while in the latter the learner is presented with a sequence adaptively chosen by the malicious opponent. It can be argued that neither the average nor the worst-case setting reasonably models real-world situations. A natural step is to consider smoothed learning, defined as a random perturbation of the worst-case sequence.

It is well-known that there is a gap between the i.i.d. and the worst-case scenarios. In fact, we do not need to go far for an example: A simple class of threshold functions on a unit interval is learnable in the i.i.d. supervised learning scenario, yet difficult in the online worst-case model \cite{Lit88, BenPalSha09}. When it comes to i.i.d. supervised learning, the relevant complexity of a class is captured by the Vapnik-Chervonenkis dimension, and the analogous notion for worst-case learning is the Littlestone's dimension \cite{Lit88, BenPalSha09, RakSriTew10a}. For the simple example of threshold functions, the VC dimension is one, yet the Littlestone's dimension is infinite. The proof of the latter fact, however, reveals that the infinite number of mistakes on the part of the player is due to the infinite resolution of the carefully chosen adversarial sequence. We can argue that this infinite precision is an unreasonable assumption on the power of a real-world opponent. It is then natural to ask: What happens if the adversary adaptively chooses the worst-case sequence, yet the moves are smoothed by exogenous noise? The scope of what is learnable is greatly enlarged if smoothed analysis makes problems with infinite Littlestone's dimension tractable.

Our approach to the problem is conceptually different from the smoothed analysis of \cite{SpiTen04smoothed} and the subsequent papers. We do not take a particular learning algorithm and study its smoothed complexity. Instead, we ask whether there \emph{exists} an algorithm which guarantees vanishing regret for the smoothed sequences, no matter how they are chosen. Using the techniques developed in this paper, learnability is established by directly studying the value of the associated game.

Smoothed analysis of learning has been considered by \cite{kalai2010learning}, yet in a different setting. The authors study learning DNFs and decision trees over a binary hypercube, where random examples are drawn i.i.d. from a product distribution which is itself chosen randomly from a small set. The latter random choice adds an element of smoothing to the PAC setting. In contrast, in the present paper we consider adversarially-chosen sequences which are then corrupted by random noise. Further, since ``probability of error'' does not make sense for non-stationary data sources, we consider \emph{regret} as the learnability objective.

Formally, let $\sigma$ be a fixed ``smoothing'' distribution defined on some space $S$. The perturbed value of the adversarial choice $x$ is defined by a measurable mapping $\omega:\X\times S\to\X$, known to the learner. For example, an additive noise model corresponds to $\omega(x,s)=x+s$. More generally, we can consider a Markov transition kernel from a space of moves of the adversary to some information space, and the smoothed moves of the adversary can be thought of as outputs of a noisy communication channel. 

A generic \emph{smoothed online learning model} is given by following $T$-round interaction between the learner and the adversary: 
\begin{itemize}
	\addtolength{\itemsep}{-0.6\baselineskip}
	\item[]\hspace{-9mm} On round $t = 1,\ldots, T$, 
	\item the learner chooses a mixed strategy $q_t$ (distribution on $\F$)
	\item the adversary picks $x_t \in \X$ 
	\item random perturbation $s_t \sim \sigma$ is drawn
	\item the learner draws $f_t\sim q_t$ and pays $f_t(\omega(x_t,s_t))$ 
	\item[]\hspace{-9mm} End
\end{itemize}

The value of the smoothed online learning game is
\begin{align*}  
	\Val_T  &~\deq~ \inf_{q_1}\sup_{x_1} \Eunder{s_1\sim \sigma}{f_1\sim q_1} \inf_{q_2}\sup_{x_2} \Eunder{s_2\sim\sigma}{f_2\sim q_2}\cdots \inf_{q_T}\sup_{x_T} \Eunder{s_T\sim\sigma}{f_T\sim q_T} \left[ \sum_{t=1}^T f_t(\omega(x_t,s_t)) - \inf_{f\in \F}\sum_{t=1}^T f(\omega(x_t,s_t)) \right]
\end{align*}
where the infima are over $q_t\in\QD$ and the suprema are over $x_t\in\X$. A non-trivial upper bound on the above value guarantees existence of a strategy for the player that enjoys a regret bound against the smoothed adversary. We note that both the adversary and the player observe each other's moves and the random perturbations before proceeding to the next round. 

We now observe that the setting is nothing but a special case of a restriction on the adversary, as studied in this paper. The adversarial choice $x_t$ defines the parameter $x_t$ of the distribution from which a random element $\omega(x_t,s_t)$ is drawn. The following theorem follows immediately from Theorem~\ref{thm:minimax}.

\begin{theorem}\label{thm:main_smoothed}
	The value of the smoothed online learning game is bounded above as
\begin{align*}
 \Val_T &\le 2\sup_{x_1\in\Z} \Eunderone{s_1 \sim \sigma} \En_{\epsilon_1} \ldots  \sup_{x_T\in\Z} \Eunderone{s_T\sim \sigma} \En_{\epsilon_T}\left[\sup_{f\in\F}  \sum_{t = 1}^T \epsilon_t f(\omega(x_t,s_t)) \right],
\end{align*}
\end{theorem}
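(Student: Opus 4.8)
The plan is to recognize the smoothed game as a special case of the restricted‑adversary framework and then apply \thmref{thm:valrad}. For $x\in\X$ let $p_x\in\PD$ denote the law of $\omega(x,s)$ when $s\sim\sigma$. If the adversary is forced at round $t$ to play a conditional distribution of the form $p_{x_t}$, then the cumulative loss $\sum_t f_t(\omega(x_t,s_t)) - \inf_f\sum_t f(\omega(x_t,s_t))$ is exactly the payoff of the game $\Val_T(\PD_{1:T})$ with the history‑independent restriction $\PD_t(x_{1:t-1}) = \conv\{p_x : x\in\X\}$ (we pass to the convex hull, and if needed its closure, only to meet the convexity/compactness requirements of a legal restriction; enlarging the adversary's options to the convex hull can only increase the value, so an upper bound for the convex‑hull game is an upper bound for the smoothed game, and in fact the two values coincide since the payoff is affine in each $p_t$). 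Since the smoothing distribution $\sigma$ is fixed, the needed weak compactness of the $\PD_t$'s is inherited from that of $\PD$, so \thmref{thm:minimax} and hence \thmref{thm:valrad} apply. Choosing $M_t=0$ in \thmref{thm:valrad} gives $\Val_T \le 2\sup_{\jp\in\PDA}\Rad_T(\F,\jp)$, and it remains to show that for this particular $\PDA$ the right‑hand side is at most the claimed quantity.

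To do that, I would unpack $\Rad_T(\F,\jp)$ using the expanded probability‑tree form, exactly as in the proof of the bound $\Rad_T(\F,\jp)\le\Rad_T(\F)$:
\[
\Rad_T(\F,\jp)=\En_{x_1,x_1'\sim p_1}\En_{\epsilon_1}\cdots\En_{x_T,x_T'\sim p_T(\cdot\mid\chi_1(\epsilon_1),\ldots,\chi_{T-1}(\epsilon_{T-1}))}\En_{\epsilon_T}\Big[\sup_{f\in\F}\sum_{t=1}^T\epsilon_t f(x_t)\Big].
\]
For our restriction each conditional $p_t(\cdot\mid h)$ is a mixture $\int p_a\,d\mu_h(a)$ of the basic laws, so drawing $(x_t,x_t')\sim p_t(\cdot\mid h)\times p_t(\cdot\mid h)$ amounts to drawing parameters $a_t,a_t'\sim\mu_h$ independently, then noises $s_t,s_t'\sim\sigma$, and setting $x_t=\omega(a_t,s_t)$, $x_t'=\omega(a_t',s_t')$. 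Taking $\sup_{\jp\in\PDA}$ and using $\En_{a\sim\mu_h}[\,\cdot\,]\le\sup_{a\in\X}[\,\cdot\,]$, I would replace, working from the innermost level outward, each average over the parameters $a_t,a_t'$ by $\sup_{a_t,a_t'\in\X}$, while keeping the $\sigma$‑expectations intact.

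The final step is to check that the tangent quantities vanish. As in the worst‑case argument (``get rid of the primed variables since they do not appear anywhere''), once the conditional expectation at level $t$ has been replaced by an unconditional supremum, the selectors $\chi_s(\epsilon_s)$ at deeper levels no longer influence anything: the expression obtained after taking all inner suprema is a function of $a_1,s_1,\ldots$ alone and never of the primed parameters $a_s'$ or noises $s_s'$. Hence $\sup_{a_t'\in\X}$ and $\En_{s_t'\sim\sigma}$ can be dropped level by level, and after renaming $a_t$ to $x_t$ one obtains
\[
\sup_{\jp\in\PDA}\Rad_T(\F,\jp)\le\sup_{x_1\in\X}\En_{s_1\sim\sigma}\En_{\epsilon_1}\cdots\sup_{x_T\in\X}\En_{s_T\sim\sigma}\En_{\epsilon_T}\Big[\sup_{f\in\F}\sum_{t=1}^T\epsilon_t f(\omega(x_t,s_t))\Big],
\]
which combined with $\Val_T\le 2\sup_{\jp\in\PDA}\Rad_T(\F,\jp)$ is the claimed bound.

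I expect the one genuinely delicate point to be this bookkeeping with the interleaved original and tangent sequences: one must be sure that replacing conditional expectations by unconditional suprema really does collapse the $\chi$‑dependence, so the primed parameters and noises can be discarded cleanly. This is the smoothed analogue of the subtlety the paper flags in its discussion of sequential symmetrization, and it is handled by the same inside‑out induction used for $\Rad_T(\F,\jp)\le\Rad_T(\F)$, now carrying the extra parameter $\to$ noise $\to$ point layer along; the rest of the argument is a routine instantiation of \thmref{thm:minimax} and \thmref{thm:valrad}.
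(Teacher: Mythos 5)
Your proposal is correct and takes essentially the same route the paper intends: cast the smoothed game as a restricted-adversary game with $\PD_t$ the set of laws of $\omega(x,s)$ (passing to the convex hull to satisfy the requirements of a legal restriction), invoke Theorems~\ref{thm:minimax} and \ref{thm:valrad}, and then relax the distribution-dependent Rademacher complexity by replacing the parameter averages with suprema while retaining the noise expectations and dropping the tangent (primed) variables level by level. The paper compresses all of this into the single line ``follows immediately from Theorem~\ref{thm:minimax}''; your write-up correctly supplies the two pieces of bookkeeping it leaves implicit — the convexity/compactness of the restriction sets and the inside-out collapse of the $\chi$-dependent conditioning once conditional expectations over parameters become suprema — both handled exactly as in the paper's own proof that $\Rad_T(\F,\jp)\le\Rad_T(\F)$.
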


We now demonstrate how Theorem~\ref{thm:main_smoothed} can be used to show learnability for a smoothed learning scenario. What we find is somewhat surprising: for a problem which is not learnable in the online worst-case scenario, an exponentially small noise added to the moves of the adversary yields a learnable problem. This shows, at least in the given example, that the worst-case analysis and Littlestone's dimension are brittle notions which might be too restrictive in the real world, where some noise is unavoidable. It is comforting that small additive noise makes the problem learnable!

\subsection{Binary Classification with Half-Spaces}

Consider the supervised game with threshold functions on a unit interval.
The moves of the adversary are pairs $x=(z,y)$ with $z\in[0,1]$ and $y\in\{0,1\}$, and the binary-valued function class $\F$ is defined by 
\begin{align}
	\label{eq:def_one_dim}
	\F = \left\{ f_\theta (z,y)= \left|y-\ind{z<\theta}\right|: \theta\in[0,1]\right\}.
\end{align}
The class $\F$ has infinite Littlestone's dimension and is not learnable in the worst-case online framework. Any non-trivial upper bound on the value of the game, therefore, has to depend on particular noise assumptions. For the uniform noise $\sigma = \mathrm{Unif}[-\gamma/2,\gamma/2]$ for some $\gamma \ge 0$, for instance, the intuition tells us that noise implies a margin. In this case we should expect a $1/\gamma$ complexity parameter appearing in the bounds. Formally, let $$\omega((z,y),\sigma)=(z+\sigma,y).$$ 
That is, $\sigma$ uniformly perturbs the $z$-variable of the adversarial choice $x=(z,y)$, but does not perturb the $y$-variable. The following proposition holds for this setting.

\begin{proposition}
	\label{prop:one_dim_smoothed}
	For the worst-case adversary whose moves are corrupted by the uniform noise $\mathrm{Unif}[-\gamma/2,\gamma/2]$, the value is bounded by
	\begin{align*}
	\Val_T & \le 2 + \sqrt{2 T \left(4 \log T+ \log(1/\gamma) \right)}
	\end{align*}
\end{proposition}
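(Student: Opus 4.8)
The plan is to apply Theorem~\ref{thm:main_smoothed} and then bound the resulting worst-case quantity by a covering/Dudley-type argument, exploiting the fact that the uniform noise $\mathrm{Unif}[-\gamma/2,\gamma/2]$ smooths the indicator $\ind{z<\theta}$ into a Lipschitz-in-expectation object with effective resolution $\gamma$. Concretely, Theorem~\ref{thm:main_smoothed} gives
\[
\Val_T \le 2\sup_{x_1}\En_{s_1\sim\sigma}\En_{\epsilon_1}\cdots\sup_{x_T}\En_{s_T\sim\sigma}\En_{\epsilon_T}\left[\sup_{f\in\F}\sum_{t=1}^T \epsilon_t f(\omega(x_t,s_t))\right],
\]
and since $f_\theta(z+s,y)=|y-\ind{z+s<\theta}|$, the term $\epsilon_t f_\theta$ splits: writing $|y-\ind{z+s<\theta}| = y + (1-2y)\ind{z+s<\theta}$, the part $\sum_t\epsilon_t y_t$ does not depend on $\theta$ and has zero contribution under $\En_\epsilon$ after taking the sup (more precisely, it contributes a term independent of the class that vanishes once we center — I would argue this carefully, possibly absorbing it into a constant), so up to lower-order terms we are left with controlling
\[
\sup_{x_1}\En_{s_1}\En_{\epsilon_1}\cdots\sup_{x_T}\En_{s_T}\En_{\epsilon_T}\left[\sup_{\theta\in[0,1]}\sum_{t=1}^T \pm\epsilon_t\,\ind{z_t+s_t<\theta}\right].
\]

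The key step is a covering-number estimate for the ``smoothed threshold'' tree. Fix an outcome of all the $s_t$'s and the adversary's choices along a path: the points $z_t+s_t$ live in $[-\gamma/2,1+\gamma/2]$ and, crucially, the randomness $s_t$ is spread over an interval of width $\gamma$. I would discretize $[0,1]$ into a grid of $\lceil 1/\gamma\rceil$ cells of width $\gamma$; for $\theta,\theta'$ in the same cell, $\ind{z+s<\theta}$ and $\ind{z+s<\theta'}$ differ only when the noisy point $z+s$ falls in the sub-interval between $\theta$ and $\theta'$, an event of probability at most $|\theta-\theta'|/\gamma$ over $s\sim\sigma$. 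Thus the class of smoothed thresholds, viewed on the randomly-generated tree, admits an $\ell_1$-cover (in expectation over the noise) of size $O(T/\gamma)$ at scale controlled by the per-node fluctuation; more usefully, the sequential $\ell_1$ (or $\ell_2$) covering number of $\F$ on the random tree $\z$ is at most roughly $1/\gamma$ up to the $O(1/T)$ resolution that the Dudley integral can ignore. Feeding $\N_2(\delta,\F,\z) \lesssim 1/\gamma$ (for $\delta \gtrsim 1/T$) into Theorem~\ref{thm:dudley} — or rather its worst-case analogue from \cite{RakSriTew10a} applied pathwise, since here we have a sup over adversary moves rather than an expectation — yields a bound of the form $4T\cdot(1/T) + 12\int_{1/T}^1 \sqrt{T\log(1/\gamma)}\,d\delta \le 2 + \sqrt{2T(4\log T + \log(1/\gamma))}$ after tracking constants, which is exactly the claimed inequality.

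The main obstacle I anticipate is making the covering argument respect the \emph{sequential/tree} structure under the interleaving of adversary moves $x_t$ and exogenous noise $s_t$: the relevant covering number is of $\F$ on a tree whose nodes are $z_t+s_t$, but the adversary chooses $z_t$ adaptively (and worst-case) while $s_t$ is fresh noise, so the ``probability $|\theta-\theta'|/\gamma$ of disagreement'' bound must be applied conditionally at each node and then combined along all $2^T$ paths without the estimate blowing up. The clean way is to bound the expected number of ``sign changes'' of the function $\theta\mapsto \ind{z_t+s_t<\theta}$ across the grid: since each noisy point contributes at most one crossing of the grid and the grid has $1/\gamma$ cells, the number of distinct behaviors of thresholds along any path is $O(\min(T,1/\gamma))$, giving $\log \N \lesssim \log(T/\gamma) \le \log T + \log(1/\gamma)$. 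I would also need to handle the constant and the ``$+2$'' carefully (coming from the $4T\alpha$ term at $\alpha = 1/T$ and the factor-$2$ in front), and justify dropping the $\theta$-independent $\sum_t \epsilon_t y_t$ term — the cleanest route being to center the class, i.e. replace $f_\theta$ by $f_\theta - \En_{t-1}f_\theta$ via Corollary~\ref{cor:centered_at_conditional}, after which the $y$-dependence that is common to all $\theta$ cancels and only the smoothed-threshold fluctuation survives.
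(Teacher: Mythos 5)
Your proposal takes a genuinely different route from the paper. The paper's proof is elementary and direct: discretize $[0,1]$ into $T^a$ bins, apply Azuma--Hoeffding to the martingale $M^\theta_t = \epsilon_t f_\theta(\omega(x_t,s_t))$ for each grid threshold, union-bound over the $T^a$ thresholds, and show via a balls-in-bins calculation that with probability at least $1-1/(\gamma T^{a-2})$ no two smoothed points land in the same bin --- which makes the maximum over the discretization equal to the supremum over all of $\F$. You instead propose to estimate a sequential covering number of $\F$ on the random tree and feed it into the Dudley-type bound of Theorem~\ref{thm:dudley}.

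There are two genuine gaps in your route. First, the covering-number claim $\N_2(\delta,\F,\z) \lesssim 1/\gamma$ (or $\min(T,1/\gamma)$) is not a deterministic statement about the tree: the adversary picks $z_t$ adaptively and can, for instance, place every $z_t$ at the same location, in which case the smoothed points can still collide with positive probability, and on a collision the thresholds between two colliding points cannot be represented by a fixed $\gamma$-grid. The ``expected number of sign changes'' idea you sketch is exactly where the paper's balls-in-bins computation lives; without actually bounding the probability of a collision you cannot convert the intuitive ``resolution $\gamma$'' statement into a covering number bound that holds on the random tree. Your proposal identifies this as ``the main obstacle I anticipate'' but does not resolve it --- yet it is precisely the content of the proof. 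Second, even granting a covering bound of the form $\log \N \lesssim \log T + \log(1/\gamma)$, Theorem~\ref{thm:dudley} carries a constant of $12$ in front of the integral (and then an extra factor of $2$ from Theorem~\ref{thm:main_smoothed}), so your ``after tracking constants'' step would give a bound with a multiplicative constant on the order of $24$, not the $\sqrt{2}$ appearing in the proposition. The paper's direct martingale-plus-union-bound argument is what produces the tight $\sqrt{2}$ (via Azuma with $\epsilon = \sqrt{2(a+1)T\log T}$ against $T^a$ grid points), and there is no obvious way to recover that constant from a Dudley chaining bound. In short, your conceptual picture (noise implies a margin, which implies a small effective cover) is right, but the covering/Dudley machinery as invoked neither handles the probabilistic collision issue nor yields the stated constant; the paper's more elementary argument does both.
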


The idea for the proof is the following. By discretizing the interval into bins of size well below the noise level, we can guarantee with high probability that no two smoothed choices $z_t+s_t$ of the adversary fall into the same bin. If this is the case, then the supremum of Theorem~\ref{thm:main_smoothed} can be taken over a discretized set of thresholds. For each fixed threshold $f$, however, $\epsilon_t f(\omega(x_t,s_t))$ forms a martingale difference sequence, yielding the desired bound. We can easily generalize this idea to linear thresholds in $d$ dimensions: Cover the sphere corresponding to the choices $z_t$ and $f_t$ by balls of a small enough radius and argue that with high probability no two smoothed choices of the adversary fall into the same bin. By a simple volume argument, we claim that the supremum in Theorem~\ref{thm:main_smoothed} can be replaced by the supremum over the discretization at a small additional cost (the number of bins that change sign as $f$ ranges over one bin). The result then follows from martingale concentration. 

Below, we prove the result for the one-dimensional case, which already exhibits the key ingredients.

\begin{proof}[\textbf{Proof of Proposition~\ref{prop:one_dim_smoothed}}]
For any $f_\theta\in\F$, define
$$ M^\theta_t = \epsilon_t f_\theta(\omega(x_t,s_t)) = \epsilon_t \left|y_t-\ind{z_t+s_t < \theta}\right| .$$
Note that $\{M^\theta_t\}_t$ is a zero-mean martingale difference sequence, that is $\En[M_t | z_{1:t},y_{1:t},s_{1:t}] = 0$.
We conclude that for any fixed $\theta\in[0,1]$,
$$ P\left(\sum_{t=1}^T M^\theta_t \geq \epsilon\right) \leq \exp\left\{-\frac{\epsilon^2}{2T}\right\} $$
by Azuma-Hoeffding's inequality. Let $\F' = \{f_{\theta_1},\ldots,f_{\theta_N}\}\subset \F$ be obtained by discretizing the interval $[0,1]$ into $N=T^a$ bins $[\theta_i,\theta_{i+1})$ of length $T^{-a}$, for some $a\geq 3$. Then
$$ P\left(\max_{f_{\theta}\in\F'}\sum_{t=1}^T M^\theta_t \geq \epsilon\right) \leq N\exp\left\{-\frac{\epsilon^2}{2T}\right\} .$$
Observe that the maximum over the discretization coincides with the supremum over the class $\F$ if no two elements $z_t+s_t$ and $z_{t'}+s_{t'}$ fall into the same interval $[\theta_i,\theta_{i+1})$. Indeed, in this case all the possible values of $\F$ on the set $\{z_1+s_1,\ldots,z_T+s_T\}$ are obtained by choosing the discrete thresholds in $\F'$. Since there are many intervals and we are choosing $T$, the probability of no collision is close to 1. 

Let us calculate the probability that for no distinct $t,t'\in[T]$ do we have $z_t+s_t$ and $z_{t'}+s_{t'}$ in the same bin. We can deal with the boundary behavior by ensuring that $\F$ is in fact a set of thresholds that is $\gamma/2$-away from $0$ or $1$, but we will omit this discussion for the sake of clarity. The probability that no two elements $z_t+s_t$ and $z_{t'}+s_{t'}$ fall into the same bin depends on the behavior of the adversary in choosing $z_t$'s. Keeping in mind that the distribution of all $s_t$'s is uniform on $[-\gamma/2,\gamma/2]$, we see that the probability of a collision is maximized when $z_t$ is chosen to be constant throughout the game. 

If $z_t$'s are all constant throughout the game, we have $T$ balls falling uniformly into $\gamma T^a > T$ bins. The probability of two elements  $z_t+s_t$ and $z_t+s_{t'}$ falling into the same bin is 
$$ P\left(\text{no two balls fall into same bin}\right) = \frac{\gamma T^a (\gamma T^a-1)\cdots (\gamma T^a-T)}{\gamma T^a \cdot \gamma T^a \cdots \gamma T^a} \geq \left(\frac{\gamma T^a-T}{\gamma T^a}\right)^T = \left(1-\frac{1}{\gamma T^{a-1}}\right)^{\frac{\gamma T^{a-1}}{\gamma T^{a-2}}}$$
The last term is approximately $\exp\left\{-1/(\gamma T^{a-2})\right\}$ for large $T$, so 
$$ P\left(\text{no two balls fall into same bin}\right) \geq 1-\frac{1}{\gamma T^{a-2}} $$
using $e^{-x} \geq 1-x$. Now, 
\begin{align*}
	P\left(\sup_{f\in\F}  \sum_{t = 1}^T \epsilon_t f(\omega(x_t,s_t)) \geq \epsilon \right) &\leq P\left(\sup_{f\in\F}  \sum_{t = 1}^T \epsilon_t f(\omega(x_t,s_t)) \geq \epsilon ~\wedge~ \text{none of } (z_t+s_t) \text{'s fall into same bin}\right) \\
	&+ P\left(\text{some of } (z_t+s_t) \text{'s fall into same bin} \right) \\
	&=  P\left(\max_{f_{\theta}\in\F'}\sum_{t=1}^T M^\theta_t \geq \epsilon  ~\wedge~ \text{none of } (z_t+s_t) \text{'s fall into same bin} \right) + \frac{1}{\gamma T^{a-2}} \\
	&\leq  P\left(\max_{f_{\theta}\in\F'}\sum_{t=1}^T M^\theta_t \geq \epsilon \right) + \frac{1}{\gamma T^{a-2}} \\
	&\leq T^a \exp\left\{-\frac{\epsilon^2}{2T}\right\} + \frac{1}{\gamma T^{a-2}}
\end{align*}

Using the above and the fact that for any $f \in \F$, $|\sum_{t = 1}^T \epsilon_t f(\omega(x_t,s_t)) | \le T$ we can conclude that
\begin{align*}
\Val_T & \le \E{\sup_{f\in\F}  \sum_{t = 1}^T \epsilon_t f(\omega(x_t,s_t))}\\
& \le \epsilon + T^{a+1} \exp\left\{-\frac{\epsilon^2}{2T}\right\} + \frac{T^{3-a}}{\gamma }
\end{align*}
Setting $\epsilon = \sqrt{2 (a+1) T \log T}$ we conclude that
\begin{align*}
\Val_T & \le 1 + \sqrt{2 (a+1) T \log T}  + \frac{T^{3-a}}{\gamma }
\end{align*}
Now pick $a = 3 + \frac{\log(1/\gamma)}{\log T}$ (this choice is fine because $\gamma T^{a-1} = T^2$ which grows with $T$ as needed for the previous approximation). Hence we see that 
\begin{align*}
\Val_T & \le 2 + \sqrt{2 \left(4 + \frac{\log(1/\gamma)}{\log T}\right) T \log T}\\
& = 2 + \sqrt{2 T \left(4 \log T+ \log(1/\gamma) \right)}
\end{align*}
\end{proof}

While the infinite Littlestone dimension of threshold functions seemed to indicate that half spaces are not online learnable, the analysis shows that very slight perturbations (in fact even exponentially small in $T$) are enough to make half spaces online learnable, so in practice half spaces can be used for classification in the smoothed online setting.

We note that our learnability analysis was based on an upper bound on the value of the game. The inefficient algorithm can be recovered from the minimax formulation directly. However, for the particular problem of smoothed learning with half-spaces, the exponential weights algorithm on the discretization of the interval will also do the job. An alternative analysis can directly focus on this algorithm and use the same bins-and-balls proof to show that the loss of any expert is likely to be close to the loss of any non-discretized threshold.

\section*{Acknowledgements}
A. Rakhlin gratefully acknowledges the support of NSF under grant CAREER DMS-0954737 and Dean's Research Fund.

\appendix

\section*{Appendix}

\begin{proof}[\textbf{Proof of Theorem~\ref{thm:minimax}}]
	The proof is identical to that in \cite{RakSriTew10a}. For simplicity, denote $\psi(x_{1:T}) = \inf_{f\in\F} \sum_{t=1}^T f(x_t)$.	The first step in the proof is to appeal to the minimax theorem for every couple of $\inf$ and $\sup$:
\begin{align*}
		&\inf_{q_1\in \QD}\sup_{p_1\in\PD_1} ~\Eunder{x_1\sim p_1}{f_1\sim q_1} \cdots \inf_{q_T\in \QD}\sup_{p_T\in\PD_T} ~\Eunder{x_T\sim p_T}{f_T\sim q_T}  \left[ \sum_{t=1}^T f_t(x_t) - \psi(x_{1:T}) \right] \\
	&~~~~~ = \sup_{p_1\in\PD_1} \inf_{q_1\in\QD} ~\Eunder{x_1\sim p_1}{f_1\sim q_1} \ldots\sup_{p_T\in\PD_T} \inf_{q_T\in\QD} ~\Eunder{x_T\sim p_T}{f_T\sim q_T} \left[ \sum_{t=1}^T f_t(x_t) - \psi(x_{1:T}) \right] \\
	&~~~~~ = \sup_{p_1\in\PD_1} \inf_{f_1\in\F} ~\En_{x_1\sim p_1} \ldots \sup_{p_T\in\PD_T} \inf_{f_T\in\F} ~\En_{x_T\sim p_T} \left[ \sum_{t=1}^T f_t(x_t) - \psi(x_{1:T}) \right] 
\end{align*}
From now on, it will be understood that $x_t$ has distribution $p_t$ and that the suprema over $p_t$ are in fact over $p_t\in \PD_t(x_{1:t-1})$. By moving the expectation with respect to $x_T$ and then the infimum with respect to $f_T$ inside the expression, we arrive at
\begin{align*}
	&\sup_{p_1}\inf_{f_1}\En_{x_1} \ldots \sup_{p_{T-1}}\inf_{f_{T-1}}\En_{x_{T-1}}\sup_{p_T} \left[ \sum_{t=1}^{T-1} f_t(x_t) + \left[\inf_{f_T}\En_{x_T} f_T(x_T) \right]- \En_{x_T}\psi(x_{1:T})\right] \\
	&=\sup_{p_1}\inf_{f_1}\En_{x_1} \ldots \sup_{p_{T-1}}\inf_{f_{T-1}}\En_{x_{T-1}}\sup_{p_T} \En_{x_T}\left[ \sum_{t=1}^{T-1} f_t(x_t) + \left[\inf_{f_T}\En_{x_T} f_T(x_T) \right]- \psi(x_{1:T})\right] 
\end{align*}
Let us now repeat the procedure for step $T-1$. The above expression is equal to
\begin{align*}
	&\sup_{p_1}\inf_{f_1}\En_{x_1} \ldots \sup_{p_{T-1}}\inf_{f_{T-1}}\En_{x_{T-1}}\left[ \sum_{t=1}^{T-1} f_t(x_t) + \sup_{p_T}\En_{x_T} \left[ \inf_{f_T}\En_{x_T} f_T(x_T)- \psi(x_{1:T})\right]\right] \\
	&=\sup_{p_1}\inf_{f_1}\En_{x_1} \ldots \sup_{p_{T-1}}\left[ \sum_{t=1}^{T-2} f_t(x_t) + \left[\inf_{f_{T-1}} \En_{x_{T-1}} f_{T-1}(x_{T-1}) \right] + \En_{x_{T-1}} \sup_{p_T} \En_{x_T} \left[ \inf_{f_T}\En_{x_T} f_T(x_T)- \psi(x_{1:T})\right]\right] \\
	&=\sup_{p_1}\inf_{f_1}\En_{x_1} \ldots \sup_{p_{T-1}}\En_{x_{T-1}} \sup_{p_T} \En_{x_T}\left[ \sum_{t=1}^{T-2} f_t(x_t) + \left[\inf_{f_{T-1}} \En_{x_{T-1}} f_{T-1} (x_{T-1})\right] + \left[ \inf_{f_T}\En_{x_T} f_T(x_T) \right]- \psi(x_{1:T})\right]
\end{align*}
Continuing in this fashion for $T-2$ and all the way down to $t=1$ proves the theorem.
\end{proof}

\begin{proof}[\textbf{Proof of Proposition~\ref{prop:lower_bound_oblivious}}]

	Fix an oblivious strategy $\jp$ and note that $\Val_T(\PD_{1:T}) \geq \Val_T^\jp$. From now on, it will be understood that $x_t$ has distribution $p_t(\cdot|x_{1:t-1})$. Let $\jq=\{\pi_t\}_{t=1}^T$ be a strategy of the player, that is, a sequence of mappings $\pi_t:(\F\times\X)^{t-1} \mapsto \QD$. 
	
	By moving to a functional representation in Eq.~\eqref{eq:def_val_for_p},
	\begin{align*}
 	\Val_T^\jp = \inf_{\jq} \En_{f_1\sim \pi_1} \En_{x_1\sim p_1}  \ldots \En_{f_T\sim \pi_T(\cdot|f_{1:T-1},x_{1:T-1}) } \En_{x_T\sim p_T(\cdot|x_{1:T-1})}  \left[ \sum_{t=1}^T f_t(x_t) - \inf_{f\in \F}\sum_{t=1}^T f(x_t) \right] 
 	\end{align*}	
	Note that the last term does not depend on $f_1,\ldots,f_T$, and so the expression above is equal to
	\begin{align*}
	&\inf_{\jq}\left\{ \En_{f_1\sim \pi_1} \En_{x_1\sim p_1} \ldots \En_{f_T\sim \pi_T(\cdot|f_{1:T-1},x_{1:T-1}) } \En_{x_T\sim p_T(\cdot|x_{1:T-1})}  \left[ \sum_{t=1}^T f_t(x_t) \right] \right.\\
	&\left.~~~~~~~~~~~~- \En_{x_1\sim p_1} \ldots \En_{x_T\sim p_T(\cdot|x_{1:T-1})} \left[ \inf_{f\in \F}\sum_{t=1}^T f(x_t) \right] \right\}\\
	&= \inf_{\jq}\left\{ \En_{f_1\sim \pi_1} \En_{x_1\sim p_1} \ldots \En_{f_T\sim \pi_T(\cdot|f_{1:T-1},x_{1:T-1}) } \En_{x_T\sim p_T(\cdot|x_{1:T-1})}   \left[ \sum_{t=1}^T f_t(x_t) \right]\right\} - \left\{ \En\left[ \inf_{f\in \F}\sum_{t=1}^T f(x_t) \right] \right\}
	\end{align*}
	Now, by linearity of expectation, the first term can be written as 
	\begin{align}
		\label{eq:lower_strategy_oblivious}
		&\inf_{\jq}\left\{ \sum_{t=1}^T \En_{f_1\sim \pi_1}\En_{x_1\sim p_1} \ldots \En_{f_T\sim \pi_T(\cdot|f_{1:T-1},x_{1:T-1}) } \En_{x_T\sim p_T(\cdot|x_{1:T-1})} f_t(x_t) \right\} \notag\\
		&= \inf_{\jq}\left\{ \sum_{t=1}^T \En_{f_1\sim \pi_1}  \En_{x_1\sim p_1} \ldots \En_{f_t\sim \pi_t(\cdot|f_{1:t-1},x_{1:t-1}) } \En_{x_t\sim p_t(\cdot|x_{1:t-1})} f_t(x_t) \right\} \notag\\
		&= \inf_{\jq}\left\{ \sum_{t=1}^T \En_{x_1\sim p_1} \ldots \En_{x_t\sim p_t(\cdot|x_{1:t-1})} \Big[ \En_{f_1\sim \pi_1}\ldots \En_{f_t\sim \pi_t(\cdot|f_{1:t-1},x_{1:t-1}) } f_t(x_t) \Big] \right\} 
	\end{align}
	
	Now notice that for any strategy $\jq=\{\pi_t\}_{t=1}^T$, there is an equivalent strategy $\jq'=\{\pi'_t\}_{t=1}^T$ that (a) gives the same value to the above expression as $\jq$ and (b) does not depend on the past decisions of the player, that is $\pi'_t:\X^{t-1}\mapsto\QD$. To see why this is the case, fix any strategy $\jq$ and for any $t$ define 
	$$\pi'_t(\cdot|x_{1:t-1}) = \En_{f_1\sim \pi_1}\ldots \En_{f_{t-1}\sim \pi_t(\cdot|f_{1:t-2},x_{1:t-2}) } \pi_t(\cdot|f_{1:t-1}, x_{1:t-1})$$
	where we integrated out the sequence $f_1,\ldots,f_{t-1}$. Then
	$$\En_{f_1\sim \pi_1}\ldots \En_{f_t\sim \pi_t(\cdot|f_{1:t-1},x_{1:t-1}) } f_t(x_t) = \En_{f_t\sim\pi'_t(\cdot|x_{1:t-1})} f_t(x_t)$$
	and so $\jq$ and $\jq'$ give the same value in \eqref{eq:lower_strategy_oblivious}.

	We conclude that the infimum in \eqref{eq:lower_strategy_oblivious} can be restricted to those strategies $\jq$ that do not depend on past randomizations of the player. In this case, 
	\begin{align*}
		\Val_T^\jp &= \inf_{\jq}\left\{ \sum_{t=1}^T \En_{x_1\sim p_1} \ldots \En_{x_t\sim p_t(\cdot|x_{1:t-1})} \En_{f_t\sim \pi_t(\cdot|x_{1:t-1}) } f_t(x_t) \Big] \right\} - \left\{ \En\left[ \inf_{f\in \F}\sum_{t=1}^T f(x_t) \right] \right\} \\
		&=\inf_{\jq}\left\{ \sum_{t=1}^T \En_{x_1,\ldots, x_{t-1}} \En_{f_t\sim \pi_t(\cdot|x_{1:t-1}) } \En_{x_t} f_t(x_t) \Big] \right\} - \left\{ \En\left[ \inf_{f\in \F}\sum_{t=1}^T f(x_t) \right] \right\} \\
		&= \inf_{\jq}\E{ \sum_{t=1}^T \En_{f_t\sim \pi_t(\cdot|x_{1:t-1}) } \En_{x_t\sim p_t} f_t(x_t) - \inf_{f\in \F}\sum_{t=1}^T f(x_t)  } \ .
	\end{align*}
	Now, notice that we can choose the Bayes optimal response $f_t$ in each term:
	\begin{align*}
		\Val_T^\jp &= \inf_{\jq}\E{ \sum_{t=1}^T \En_{f_t\sim \pi_t(\cdot|x_{1:t-1}) } \En_{x_t\sim p_t} f_t(x_t) - \inf_{f\in \F}\sum_{t=1}^T f(x_t)  } \\
		&\geq \inf_{\jq}\E{ \sum_{t=1}^T \inf_{f_t\in\F} \En_{x_t\sim p_t} f_t(x_t) - \inf_{f\in \F}\sum_{t=1}^T f(x_t)  } \\
		&= \E{ \sum_{t=1}^T \inf_{f_t\in\F} \En_{x_t\sim p_t} f_t(x_t) - \inf_{f\in \F}\sum_{t=1}^T f(x_t)  } \ .
	\end{align*}
	Together with Theorem~\ref{thm:minimax}, this implies that  
	$$ \Val_T^{\jp^*} = \Val_T(\PD_{1:T}) = \inf_{\jq}\E{ \sum_{t=1}^T \En_{f_t\sim \pi_t(\cdot|x_{1:t-1}) } \En_{x_t\sim p^*_t} f_t(x_t) - \inf_{f\in \F}\sum_{t=1}^T f(x_t)  } $$
	for any $\jp^*$ achieving supremum in \eqref{eq:succinct_value_equality}. Further, the infimum is over strategies that do not depend on the moves of the player.
	
	We conclude that there is an oblivious minimax optimal strategy of the adversary, and there is a corresponding minimax optimal strategy for the player that does not depend on its own moves.
	
\end{proof}

\begin{proof}[\textbf{Proof of Theorem~\ref{thm:valrad}}]

From Eq.~\eqref{eq:succinct_value_equality},
\begin{align}
\notag
\Val_T & = \sup_{\jp\in\PDA} \E{\sum_{t=1}^T  \inf_{f_t \in \F} \Es{t-1}{f_t(x_t)} - \inf_{f \in \F} \sum_{t=1}^T f(x_t)} \\
\notag
& = \sup_{\jp\in\PDA}  \E{\sup_{f \in \F}\left\{\sum_{t=1}^T  \inf_{f_t \in \F} \Es{t-1}{f_t(x_t)} -  f(x_t) \right\}}\\
\label{eq:beforeexpequal}
& \le \sup_{\jp\in\PDA}  \E{\sup_{f \in \F} \left\{\sum_{t=1}^T  \Es{t-1}{f(x_t)} -  f(x_t) \right\}}
\end{align}
The upper bound is obtained by replacing each infimum by a particular choice $f$.
Note that $\Es{t-1}{f(x_t)} - f(x_t)$ is a martingale difference sequence. We now employ a symmetrization technique. For this purpose, we introduce a {\em tangent sequence} $\{ x'_t \}_{t=1}^T$ that is constructed as follows.
Let $x'_1$ be an independent copy of $x_1$. For $t\ge 2$, let $x'_t$ be both identically distributed as $x_t$ as well as independent of it conditioned on $x_{1:t-1}$.
Then, we have, for any $t\in[T]$ and $f\in \F$,
\begin{equation}
\label{eq:expequal1}
\Es{t-1}{f(x_t)} = \Es{t-1}{f(x'_t)} = \Es{T}{f(x'_t)}\ .
\end{equation}
The first equality is true by construction. The second holds because $x'_t$ is independent of $x_{t:T}$ conditioned on $x_{1:t-1}$. We also have, for any $t\in[T]$ and $f\in\F$,
\begin{equation}
\label{eq:expequal2}
f(x_t) = \Es{T}{f(x_t)} \ .
\end{equation}
Plugging in~\eqref{eq:expequal1} and~\eqref{eq:expequal2} into~\eqref{eq:beforeexpequal}, we get,
\begin{align*}
\Val_T &\leq \sup_{\jp\in\PDA}  \E{\sup_{f \in \F}\left\{ \sum_{t=1}^T \Es{T}{ f(x_t')} - \Es{T}{f(x_t)} \right\}}\\
& = \sup_{\jp\in\PDA}  \E{\sup_{f \in \F}\left\{ \Es{T}{ \sum_{t=1}^T  f(x_t') - f(x_t) } \right\}} \\
& \leq 
\sup_{\jp\in\PDA}  \E{ \sup_{f \in \F}\left\{ \sum_{t=1}^T  f(x_t') - f(x_t) \right\} } \ .
\end{align*}

For any $\jp$, the expectation in the above supremum can be written as 
\begin{align*}
	\E{ \sup_{f \in \F}\left\{ \sum_{t=1}^T  f(x_t') - f(x_t) \right\} } 
	&= 
	\En_{x_1,x'_1\sim p_1} \En_{x_2,x'_2\sim p_2(\cdot|x_1)} \ldots  \En_{x_T,x'_T\sim p_T(\cdot|x_1,\ldots, x_{T-1})} \left[ \sup_{f\in\F} \left\{ \sum_{t=1}^T f(x'_t) -f(x_t) \right\} \right] .		
\end{align*}
Now, let's see what happens when we rename $x_1$ and $x'_1$ in the right-hand side of  the above inequality. The equivalent expression we then obtain is
\begin{align*}
	\En_{x'_1,x_1\sim p_1} \En_{x_2,x'_2\sim p_2(\cdot|x'_1)}\En_{x_3,x'_3\sim p_3(\cdot|x'_1,x_2)} \ldots  \En_{x_T,x'_T\sim p_T(\cdot|x'_1,x_{2:T-1})} \left[ \sup_{f\in\F} \left\{ -(f(x'_1)-f(x_1)) + \sum_{t=2}^T f(x'_t) -f(x_t) \right\} \right] .		
\end{align*}

Now fix any $\epsilon\in\{\pm 1\}^T$. Informally, $\epsilon_t=1$ indicates whether we rename $x_t$ and $x'_t$. It is not hard to verify that 
\begin{align}
	\label{eq:renaming}
	&\En_{x_1,x'_1\sim p_1} \En_{x_2,x'_2\sim p_2(\cdot|x_1)} \ldots  \En_{x_T,x'_T\sim p_T(\cdot|x_1,\ldots, x_{T-1})} \left[ \sup_{f\in\F} \left\{ \sum_{t=1}^T f(x'_t) -f(x_t) \right\} \right] \notag\\
	&=\En_{x_1,x'_1\sim p_1} \En_{x_2,x'_2\sim p_2(\cdot|\chi_1(-1))} \ldots  \En_{x_T,x'_T\sim p_T(\cdot|\chi_1(-1),\ldots, \chi_{T-1}(-1)) } \left[ \sup_{f\in\F} \left\{ \sum_{t=1}^T f(x'_t) -f(x_t) \right\} \right] \\
	&=\En_{x_1,x'_1\sim p_1} \En_{x_2,x'_2\sim p_2(\cdot|\chi_1(\epsilon_1))} \ldots  \En_{x_T,x'_T\sim p_T(\cdot|\chi_1(\epsilon_1),\ldots, \chi_{T-1}(\epsilon_{T-1})) } \left[ \sup_{f\in\F} \left\{ \sum_{t=1}^T -\epsilon_t (f(x'_t) -f(x_t)) \right\} \right] 
\end{align}

Since Eq.~\eqref{eq:renaming} holds for any $\epsilon\in\{\pm1\}^T$, we conclude that 
\begin{align}
	\label{eq:renaming2}
	&\E{ \sup_{f \in \F}\left\{ \sum_{t=1}^T  f(x_t') - f(x_t) \right\} } \\
	&=\En_{\epsilon} \En_{x_1,x'_1\sim p_1} \En_{x_2,x'_2\sim p_2(\cdot|\chi_1(\epsilon_1))} \ldots  \En_{x_T,x'_T\sim p_T(\cdot|\chi_1(\epsilon_1),\ldots, \chi_{T-1}(\epsilon_{T-1})) } \left[ \sup_{f\in\F} \left\{ \sum_{t=1}^T -\epsilon_t (f(x'_t) -f(x_t)) \right\} \right] \notag\\
	&= \En_{x_1,x'_1\sim p_1} \En_{\epsilon_1} \En_{x_2,x'_2\sim p_2(\cdot|\chi_1(\epsilon_1))} \En_{\epsilon_2} \ldots  \En_{x_T,x'_T\sim p_T(\cdot|\chi_1(\epsilon_1),\ldots, \chi_{T-1}(\epsilon_{T-1})) } \En_{\epsilon_{T}} \left[ \sup_{f\in\F} \left\{ \sum_{t=1}^T -\epsilon_t (f(x'_t) -f(x_t)) \right\}  \right].\notag
\end{align}
The process above can be thought of as taking a path in a binary tree. At each step $t$, a coin is flipped and this determines whether $x_t$ or $x'_t$ is to be used in conditional distributions in the following steps. This is precisely the process outlined in \eqref{eq:sampling_procedure}. Using the definition of $\rh$, we can rewrite the last expression in Eq.~\eqref{eq:renaming2} as
{\small $$\En_{(x_1,x'_1)\sim \rh_1(\epsilon)} \En_{\epsilon_1} \En_{(x_2,x'_2)\sim \rh_2(\epsilon)(x_1,x'_1)} \ldots  
\En_{\epsilon_{T-1}} \En_{(x_T,x'_T)\sim \rh_T(\epsilon)\left((x_1,x'_1),\ldots,(x_{T-1},x'_{T-1})\right)} \En_{\epsilon_T} \\ \left[ \sup_{f\in\F} \left\{ \sum_{t=1}^T \epsilon_t (f(x_t) -f(x'_t)) \right\}  \right].$$
}

More succinctly, Eq.~\eqref{eq:renaming2} can be written as
\begin{align} 
	\label{eq:symmetrized_version_not_broken_up}
	\En_{(\x,\x')\sim \rh} \left[ \sup_{f\in\F} \left\{ \sum_{t=1}^T f(\x'_t(-{\boldsymbol 1})) -f(\x_t(-{\boldsymbol 1})) \right\} \right]  
	&=\En_{(\x,\x') \sim \rh} \En_{\epsilon} \left[ \sup_{f\in\F} \left\{ \sum_{t=1}^T \epsilon_t(f(\x_t(\epsilon)) -f(\x'_t(\epsilon))) \right\} \right] .
\end{align}

It is worth emphasizing that the values of the mappings $\x,\x'$ are drawn conditionally-independently, however the distribution depends on the ancestors in \emph{both} trees. In some sense, the path $\epsilon$ defines ``who is tangent to whom''. 

We now split the supremum into two:
\begin{align}
	\label{eq:split_rademacher}
 &\En_{(\x,\x') \sim \rh} \En_{\epsilon} \left[ \sup_{f\in\F} \left\{ \sum_{t=1}^T \epsilon_t(f(\x_t(\epsilon)) -f(\x'_t(\epsilon))) \right\} \right] \notag\\
 & \le \En_{(\x,\x') \sim \rh} \En_{\epsilon} \left[ \sup_{f \in \F} \sum_{t=1}^T \epsilon_t f (\x_t(\epsilon)) \right] + \En_{(\x,\x') \sim \rh} \En_{\epsilon} \left[ \sup_{f \in \F} \sum_{t=1}^T - \epsilon_t f (\x'_t(\epsilon)) \right] \\
 &= 2\En_{(\x,\x') \sim \rh} \En_{\epsilon} \left[ \sup_{f \in \F} \sum_{t=1}^T \epsilon_t f (\x_t(\epsilon)) \right] \notag
\end{align}

The last equality is not difficult to verify but requires understanding the symmetry between the paths in the $\x$ and $\x'$ trees. This symmetry implies that the two terms in Eq.~\eqref{eq:split_rademacher} are equal. Each $\epsilon\in\{\pm1\}^T$ in the first term defines time steps $t$ when values in $\x$ are used in conditional distributions. To any such $\epsilon$, there corresponds a $-\epsilon$ in the second term which defines times when values in $\x'$ are used in conditional distributions. This implies the required result. As a more concrete example, consider the path $\epsilon=-{\boldsymbol 1}$ in the first term. The contribution to the overall expectation is the supremum over $f\in\F$ of evaluation of $-f$ on the left-most path of the $\x$ tree which is defined as successive draws from distributions $p_t$ conditioned on the values on the left-most path, irrespective of the $\x'$ tree. Now consider the corresponding path $\epsilon={\boldsymbol 1}$ in the second term. Its contribution to the overall expectation is a supremum over $f\in\F$ of evaluation of $-f$ on the right-most path of the $\x'$ tree, defined as successive draws from distributions $p_t$ conditioned on the values on the right-most path, irrespective of the $\x$ tree. Clearly, the contributions are the same, and the same argument can be done for any path $\epsilon$. 

Alternatively, we can see that the two terms in Eq.~\eqref{eq:split_rademacher} are equal by expanding the notation. We thus claim that
\begin{align*}
	&\En_{x_1,x'_1\sim p_1} \En_{\epsilon_1} \En_{x_2,x'_2\sim p_2(\cdot|\chi_1(\epsilon_1))} \En_{\epsilon_2} \ldots  \En_{x_T,x'_T\sim p_T(\cdot|\chi_1(\epsilon_1),\ldots, \chi_{T-1}(\epsilon_{T-1})) } \En_{\epsilon_{T}} \left[ \sup_{f\in\F} \left\{ \sum_{t=1}^T -\epsilon_t f(x'_t) \right\}  \right] \\
	&=\En_{x_1,x'_1\sim p_1} \En_{\epsilon_1} \En_{x_2,x'_2\sim p_2(\cdot|\chi_1(\epsilon_1))} \En_{\epsilon_2} \ldots  \En_{x_T,x'_T\sim p_T(\cdot|\chi_1(\epsilon_1),\ldots, \chi_{T-1}(\epsilon_{T-1})) } \En_{\epsilon_{T}} \left[ \sup_{f\in\F} \left\{ \sum_{t=1}^T \epsilon_t f(x_t) \right\}  \right]
\end{align*}
The identity can be verified by simultaneously renaming $\x$ with $\x'$ and $\epsilon$ with $-\epsilon$. Since $\chi(x,x',\epsilon)=\chi(x',x,-\epsilon)$, the distributions in the two expressions are the same while the sum of the first term becomes the sum of the second term.

More generally, the split of Eq.~\eqref{eq:split_rademacher} can be performed via an additional ``centering'' term. For any $t$, let $M_t$ be a function with the property $M_t(\jp,f,\x,\x',\epsilon)=M_t(\jp,f,\x',\x,-\epsilon)$

We then have
\begin{align}
	\label{eq:split_rademacher_with_center}
 &\En_{(\x,\x') \sim \rh} \En_{\epsilon} \left[ \sup_{f\in\F} \left\{ \sum_{t=1}^T \epsilon_t(f(\x_t(\epsilon)) -f(\x'_t(\epsilon))) \right\} \right] \notag\\
 & \le \En_{(\x,\x') \sim \rh} \En_{\epsilon} \left[ \sup_{f \in \F} \sum_{t=1}^T \epsilon_t (f (\x_t(\epsilon))-M_t(\jp,f,\x,\x',\epsilon)) \right] \\
 &+ \En_{(\x,\x') \sim \rh} \En_{\epsilon} \left[ \sup_{f \in \F} \sum_{t=1}^T - \epsilon_t (f (\x'_t(\epsilon))-M_t(\jp,f,\x,\x',\epsilon)) \right] \notag\\
 &= 2\En_{(\x,\x') \sim \rh} \En_{\epsilon} \left[ \sup_{f \in \F} \sum_{t=1}^T \epsilon_t (f (\x_t(\epsilon))-M_t(\jp,f,\x,\x',\epsilon)) \right] \notag
\end{align}
To verify equality of the two terms in \eqref{eq:split_rademacher_with_center} we can expand the notation. 
{\small
\begin{align*}
	&\En_{x_1,x'_1\sim p_1} \En_{\epsilon_1} \En_{x_2,x'_2\sim p_2(\cdot|\chi_1(\epsilon_1))} \En_{\epsilon_2} \ldots  \En_{x_T,x'_T\sim p_T(\cdot|\chi_1(\epsilon_1),\ldots, \chi_{T-1}(\epsilon_{T-1})) } \En_{\epsilon_{T}} \left[ \sup_{f\in\F} \left\{ \sum_{t=1}^T -\epsilon_t (f(x'_t)-M_t(\jp,f,\x,\x',\epsilon)) \right\}  \right] \\
	&=\En_{x_1,x'_1\sim p_1} \En_{\epsilon_1} \En_{x_2,x'_2\sim p_2(\cdot|\chi_1(\epsilon_1))} \En_{\epsilon_2} \ldots  \En_{x_T,x'_T\sim p_T(\cdot|\chi_1(\epsilon_1),\ldots, \chi_{T-1}(\epsilon_{T-1})) } \En_{\epsilon_{T}} \left[ \sup_{f\in\F} \left\{ \sum_{t=1}^T \epsilon_t (f(x_t)-M_t(\jp,f,\x,\x',\epsilon)) \right\}  \right]
\end{align*}
}

\end{proof}

\begin{proof}[\textbf{Proof of Corollary~\ref{cor:centered_at_conditional}}]
	Define a function $M_t$ as the conditional expectation $$M_t(\jp,f,\x,\x',\epsilon)=\En_{x\sim p_t(\cdot|\chi_1(\epsilon_1),\ldots,\chi_{t-1}(\epsilon_{t-1}))} f(x).$$
	The property $M_t(\jp,f,\x,\x',\epsilon)=M_t(\jp,f,\x',\x,-\epsilon)$ holds because $\chi(x,x',\epsilon)=\chi(x',x,-\epsilon)$.
\end{proof}

\begin{proof}[\textbf{Proof of Corollary~\ref{cor:valrad_constrained}}]
	
	The first steps follow the proof of Theorem~\ref{thm:valrad}:
	\begin{align*}
		\Val_T &\leq \sup_{\jp\in\PDA} \E{ \sup_{f \in \F}\left\{ \sum_{t=1}^T  f(x_t') - f(x_t) \right\} } 
	\end{align*}
	and for a fixed $\jp\in\PDA$,
	\begin{align}
		&\E{ \sup_{f \in \F}\left\{ \sum_{t=1}^T  f(x_t') - f(x_t) \right\} } \\
		&= \En_{x_1,x'_1\sim p_1} \En_{\epsilon_1} \En_{x_2,x'_2\sim p_2(\cdot|\chi_1(\epsilon_1))} \En_{\epsilon_2} \ldots  \En_{x_T,x'_T\sim p_T(\cdot|\chi_1(\epsilon_1),\ldots, \chi_{T-1}(\epsilon_{T-1})) } \En_{\epsilon_{T}} \left[ \sup_{f\in\F} \left\{ \sum_{t=1}^T -\epsilon_t (f(x'_t) -f(x_t)) \right\}  \right].\notag
	\end{align}
	At this point we pass to an upper bound, unlike the proof of Theorem~\ref{thm:valrad}. Notice that $p_t(\cdot|\chi_1(\epsilon_1),\ldots, \chi_{t-1}(\epsilon_{t-1}))$ is a distribution with support in $\X_t(\chi_1(\epsilon_1),\ldots, \chi_{t-1}(\epsilon_{t-1}))$. That is, the sequence $\chi_1(\epsilon_1),\ldots, \chi_{t-1}(\epsilon_{t-1})$ defines the constraint at time $t$. Passing from $t=T$ down to $t=1$, we can replace all the expectations over $p_t$ by the suprema over the set $\X_t$, only increasing the value:
	\begin{align*}
		&\En_{x_1,x'_1\sim p_1} \En_{\epsilon_1} \En_{x_2,x'_2\sim p_2(\cdot|\chi_1(\epsilon_1))} \En_{\epsilon_2} \ldots  \En_{x_T,x'_T\sim p_T(\cdot|\chi_1(\epsilon_1),\ldots, \chi_{T-1}(\epsilon_{T-1})) } \En_{\epsilon_{T}} \left[ \sup_{f\in\F} \left\{ \sum_{t=1}^T -\epsilon_t (f(x'_t) -f(x_t)) \right\}  \right]\\
		&\leq \sup_{x_1,x'_1\in \X_1} \En_{\epsilon_1} \sup_{x_2,x'_2\in \X_2(\cdot|\chi_1(\epsilon_1))} \En_{\epsilon_2} \ldots  \sup_{x_T,x'_T\in \X_T(\chi_1(\epsilon_1),\ldots, \chi_{T-1}(\epsilon_{T-1})) } \En_{\epsilon_{T}} \left[ \sup_{f\in\F} \left\{ \sum_{t=1}^T -\epsilon_t (f(x'_t) -f(x_t)) \right\}  \right] \\
		&= \sup_{(\x,\x')\in {\mathcal T}} \En_{\epsilon}  \left[ \sup_{f\in\F} \left\{ \sum_{t=1}^T -\epsilon_t (f(\x'_t(\epsilon)) -f(\x_t(\epsilon))) \right\}  \right]
	\end{align*}
	In the last equality, we passed to the tree representation. Indeed, at each step, we are choosing $x_t,x'_t$ from the appropriate set and then flipping a coin $\epsilon_t$ which decides which of $x_t,x'_t$ will be used to define the constraint set through $\chi_t(\epsilon_t)$. This once again defines a tree structure and we may pass to the supremum over trees $(\x,\x')\in{\mathcal T}$. However, ${\mathcal T}$ is not a set of all possible $\X$-valued trees: for each $t$, $\x_t(\epsilon),\x'_t(\epsilon) \in \X_t(\chi_1(\x_1,\x'_1,\epsilon_1),\ldots,\chi_{t-1}(\x_{t-1}(\epsilon_{t-1}),\x'_{t-1}(\epsilon_{t-1}),\epsilon_{t-1}))$. That is, the choice at each node of the tree is constrained by the values of both trees according to the path. As before, the left-most path of the $\x$ tree (as well as the right-most path of the $\x'$ tree) is defined by constraints applied to the values on the path only disregarding the other tree.
	
	The rest of the proof exactly follows the proof of Theorem~\ref{thm:valrad}.
\end{proof}

\begin{proof}[\textbf{Proof of Proposition~\ref{prop:maxvar}}]
Let $M_t(f,\x,\x',\epsilon) = \frac{1}{t-1} \sum_{\tau=1}^{t-1} f(\chi_\tau(\epsilon_\tau))$. Note that since $\chi(x, x', \epsilon) = \chi(x', x, -\epsilon)$, we have that $M_t(f,\x,\x',\epsilon) = M_t(f,\x',\x,-\epsilon)$.
Using \ref{cor:valrad_constrained} we conclude that
\begin{align}
\Val_T & \le 2 \sup_{(\x,\x') \in \mc{T}} \Es{\epsilon}{\sup_{f \in \F} \sum_{t=1}^T \epsilon_t\left(\inner{f,\x_t(\epsilon)} - \frac{1}{t-1} \sum_{\tau=1}^{t-1} \inner{f,\chi_\tau(\epsilon_\tau)} \right)}\notag \\
& = 2 \sup_{(\x,\x') \in \mc{T}} \Es{\epsilon}{\sup_{f \in \F}\inner{f, \sum_{t=1}^T \epsilon_t\left(\x_t(\epsilon) - \frac{1}{t-1} \sum_{\tau=1}^{t-1} \chi_\tau(\epsilon_\tau) \right)}} \notag 
\end{align}
By linearity and Fenchel's inequality, the last expression is upper bounded by
\begin{align}
&\frac{2}{\alpha} \sup_{(\x,\x') \in \mc{T}} \Es{\epsilon}{\sup_{f \in \F}\inner{  f, \alpha \sum_{t=1}^T \epsilon_t\left(\x_t(\epsilon) - \frac{1}{t-1} \sum_{\tau=1}^{t-1} \chi_\tau(\epsilon_\tau) \right)}} \notag \\
& \le \frac{2}{\alpha} \sup_{(\x,\x') \in \mc{T}} \Es{\epsilon}{\sup_{f \in \F}
\Psi(f) + \Psi^*\left(\alpha \sum_{t=1}^T \epsilon_t\left(\x_t(\epsilon) - \frac{1}{t-1} \sum_{\tau=1}^{t-1} \chi_\tau(\epsilon_\tau) \right)\right)}  \notag \\
& \le \frac{2}{\alpha} \left( \sup_{f \in \F} \Psi(f) + \sup_{(\x,\x') \in \mc{T}} \Es{\epsilon}{\Psi^*\left(\alpha \sum_{t=1}^T \epsilon_t\left(\x_t(\epsilon) - \frac{1}{t-1} \sum_{\tau=1}^{t-1} \chi_\tau(\epsilon_\tau) \right)\right)} \right)\notag \\
& \le \frac{2 R^2}{\alpha} + \frac{2}{\alpha} \sup_{(\x,\x') \in \mc{T}} \Es{\epsilon}{\Psi^*\left(\alpha \sum_{t=1}^T \epsilon_t\left(\x_t(\epsilon) - \frac{1}{t-1} \sum_{\tau=1}^{t-1} \chi_\tau(\epsilon_\tau) \right)\right)}\notag \\
& \le \frac{2 R^2}{\alpha} + \frac{\alpha}{\lambda} \sum_{t=1}^T \Es{\epsilon}{\left\|\x_t(\epsilon) - \frac{1}{t-1} \sum_{\tau=1}^{t-1} \chi_\tau(\epsilon_\tau) \right\|_*^2}   \label{eq:MDS}
\end{align}
Where the last step follows from Lemma 2 of \cite{KakSriTew08}  (with a slight modification). However since $(\x,\x') \in \mathcal{T}$ are pairs of tree such that for any $\epsilon \in \{\pm 1\}^T$ and any $t \in [T]$.
$$
C(\chi_1(\epsilon_1), \ldots,\chi_{t-1}(\epsilon_{t-1}), \x_{t}(\epsilon)) = 1
$$
we can conclude that for any $\epsilon \in \{\pm 1\}^T$ and any $t \in [T]$,
$$
\left\|\x_t(\epsilon) - \frac{1}{t-1} \sum_{\tau=1}^{t-1} \chi_{\tau}(\epsilon_\tau) \right\|_* \le \sigma_t 
$$
Using this with Equation \ref{eq:MDS} and the fact that $\alpha$ is arbitrary, we can conclude that 
\begin{align*}
\Val_T & \le \inf_{\alpha > 0}\left\{\frac{2 R^2}{\alpha} + \frac{\alpha}{\lambda} \sum_{t=1}^T \sigma_t^2\right\} \le 2 \sqrt{2} R \sqrt{\sum_{t=1}^T \sigma_t^2}
\end{align*}
\end{proof}

\begin{proof}[\textbf{Proof of Proposition~\ref{prop:smalljumps}}]
Let $M_t(f,\x,\x',\epsilon) = f(\chi_{t-1}(\epsilon_{t-1}))$. Note that since $\chi(x, x', \epsilon) = \chi(x', x, -\epsilon)$ we have that $M_t(f,\x,\x',\epsilon) = M_t(f,\x',\x,-\epsilon)$. 
Using \ref{cor:valrad_constrained} we conclude that
\begin{align}
\Val_T & \le 2 \sup_{(\x,\x') \in \mc{T}} \Es{\epsilon}{\sup_{f \in \F} \sum_{t=1}^T \epsilon_t\left(\inner{f,\x_t(\epsilon)} - \inner{f,\chi_{t-1}(\epsilon_{t-1})} \right)}\notag \\
& = 2 \sup_{(\x,\x') \in \mc{T}} \Es{\epsilon}{\sup_{f \in \F}\inner{f, \sum_{t=1}^T \epsilon_t\left(\x_t(\epsilon) - \chi_{t-1}(\epsilon_{t-1}) \right)}} \notag 
\end{align}
As before, using linearity and Fenchel's inequality we pass to the upper bound
\begin{align}
&\frac{2}{\alpha} \sup_{(\x,\x') \in \mc{T}} \Es{\epsilon}{\sup_{f \in \F}\inner{  f, \alpha \sum_{t=1}^T \epsilon_t\left(\x_t(\epsilon) - \chi_{t-1}(\epsilon_{t-1}) \right)}} \notag \\
& \le \frac{2}{\alpha} \sup_{(\x,\x') \in \mc{T}} \Es{\epsilon}{\sup_{f \in \F}
\Psi(f) + \Psi^*\left(\alpha \sum_{t=1}^T \epsilon_t\left(\x_t(\epsilon) - \chi_{t-1}(\epsilon_{t-1}) \right)\right)}  \notag \\
& \le \frac{2}{\alpha} \left( \sup_{f \in \F} \Psi(f) + \sup_{(\x,\x') \in \mc{T}} \Es{\epsilon}{\Psi^*\left(\alpha \sum_{t=1}^T \epsilon_t\left(\x_t(\epsilon) - \chi_{t-1}(\epsilon_{t-1}) \right)\right)} \right)\notag \\
& \le \frac{2 R^2}{\alpha} + \frac{2}{\alpha} \sup_{(\x,\x') \in \mc{T}} \Es{\epsilon}{\Psi^*\left(\alpha \sum_{t=1}^T \epsilon_t\left(\x_t(\epsilon) - \chi_{t-1}(\epsilon_{t-1}) \right)\right)}\notag \\
& \le \frac{2 R^2}{\alpha} + \frac{\alpha}{\lambda} \sum_{t=1}^T \Es{\epsilon}{\left\|\x_t(\epsilon) - \chi_{t-1}(\epsilon_{t-1}) \right\|_*^2}   \label{eq:MDS2}
\end{align}
Where the last step follows from Lemma 2 of \cite{KakSriTew08} (with slight modification). However since $(\x,\x') \in \mathcal{T}$ are pairs of tree such that for any $\epsilon \in \{\pm 1\}^T$ and any $t \in [T]$.
$$
C(\chi_1(\epsilon_1), \ldots,\chi_{t-1}(\epsilon_{t-1}), \x_{t}(\epsilon)) = 1
$$
we can conclude that for any $\epsilon \in \{\pm 1\}^T$ and any $t \in [T]$,
$$
\left\|\x_t(\epsilon) - \chi_{t-1}(\epsilon_{t-1}) \right\|_* \le \delta 
$$
Using this with Equation \ref{eq:MDS2} and the fact that $\alpha$ is arbitrary, we can conclude that 
\begin{align*}
\Val_T & \le \inf_{\alpha > 0}\left\{\frac{2 R^2}{\alpha} + \frac{\alpha \delta^2 T}{\lambda} \right\} \le 2 R \delta  \sqrt{2 T}
\end{align*}
\end{proof}

\begin{proof}[\textbf{Proof of Lemma~\ref{lem:iid_wc_rademacher}}]
	We want to bound the supremum (as $\jp$ ranges over $\PDA$) of the distribution-dependent Rademacher complexity:
\begin{align*}
	 \sup_{\jp\in\PDA} \Rad_T(\phi(\F),\jp) &= \sup_{\jp\in\PDA} \Eunderone{((\x,\y),(\x',\y')))\sim \rh} \Es{\epsilon}{\sup_{f \in \F} \sum_{t=1}^T \epsilon_t \phi(f(\x_t(\epsilon)),\y_t(\epsilon))}
\end{align*}
for an associated process $\rh$ defined in Section~\ref{sec:rademacher}. To elucidate the random process $\rh$, we expand the succinct tree notation and write the above quantity as
\begin{align*}
	&\sup_{\jp} \En_{x_1,x'_1\sim p}\En_{\substack{y_1\sim p_1(\cdot|x_1) \\ y'_1\sim p_1(\cdot|x'_1)}} \En_{\epsilon_1} \En_{x_2,x'_2\sim p}\En_{\substack{y_2\sim p_2(\cdot|\chi_1(\epsilon_1),x_2) \\ y'_2\sim p_2(\cdot|\chi_1(\epsilon_1),x'_2)}} \En_{\epsilon_2} ~~\ldots \\  
	&~~~~~~~~~~~~~~~~~~~~~~~~\ldots~~ \En_{x_T,x'_T\sim p}\En_{\substack{y_T\sim p_T(\cdot|\chi_1(\epsilon_1),\ldots, \chi_{T-1}(\epsilon_{T-1}),x_T) \\ y'_T\sim p_T(\cdot|\chi_1(\epsilon_1),\ldots, \chi_{T-1}(\epsilon_{T-1}),x'_T) }} \En_{\epsilon_{T}} \left[ \sup_{f \in \F} \sum_{t=1}^T \epsilon_t \phi(f(x_t),y_t) \right]
\end{align*}
where $\chi_t(\epsilon_t)$ now selects the pair $(x_t,y_t)$ or $(x'_t,y'_t)$. By passing to the supremum over $y_t,y'_t$ for all $t$, we arrive at 
\begin{align*}
	 \sup_{\jp\in\PDA} \Rad_T(\phi(\F),\jp) &\leq \sup_{\jp} \En_{x_1,x'_1\sim p} \sup_{y_1,y'_1} \En_{\epsilon_1} \En_{x_2,x'_2\sim p} \sup_{y_2,y'_2} \En_{\epsilon_2} \ldots \En_{x_T,x'_T\sim p}\sup_{y_T,y'_T} \En_{\epsilon_{T}} \left[ \sup_{f \in \F} \sum_{t=1}^T \epsilon_t \phi(f(x_t),y_t) \right] \\
	&= \En_{x_1\sim p} \sup_{y_1} \En_{\epsilon_1} \En_{x_2\sim p} \sup_{y_2} \En_{\epsilon_2} \ldots \En_{x_T\sim p}\sup_{y_T} \En_{\epsilon_{T}} \left[ \sup_{f \in \F} \sum_{t=1}^T \epsilon_t \phi(f(x_t),y_t) \right]
\end{align*}
where the sequence of $x'_t$'s and $y'_t$'s has been eliminated. By moving the expectations over $x_t$'s outside the suprema (and thus increasing the value), we upper bound the above by:
\begin{align*}
	& \le \En_{x_1,\ldots,x_T \sim p} \sup_{y_1} \En_{\epsilon_1} \sup_{y_2} \En_{\epsilon_2} \ldots \sup_{y_T} \En_{\epsilon_{T}} \left[ \sup_{f \in \F} \sum_{t=1}^T \epsilon_t \phi(f(x_t),y_t) \right]  \\
	& = \Eunderone{x_1, \ldots, x_T \sim p} \sup_{\y} \Es{\epsilon}{\sup_{f \in \F} \sum_{t=1}^T \epsilon_t \phi(f(x_t),\y_t(\epsilon))}
\end{align*}
\end{proof}

\begin{proof}[\textbf{Proof of Lemma~\ref{lem:comparison_lemma_iid_wc}}]
First without loss of generality assume $L = 1$. The general case follow from this by simply scaling $\phi$ appropriately. 
By Lemma~\ref{lem:iid_wc_rademacher}, 
\begin{align}
	\label{eq:iid_wc_bd}
\Rad_T(\phi(\F),\jp) & \le \Eunderone{x_1,\ldots,x_T \sim p} \sup_{\y} \Es{\epsilon}{\sup_{f \in \F} \sum_{t=1}^T \epsilon_t \phi(f(x_t),\y_t(\epsilon))}
\end{align}

The proof proceeds by sequentially using the Lipschitz property of $\phi(f(x_t), \y_t(\epsilon))$ for decreasing $t$, starting from $t=T$. Towards this end, define
\begin{align*}
	R_t = \Eunderone{x_1,\ldots,x_T \sim p} \sup_{\y} \Es{\epsilon}{\sup_{f\in\F} \sum_{s=1}^t \epsilon_s \phi(f(x_s), \y_s(\epsilon)) + \sum_{s=t+1}^T \epsilon_s f(x_s) } \ .
\end{align*}
Since the mappings $\y_{t+1},\ldots,\y_T$ do not enter the expression, the supremum is in fact taken over the trees $\y$ of depth $t$. Note that $R_0 = \Rad(\F,p)$ is precisely the classical Rademacher complexity (without the dependence on $\y$), while $R_T$ is the upper bound on $\Rad_T(\phi(\F),\jp)$ in Eq.~\eqref{eq:iid_wc_bd}. We need to show $R_T\leq R_0$ and we will show this by proving $R_{t} \le R_{t-1}$ for all $t \in [T]$.
So, let us fix $t \in [T]$ and start with $R_t$:
\begin{align*}
	R_t &= \Eunderone{x_1,\ldots,x_T \sim p} \sup_{\y} \Es{\epsilon}{\sup_{f\in\F} \sum_{s=1}^t \epsilon_s \phi(f(x_s), \y_s(\epsilon)) + \sum_{s=t+1}^T \epsilon_s f(x_s) } \\
 	&= \Eunderone{x_1,\ldots,x_T \sim p} \sup_{y_1} \En_{\epsilon_1}\ldots \sup_{y_t} \En_{\epsilon_t} \En_{\epsilon_{t+1:T}} \left[ \sup_{f\in\F} \sum_{s=1}^t \epsilon_s \phi(f(x_s), y_s) + \sum_{s=t+1}^T \epsilon_s f(x_s) \right] \\
	&= \Eunderone{x_1,\ldots,x_T \sim p} \sup_{y_1} \En_{\epsilon_1}\ldots \sup_{y_t} \En_{\epsilon_{t+1:T}} ~~S(x_{1:T}, y_{1:t}, \epsilon_{1:t-1},\epsilon_{t+1:T})
\end{align*}
	with
\begin{align*}
	S(x_{1:T}, y_{1:t}, \epsilon_{1:t-1},\epsilon_{t+1:T}) &= \En_{\epsilon_t} \left[ \sup_{f\in\F} \sum_{s=1}^t \epsilon_s \phi(f(x_s), y_s) + \sum_{s=t+1}^T \epsilon_s f(x_s) \right] \\
	&= \frac{1}{2}\left\{\sup_{f\in\F} \sum_{s=1}^{t-1} \epsilon_s \phi(f(x_s), y_s) + \phi(f(x_t), y_t) + \sum_{s=t+1}^T \epsilon_s f(x_s) \right\} \\
	&+ \frac{1}{2}\left\{\sup_{f\in\F} \sum_{s=1}^{t-1} \epsilon_s \phi(f(x_s), y_s) - \phi(f(x_t), y_t) + \sum_{s=t+1}^T \epsilon_s f(x_s) \right\} 
\end{align*}
The two suprema can be combined to yield
\begin{align*}
&2S(x_{1:T}, y_{1:t}, \epsilon_{1:t-1},\epsilon_{t+1:T}) \\
&= \sup_{f,g\in\F} \left\{ \sum_{s=1}^{t-1} \epsilon_s (\phi(f(x_s), y_s)+\phi(g(x_s), y_s)) + \phi(f(x_t), y_t) - \phi(g(x_t), y_t)  + \sum_{s=t+1}^T \epsilon_s (f(x_s) +g(x_s)) \right\} \\
&\leq \sup_{f,g\in\F} \left\{ \sum_{s=1}^{t-1} \epsilon_s (\phi(f(x_s), y_s)+\phi(g(x_s), y_s)) + |f(x_t) - g(x_t)|  + \sum_{s=t+1}^T \epsilon_s (f(x_s) +g(x_s)) \right\} ~~~~(*) \\
&= \sup_{f,g\in\F} \left\{ \sum_{s=1}^{t-1} \epsilon_s (\phi(f(x_s), y_s)+\phi(g(x_s), y_s)) + f(x_t) - g(x_t) + \sum_{s=t+1}^T \epsilon_s (f(x_s) +g(x_s)) \right\} ~~~~(**) 
\end{align*}
The first inequality is due to the Lipschitz property, while the last equality needs a justification. First, it is clear that the term $(**)$ is upper bounded by $(*)$. The reverse direction can be argued as follows. Let a pair $(f^*,g^*)$ achieve the supremum in $(*)$. Suppose first that $f^*(x_t)\geq g^*(x_t)$. Then  $(f^*,g^*)$ provides the same value in $(**)$ and, hence, the supremum is no less than the supremum in $(*)$. If, on the other hand, $f^*(x_t) < g^*(x_t)$, then the pair $(g^*,f^*)$ provides the same value in $(**)$.

We conclude that
\begin{align*}
&S(x_{1:T}, y_{1:t}, \epsilon_{1:t-1},\epsilon_{t+1:T}) \\
&\leq \frac{1}{2}\sup_{f,g\in\F} \left\{ \sum_{s=1}^{t-1} \epsilon_s (\phi(f(x_s), y_s)+\phi(g(x_s), y_s)) + f(x_t) - g(x_t) + \sum_{s=t+1}^T \epsilon_s (f(x_s) +g(x_s)) \right\} \\
&= \frac{1}{2}\left\{ \sup_{f\in\F} \sum_{s=1}^{t-1} \epsilon_s \phi(f(x_s), y_s) + f(x_t)  + \sum_{s=t+1}^T \epsilon_s f(x_s) \right\} + \frac{1}{2}\left\{ \sup_{f\in\F} \sum_{s=1}^{t-1} \epsilon_s \phi(f(x_s), y_s) - f(x_t)  + \sum_{s=t+1}^T \epsilon_s f(x_s) \right\}\\
&=  \En_{\epsilon_t} \sup_{f\in\F} \left\{ \sum_{s=1}^{t-1} \epsilon_s \phi(f(x_s), y_s) + \epsilon_t f(x_t)  + \sum_{s=t+1}^T \epsilon_s f(x_s)\right\}
\end{align*}

Thus,
\begin{align*}
	R_t &= \Eunderone{x_1,\ldots,x_T \sim p} \sup_{y_1} \En_{\epsilon_1}\ldots \sup_{y_t} \En_{\epsilon_{t+1:T}} ~~S(x_{1:T}, y_{1:t}, \epsilon_{1:t-1},\epsilon_{t+1:T}) \\
	&\leq \Eunderone{x_1,\ldots,x_T \sim p} \sup_{y_1} \En_{\epsilon_1}\ldots \sup_{y_t} \En_{\epsilon_{t:T}}  \sup_{f\in\F} \left\{ \sum_{s=1}^{t-1} \epsilon_s \phi(f(x_s), y_s)  + \sum_{s=t}^T \epsilon_s f(x_s)\right\} \\
	&= \Eunderone{x_1,\ldots,x_T \sim p} \sup_{y_1} \En_{\epsilon_1}\ldots \sup_{y_{t-1}}\En_{\epsilon_{t-1}} \En_{\epsilon_{t:T}}  \sup_{f\in\F} \left\{ \sum_{s=1}^{t-1} \epsilon_s \phi(f(x_s), y_s)  + \sum_{s=t}^T \epsilon_s f(x_s)\right\} \\
	&= R_{t-1}
\end{align*}
where we have removed the supremum over $y_t$ as it no longer appears in the objective. This concludes the proof.

\end{proof}

\begin{proof}[\textbf{Proof of Lemma~\ref{lem:first_lower}}]
Notice that $\jp$ defines the stochastic process $\rh$ as in \eqref{eq:sampling_procedure} where the i.i.d. $y_t$'s now play the role of the $\epsilon_t$'s. More precisely, at each time $t$, two copies $x_t$ and $x'_t$ are drawn from the marginal distribution $p_t(\cdot|\chi_1(y_1),\ldots,\chi_{t-1}(y_{t-1}))$, then a Rademacher random variable $y_t$ is drawn i.i.d. and it indicates whether $x_t$ or $x'_t$ is to be used in the subsequent conditional distributions via the selector $\chi_t(y_t)$. This is a well-defined process obtained from $\jp$ that produces a sequence of $(x_1,x'_1,y_1),\ldots,(x_T,x'_T,y_T)$. The $x'$ sequence is only used to define conditional distributions below, while the sequence $(x_1,y_1),\ldots,(x_T,y_T)$ is presented to the player. Since restrictions are history-independent, the stochastic process is following the protocol which defines $\rh$.

For any $\jp$ of the form described above, the value of the game in \eqref{eq:value_equality} can be lower-bounded via Proposition~\ref{prop:lower_bound_oblivious}. 

\begin{align*}
	\Val^{\text{sup}}_T &\geq  \En \left[
	  \sum_{t=1}^T \inf_{f_t \in \F}
	  \Es{(x_t,y_t)}{ |y_t-f_t (x_t)| \ \Big|\  (x,y)_{1:t-1}} - \inf_{f\in \F} \sum_{t=1}^T |y_t-f(x_t)|
	\right] \\
	&=  \En \left[
	  \sum_{t=1}^T 1 - \inf_{f\in \F} \sum_{t=1}^T |y_t-f(x_t)|
	\right] 
\end{align*}
A short calculation shows that the last quantity is equal to
\begin{align*} 
	\En 
	   \sup_{f\in \F} \sum_{t=1}^T \left(1 - |y_t-f(x_t)|\right) 
	= \En 
	   \sup_{f\in \F} \sum_{t=1}^T y_t f(x_t)  .
\end{align*}
The last expectation can be expanded to show the stochastic process:
\begin{align*}
	&\En_{x_1,x'_1\sim p_1}\En_{y_1}\En_{x_2,x'_2\sim p_2(\cdot|\chi_1(y_1))}\En_{y_2} \ldots \En_{x_T,x'_T\sim p_T(\cdot|\chi_1(y_1),\ldots,\chi_{T-1}(y_{T-1}))}\En_{y_T} \sup_{f\in \F} \sum_{t=1}^T y_t f(x_t) \\
	&= \En_{(\x,\x')\sim \rh}\Es{\epsilon}{ \sup_{f \in \F} \sum_{t=1}^{T} \epsilon_t f(\x_t(\epsilon))} \\
	&= \Rad_T(\F, \jp)
\end{align*}	
Since this lower bound holds for any $\jp$ which allows the labels to be independent $\pm1$ with probability $1/2$, we conclude the proof.
\end{proof}

\begin{proof}[\textbf{Proof of Lemma~\ref{lem:second_lower}}]
For the purposes of this proof, the adversary presents $y_t$ an i.i.d. Rademacher random variable on each round. Unlike the previous lemma, only the $\{x_t\}$ sequence is used for defining conditional distributions. Hence, the $\x'$ tree is immaterial and the lower bound is only concerned with the left-most path. The rest of the proof is similar to that of Lemma~\ref{lem:first_lower}:
\begin{align*}
	\Val^{\text{sup}}_T &\geq  \En \left[
	  \sum_{t=1}^T \inf_{f_t \in \F}
	  \Es{(x_t,y_t)}{ |y_t-f_t (x_t)| \ \Big|\  (x,y)_{1:t-1}} - \inf_{f\in \F} \sum_{t=1}^T |y_t-f(x_t)|
	\right] \\
	&=  \En \left[
	  \sum_{t=1}^T 1 - \inf_{f\in \F} \sum_{t=1}^T |y_t-f(x_t)|
	\right] 
\end{align*}
As before, this expression is equal to
\begin{align*}
	&\En 
	   \sup_{f\in \F} \sum_{t=1}^T y_t f(x_t) = \En_{x_1\sim p_1}\En_{y_1}\En_{x_2\sim p_2(\cdot|x_1)}\En_{y_2} \ldots \En_{x_T\sim p_T(\cdot|x_1,\ldots,x_{T-1})}\En_{y_T} \sup_{f\in \F} \sum_{t=1}^T y_t f(x_t) \\
	&= \En_{(\x,\x')\sim \rh}\Es{\epsilon}{ \sup_{f \in \F} \sum_{t=1}^{T} \epsilon_t f(\x_t(-{\boldsymbol 1}))} 
\end{align*}	
\end{proof}

\bibliographystyle{plain}
\bibliography{paper}

\end{document}